\newcommand{\eg}{\emph{e.g.}}
\newcommand{\tildenabla}{\tilde{\nabla}}
 \newcommand{\Xcal}{\mathcal{X}}
\newcommand{\Ccal}{\mathcal{C}}
\newcommand{\vs}{\vspace*{-0.15cm}}
\newcommand{\vsmall}{\vspace*{-0.1cm}}
\renewcommand\epsilon\varepsilon
\renewcommand\ln\log
\renewcommand\star{*}
\def\defin{\triangleq}
\def\Real{{\mathbb{R}}}
\theoremstyle{plain}
\newtheorem{theorem}{Theorem}
\newtheorem{lemma}{Lemma}
\newtheorem{appendixlemma}{Lemma}[section]
\newtheorem{corollary}{Corollary}
\newtheorem{proposition}{Proposition}
\newcommand\custombox[1]{%
   \vspace*{0cm}
   \begin{mdframed}[leftmargin=0pt,innerleftmargin=6pt,innerrightmargin=6pt]
      \parindent=15pt#1
   \end{mdframed}
   \vspace*{0cm}
}
\def\Real{\mathbb R}
\def\E{\mathbb E}
\def\defin{:=}
\newcommand\kmone{{k\text{--}1}}
\newcommand\Fcal{{\mathcal F}}
\DeclareMathOperator*{\argmin}{argmin}
\DeclareMathOperator*{\Prox}{Prox}
\renewcommand\cite\citep
\icmltitlerunning{Estimate Sequences for Variance-Reduced Stochastic Composite Optimization}
\begin{document}

\twocolumn[
\icmltitle{Estimate Sequences for Variance-Reduced Stochastic Composite Optimization}



\icmlsetsymbol{equal}{*}

\begin{icmlauthorlist}
\icmlauthor{Andrei Kulunchakov}{inria}
\icmlauthor{Julien Mairal}{inria}
\end{icmlauthorlist}

\icmlaffiliation{inria}{Univ. Grenoble Alpes, Inria, CNRS, Grenoble INP, LJK, 38000 Grenoble, France}

\icmlcorrespondingauthor{Julien Mairal}{julien.mairal@inria.fr}

\icmlkeywords{stochastic optimization, variance reduction, acceleration}

\vskip 0.3in
]
\newcommand\red[1]{\textcolor{red}{#1}}



\printAffiliationsAndNotice{}  

\begin{abstract}
In this paper, we propose a unified view of gradient-based algorithms for
stochastic convex composite optimization by extending the concept of estimate
sequence introduced by Nesterov. 
This point of view covers the stochastic gradient descent method,
variants of the approaches SAGA, SVRG,
and has several advantages: (i) we provide a generic
proof of convergence for the aforementioned methods; (ii) we show that
this SVRG variant is adaptive to strong convexity;
(iii) we naturally
obtain new algorithms with the same guarantees; (iv) we derive generic
strategies to make these algorithms robust to stochastic noise, which is useful
when data is corrupted by small random perturbations. Finally, we show that
this viewpoint is useful to obtain new accelerated algorithms in the sense of Nesterov.

\end{abstract}

\section{Introduction}
We consider convex optimization problems of the form
\begin{equation}
\min_{x \in \Real^p} \left\{ F(x) \defin f(x)  + \psi(x) \right\}, \label{eq:prob}
\end{equation}
where $f$ is $\mu$-strongly convex and $L$-smooth (differentiable with $L$-Lipschitz continuous gradient),  
and~$\psi$ is convex lower-semicontinuous.
For instance, $\psi$ may be the $\ell_1$-norm but may also be the
indicator function of a convex set~$\Ccal$ for constrained problems~\citep[][]{hiriart_urruty_lemarechal_1993ii}.

More specifically, we focus on stochastic objectives, where~$f$ is an expectation or a finite
sum of convex functions
\begin{equation}
    f(x) = \E_\xi\left[\tilde{f}(x,\xi)\right] ~~~ \text{or} ~~~ f(x) = \frac{1}{n} \sum_{i=1}^n f_i(x). \label{eq:f}
\end{equation}
On the left, $\xi$ is a random variable representing a data point drawn according
to some distribution and $\tilde{f}(x,\xi)$ measures the fit of some model
parameter~$x$ to the data point~$\xi$. 

While the finite-sum setting is a particular case of expectation, the deterministic nature of the
resulting cost function drastically changes the performance guarantees an
optimization method may achieve to solve~(\ref{eq:prob}). In particular, when
an algorithm is only allowed to access unbiased measurements of the objective
and gradient, it may
be shown that the worst-case convergence rate in expected function value cannot
be better than $O(1/k)$ in general, where $k$ is the number of
iterations~\cite{nemirovski,agarwal2009information}.  

Even though this pessimistic result applies to the general stochastic case, linear
convergence rates can be obtained for deterministic finite sums.
For instance, linear rates are achieved by SAG \cite{schmidt2017minimizing}, SAGA \cite{saga},
SVRG \cite{johnson2013accelerating,proxsvrg}, SDCA \cite{accsdca}, MISO \cite{miso}, Katyusha \cite{accsvrg}, MiG \cite{MiG2018}, SARAH \cite{sarah}, accelerated SAGA \cite{zhou2018direct} or~\citet{conjugategradient}.
In the non-convex case, a recent focus has been on improving convergence rates for finding first-order stationary points~\cite{scsg2017,paquette2017catalyst,spider2018}, which is however beyond the scope of our paper.
A common interpretation is to see these algorithms as performing SGD steps with an estimate of the gradient that has lower variance.

In this paper, we are interested in providing a unified view of such stochastic optimization
algorithms, but we also want to investigate their \emph{robustness} to random
pertubations. Specifically, we consider objective functions with an explicit
finite-sum structure such as~(\ref{eq:f}) when only noisy
estimates of the gradients~$\nabla f_i(x)$ are available. 
Such a setting may occur for various reasons. Perturbations may be injected during training to achieve better
generalization~\cite{srivastava_dropout:_2014}, perform stable feature selection~\citep{meinshausen2010stability},
for privacy-aware learning~\citep{wainwright2012privacy} or to
improve model robustness~\cite{zheng2016improving}.

Each point indexed by~$i$ is then
corrupted by a random perturbation $\rho_i$ and the function~$f$ may
be written as
\begin{equation}
    f(x) =  \frac{1}{n} \sum_{i=1}^n f_i(x) ~~~\text{with}~~~ f_i(x) = \E_{\rho_i} \left[ \tilde{f}_i(x, \rho_i)\right]. \label{eq:f2}
\end{equation}
Since the exact gradients $\nabla f_i(x)$ cannot be computed, all the aforementioned variance-reduction
methods do not apply and the standard approach 
is to use SGD. Typically, the variance of the gradient estimate then
decomposes into two parts $\sigma^2=\sigma_s^2 + \tilde{\sigma}^2$, where
$\sigma_s^2$ is due to data sampling and
$\tilde{\sigma}^2$ to the random perturbation. In such a context,
variance reduction consists of designing algorithms whose convergence
rate depends on $\tilde{\sigma}^2$, which is potentially much smaller than $\sigma^2$.
The SAGA and SVRG methods have been adapted for such a purpose
by~\citet{hofmann_variance_2015}, though the resulting algorithms have non-zero
asymptotic error; the MISO method was adapted by~\citet{smiso} at the cost of a
memory overhead of $O(np)$, whereas other variants of SAGA and SVRG have been proposed
by~\citet{zheng2018lightweight} for linear models in machine learning.

In this paper, we extend estimate sequences introduced by \citet{nesterov},
which are typically used to design accelerated algorithms.
Estimate sequences have been used before for stochastic optimization
\citep{devolder_2011,lin_pena2014,Lu2015}, but not for the following generic purpose:

First, we make a large class of variance-reduction methods 
robust to stochastic perturbations. 
More precisely, by using a sampling strategy~$Q$ to select indices of the sum~(\ref{eq:f2}) at each iteration, 
when each function $f_i$ is convex and $L_i$-smooth,
the worst-case iteration complexity of our approaches in function values---that is, the number of iterations to guarantee $\E[F(x_k)-F^\star] \leq \varepsilon$---is upper bounded by
   \begin{equation*}
      O\left( \left(n + \frac{L_Q}{\mu}\right)\log\left(\frac{F(x_0)-F^\star}{\varepsilon} \right)  \right) + O\left( \frac{ \rho_Q\tilde{\sigma}^2}{\mu\varepsilon} \right),
   \end{equation*}
   where $\rho_Q \geq 1$ and $L_Q$ depends on $Q$. For the uniform distribution, we have $\rho_Q=1$ and
   $L_Q=\max_i L_i$, whereas a non-uniform $Q$ may yield $L_Q=\frac{1}{n}\sum_i L_i$.
   The term on the left corresponds to the
   complexity of variance-reduction methods
   without perturbation, and
   $O(\tilde{\sigma}^2/\mu \varepsilon)$ is the optimal sublinear rate of convergence
   for a stochastic optimization problem when the gradient estimates have
   variance~$\tilde{\sigma}^2$. In contrast, a variant of SGD
   applied to~(\ref{eq:f2}) has worst-case
   complexity $O(\sigma^2/\mu \varepsilon)$, with potentially
   $\sigma^2 \gg \tilde{\sigma}^2$. 

Second, we design a new accelerated algorithm which, to our knowledge, is the first one to achieve the
   complexity
\begin{equation*}
   O\left( \left(n + \sqrt{n\frac{L_Q}{\mu}}\right)\log\left(\frac{F(x_0)-F^\star}{\varepsilon} \right)  \right) + O\left( \frac{\rho_Q \tilde{\sigma}^2}{\mu\varepsilon} \right), 
\end{equation*}
where the term on the left 
matches the optimal complexity for finite
sums when $\tilde{\sigma}^2=0$~\cite{arjevani2016dimension}, which has been
achieved by~\citet{accsvrg,MiG2018,zhou2018direct,kovalev2019don}. The general
stochastic finite-sum problem with $\tilde{\sigma}^2>0$ was also considered with 
an acceleration mechanism by~\citet{lan_zhou_distributed2018} for 
distributed optimization being optimal in terms of communication rounds, but not in the global complexity. 

Note that we treat here only the strongly convex case, but similar
results can be obtained when $\mu=0$, as shown in a long version of this paper~\citep{kulunchakov2019estimate}.

\section{A Generic Framework}\label{sec:lower}
In this section, we introduce stochastic estimate sequences and show how they can handle variance reduction.

\subsection{A Classical Iteration Revisited}\label{subsec:dk}
Consider an algorithm that performs the following updates:

\custombox{
\begin{equation}
x_{k} \leftarrow \text{Prox}_{\eta_k\psi}\left[ x_\kmone - \eta_k g_k\right], \tag{\textsf{A}} \label{eq:opt1}
\vspace*{0.1cm}
\end{equation}
}%
where $\E[g_k | {\mathcal F}_\kmone] = \nabla f(x_\kmone)$ and
$\Fcal_{\kmone}$ is the filtration representing all information up to iteration $\kmone$, $\eta_k > 0$ is a step size, and $\text{Prox}_{\eta \psi}[.]$ is the proximal operator~\citep{moreau1962fonctions} defined for any scalar $\eta > 0$ as the unique solution of
\begin{equation}
\text{Prox}_{\eta \psi}[u] \defin \argmin_{x \in \Real^p} \left\{ \eta \psi (x) + \frac{1}{2}\|x-u\|^2 \right\}. \label{eq:prox}
\end{equation}
Key to our analysis, we interpret~(\ref{eq:opt1}) as the iterative minimization of quadratic surrogate functions.

\vs
\paragraph{Interpretation with stochastic estimate sequence.}~\\
Consider
\begin{equation}
 d_0(x) =  d_0^\star + \frac{\gamma_0}{2}\|x-x_0\|^2, \label{eq:d0}
\end{equation}
with $\gamma_0 \geq \mu$ and $d_0^\star$ is a scalar value that is left unspecified at the moment. Then, it is easy to show that $x_k$ in~(\ref{eq:opt1}) minimizes  $d_k$ defined recursively for $k \geq 1$  as
\begin{equation}
   d_k(x) = (1-\delta_k)d_\kmone(x) + \delta_k l_k(x),\label{eq:surrogate1}
\end{equation}
where 
\begin{multline*}
   l_k(x) =  f(x_\kmone) + g_k^\top( x-x_\kmone)  \\ + \frac{\mu}{2}\|x-x_\kmone\|^2 + \psi(x_k) + \psi'(x_k)^\top(x-x_{k}),
\end{multline*}
$\delta_k,\gamma_k$ satisfy the system of equations
\begin{equation}
    \delta_k = \eta_k\gamma_k ~~~\text{and}~~~ 
    \gamma_k = (1-\delta_k)\gamma_\kmone +\mu \delta_k,
\label{eq:gamma}
\end{equation}
(note that $\gamma_0=\mu$ yields $\gamma_k=\mu$ for all $k$), and
\begin{displaymath}
   \psi'(x_k)  = \frac{1}{\eta_k} (x_{\kmone} - x_k) - g_k.
\end{displaymath}
Here, 
$\psi'(x_k)$ is a subgradient in $\partial \psi(x_k)$. By simply using the definition of the
proximal operator~(\ref{eq:prox}) and considering first-order optimality conditions, we indeed have that $0
\in x_k - x_{\kmone} + \eta_k g_k + \eta_k \partial
\psi(x_k)$ and~$x_k$ coincides with the minimizer of~$d_k$. This allows us to write $d_k$ in the form
\begin{displaymath}
    d_k(x) = d_k^\star + \frac{\gamma_k}{2}\|x-x_k\|^2~~~\text{for all}~k \geq 0.  
\end{displaymath}
The construction~(\ref{eq:surrogate1}) is akin to that of estimate sequences
introduced by~\citet{nesterov}, which are typically used for
designing accelerated gradient-based optimization algorithms. In this section,
we are however not interested in acceleration, but instead in stochastic optimization
and variance reduction. One of the main properties of estimate sequences that we will
nevertheless use is their ability to behave asymptotically as a lower bound. Indeed, we have
\begin{equation}
    \E[d_k(x^\star)]  
      \leq \Gamma_k d_0(x^\star) + (1-\Gamma_k)F^\star,
\label{eq:est}
\end{equation}
where $x^\star$ is a minimizer of $F$ and $\Gamma_k = \prod_{t=1}^k (1-\delta_t)$.
The inequality comes from strong convexity since $\E[g_k^\top (x^\star-x_\kmone)|\Fcal_\kmone] = \nabla f(x_\kmone)^\top (x^\star-x_\kmone)$, leading to the relation 
$\E[d_k(x^\star)]  \leq (1-\delta_k)\E[d_\kmone(x^\star)] + \delta_k F^\star$.
Then, by unrolling the recursion, we obtain~(\ref{eq:est}).
When~$\Gamma_k$ converges to zero, the contribution of the initial surrogate $d_0$ disappears and $\E[d_k(x^\star)]$ behaves as a lower bound of $F^\star$.

\paragraph{Relation with existing algorithms.} The iteration~(\ref{eq:opt1})
encompasses many approaches such as ISTA (proximal gradient
uses the exact gradient $g_k = \nabla f(x_{\kmone})$ \citep{fista,nesterov2013gradient} or
proximal variants of the stochastic gradient descent method to deal with a
composite objective~\citep[][]{Lan2012}. 
Of interest to us, the variance-reduced stochastic optimization approaches
SVRG~\citep{proxsvrg} and SAGA~\cite{saga} also follow~(\ref{eq:opt1}) but with an unbiased gradient estimator~$g_k$ whose
variance reduces over time.  Specifically, they use
 \begin{equation}
    g_k = \nabla f_{i_k}(x_{\kmone}) - z^{i_k}_{\kmone} + \bar{z}_{\kmone} ~~\text{with}~~ \bar{z}_\kmone = \frac{1}{n} \sum_{i=1}^n z^{i}_\kmone, \label{eq:svrg}
 \end{equation}
where $i_k$ is an index chosen uniformly in $\{1,\ldots,n\}$ at random, and each
 $z^{i}_k$ is equal to a gradient $\nabla
f_i(\tilde{x}_k^i)$, where~$\tilde{x}_k^i$ is one of the previous iterates. 
The motivation is that given two random variables $X$ and $Y$,
it is possible to define a new variable $Z=X-Y + \E[Y]$ which has the same
expectation as~$X$ but potentially a lower variance if $Y$ is positively
correlated with $X$.  
SVRG uses the same anchor point $\tilde{x}_k^i = \tilde{x}_k$ for all $i$, where $\tilde{x}_k$ is
updated every $m$ iterations. Typically, the memory cost of SVRG is that of
storing the variable $\tilde{x}_k$ and the gradient $\bar{z}_k = \nabla f(\tilde{x}_k)$, which is thus $O(p)$.
On the other hand, SAGA updates only $z^{i_k}_k = \nabla f_{i_k}({x}_{\kmone})$ at iteration~$k$,
such that $z^{i}_{k}  =  z^{i}_{\kmone}$ if $i \neq i_k$.
Thus, SAGA requires storing $n$ gradients. While in general the
overhead cost in memory is of order $O(np)$, it may be reduced to $O(n)$ when
dealing with linear models in machine learning~\citep[see][]{saga}.
Note that variants with non-uniform sampling of the indices $i_k$ have been proposed by~\citet{proxsvrg,saganonu},
which we discuss later.

In order to make our proofs consistent for SAGA and SVRG (and MISO in~\citealp{kulunchakov2019estimate}), we consider a variant of SVRG with a randomized gradient updating schedule~\cite{hofmann_variance_2015}. Remarkably, this variant was shown to provide benefits over the fixed schedule in a concurrent work~\cite{kovalev2019don} when $\tilde{\sigma}^2=0$. 

\subsection{Gradient Estimators and New Algorithms} \label{subsec:gradients}
In this paper, we consider the gradient estimators below. For all of them, we define the variance $\sigma_k$ to be
\begin{displaymath}
    \sigma_k^2 = \E\left[ \|g_k - \nabla f(x_\kmone)\|^2\right].
\end{displaymath}

\vs
\paragraph{ISTA.} Simply consider $g_k=\nabla f(x_\kmone)$ and $\sigma_k=0$.

\vs
\paragraph{SGD.} We assume that $g_k$ has variance bounded by $\sigma^2$. Typically, when $f(x) = \E_\xi[\tilde{f}(x,\xi)]$, a data point $\xi_k$ is drawn at iteration $k$ and $g_k = \nabla \tilde{f}(x,\xi_k)$. Even though the bounded variance assumption has limitations,
it remains the most standard  one for stochastic optimization and more realistic settings (such as 
\cite{bottou2018optimization,nguyen2018sgd} for the smooth case) are left for future work.

\vs
\paragraph{\bfseries random-SVRG.} For finite sums, we consider a variant of SVRG with random update of the anchor point~$\tilde{x}_\kmone$, proposed originally in~\cite{HofmannLM15}, combined with non-uniform sampling. Specifically, $g_k$ is defined as  
        \begin{equation}
            g_k = \frac{1}{q_{i_k} n}\left(\tildenabla f_{i_k} (x_\kmone) - z_{\kmone}^{i_k} \right) + \bar{z}_\kmone, \label{eq:gk}
        \end{equation}
        where $i_k \!\sim\! Q=\{q_1,\ldots,q_n\}$ and $\tildenabla$ denotes a perturbed gradient operator.
        For instance, if $f_i(x)=\E_\rho[\tilde{f}_i(x,\rho)]$ for all~$i$, where~$\rho$ is a stochastic perturbation,
        instead of accessing $\nabla f_{i_k}(x_\kmone)$, we draw a perturbation $\rho_k$ and observe
        \begin{displaymath}
             \tildenabla f_{i_k}(x_\kmone) = \nabla \tilde{f}_{i_k}(x_\kmone,\rho_k)  = \nabla f_{i_k} (x_\kmone) + {\zeta_k},
        \end{displaymath}
        where the perturbation $\zeta_k$ has zero mean given $\Fcal_\kmone$ and its variance is bounded by~$\tilde{\sigma}^2$. When considering the setting without perturbation, we simply have $\tildenabla = \nabla$.

        Similar to the previous case, the variables $z_k^{i}$ and $\bar{z}_k$ also correspond to noisy estimates of the gradients. Specifically,
        \begin{displaymath}
            z_{k}^{i}  = \tildenabla f_{i}(\tilde{x}_k)  ~~~~\text{and}~~~ \bar{z}_k = \frac{1}{n}\sum_{i=1}^n z_k^i,
        \end{displaymath}
        where $\tilde{x}_k$ is an anchor point that is updated on average every~$n$ iterations. 
        Whereas the classical SVRG approach
        updates~$\tilde{x}_k$ on a fixed schedule, we perform random
        updates: with probability $1/n$, we choose $\tilde{x}_k = x_k$ and
        recompute~$\bar{z}_k=\tildenabla f(\tilde{x}_k)$; otherwise $\tilde{x}_k$ is
        kept unchanged. In comparison with the fixed schedule, the analysis with the random one is
        simplified and can be unified with that of SAGA. This approach is described in Algorithm~\ref{alg:svrg}.

        In terms of memory, the random-SVRG gradient estimator requires to
        store an anchor point~$\tilde{x}_\kmone$ and the average gradients
        $\bar{z}_\kmone$. The $z_k^i$'s do not need to be stored; only the~$n$
        random seeds to produce the perturbations are kept into memory,
        which allows us to compute
        $z^{i_k}_\kmone = \tildenabla f_{i_k}(\tilde{x}_\kmone)$ at
        iteration~$k$, with the same perturbation for index~$i_k$ that was used
        to compute $\bar{z}_\kmone = \frac{1}{n}\sum_{i=1}^n z_\kmone^i$ when the anchor point was last updated.
        The overall cost is thus $O(n+p)$.

\begin{algorithm}[ht!]
\caption{Iteration~(\ref{eq:opt1}) with random-SVRG estimator}\label{alg:svrg}
\begin{algorithmic}[1]
 \STATE {\bfseries Input:} $x_0$ in $\Real^p$ (initial point); $K$ (number of iterations); $(\eta_k)_{k \geq 0}$ (step sizes); $\gamma_0 \geq \mu$ (if averaging);
 \STATE {\bfseries Initialization:} $\tilde{x}_0 = \hat{x}_0 = x_0$; $\bar{z}_0=\frac{1}{n}\sum_{i=1}^n \tildenabla f_i(\tilde{x}_0)$;
\FOR{$k=1,\ldots,K$}
    \STATE Sample $i_k$ according to $Q=\{q_1,\ldots,q_n\}$;
     \STATE Compute the gradient estimator with perturbations:
     \vs
        \begin{equation*}
           g_k = \frac{1}{q_{i_k} n}\left(\tildenabla f_{i_k} (x_\kmone) - \tildenabla f_{i_k}(\tilde{x}_\kmone) \right) + \bar{z}_\kmone; 
        \end{equation*}
     \STATE Compute the next iterate
     \vs
     $$x_{k} \leftarrow \text{Prox}_{\eta_k\psi}\left[ x_\kmone - \eta_k g_k\right];$$
     \STATE With probability $1/n$, 
     \vs
     \begin{displaymath}
         \tilde{x}_k = x_k~~~~\text{and}~~~~ \bar{z}_k =\frac{1}{n}\sum_{i=1}^n\tildenabla f_i(\tilde{x}_k);
     \end{displaymath}
     \vs
     \STATE Otherwise, with probability $1-1/n$, keep the anchor point unchanged $\tilde{x}_k = \tilde{x}_\kmone$ and $\bar{z}_k = \bar{z}_\kmone$;
     \STATE {\bfseries Optional}: Use online averaging using $\delta_k$ from~(\ref{eq:gamma}):
     \vs
     $$\hat{x}_k =
     (1-\tau_k) \hat{x}_{\kmone} + \tau_k x_k~~\text{with}~~ \tau_k = \min\left( \delta_k, \frac{1}{5n}\right);$$
     \vs
\ENDFOR
\STATE {\bfseries Output:} $x_K$ or $\hat{x}_K$ if averaging.
\end{algorithmic}
\end{algorithm}

\vs
\paragraph{\bfseries SAGA.} The estimator has a form similar to~(\ref{eq:gk}) but with a different choice of variables
        $z_k^i$.  Unlike SVRG that stores an anchor
        point~$\tilde{x}_k$, the SAGA estimator requires storing and
        incrementally updating the $n$ auxiliary variables $z_k^i$, while maintaining the relation $\bar{z}_k =
        \frac{1}{n}\sum_{i=1}^n z_k^i$. 
        We consider variants such that each gradient~$\nabla f_i(x)$ is corrupted by a random perturbation;
        to deal with non-uniform sampling, we use a similar strategy
        as~\citet{saganonu}. 
        The corresponding algorithm is available in Appendix~\ref{appendix:saga}.

\addtolength{\textfloatsep}{0.2cm}
\section{Convergence Analysis and Robustness}\label{sec:theory}
In
Section~\ref{subsec:conv1}, we present a general convergence result and the analysis with variance-reduction is presented in
Section~\ref{subsec:conv2}. All proofs are in the appendix.

\subsection{Generic Convergence Result}\label{subsec:conv1}
The following proposition gives a key relation between $F(x_k)$, the surrogate $d_k$, $d_\kmone$ and the variance $\sigma_k$.

\begin{proposition}[\bf Key relation]\label{prop:keyprop}
   For iteration~(\ref{eq:opt1}), assuming $\eta_k \leq 1/L$, we have for all $k \geq 1$,
\begin{multline}
    \delta_k (\E[F(x_k)]-F^\star) +\E[d_k(x^\star)-d_k^\star]  \\ \leq  (1-\delta_k)\E[d_\kmone(x^\star)-d_{\kmone}^\star] + \eta_k {\delta_k \sigma_k^2}, \label{eq:relation}
\end{multline}
where $x^\star$ is a minimizer of $F$ and $F^\star=F(x^\star)$.
\end{proposition}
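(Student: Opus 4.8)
The plan is to read~(\ref{eq:relation}) as a one-step descent inequality coming from the surrogate recursion~(\ref{eq:surrogate1}), and to reduce the whole statement to the control of a single stochastic cross term. First I would exploit the closed form $d_k(x)=d_k^\star+\frac{\gamma_k}{2}\|x-x_k\|^2$, which identifies $d_k^\star$ with $d_k(x_k)$. Evaluating~(\ref{eq:surrogate1}) at $x=x^\star$ and at $x=x_k$, and using $d_\kmone(x_k)=d_\kmone^\star+\frac{\gamma_\kmone}{2}\|x_k-x_\kmone\|^2$, gives the exact identity
\begin{equation*}
 d_k(x^\star)-d_k^\star = (1-\delta_k)\big(d_\kmone(x^\star)-d_\kmone^\star\big) - (1-\delta_k)\tfrac{\gamma_\kmone}{2}\|x_k-x_\kmone\|^2 + \delta_k\big(l_k(x^\star)-l_k(x_k)\big).
\end{equation*}
Substituting this into the target inequality, the floating $\delta_k(F(x_k)-F^\star)$ combines with $\delta_k(l_k(x^\star)-l_k(x_k))$ so that $F(x_k)-F^\star$ cancels, and it remains to bound $\delta_k(l_k(x^\star)-F^\star)+\delta_k(F(x_k)-l_k(x_k)) - (1-\delta_k)\tfrac{\gamma_\kmone}{2}\|x_k-x_\kmone\|^2$ by $\eta_k\delta_k\sigma_k^2$ in expectation.

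Next I would estimate the two bracketed quantities. Writing $g_k=\nabla f(x_\kmone)+b_k$ with $b_k$ the zero-mean noise (so $\E[b_k\mid\Fcal_\kmone]=0$ and $\E[\|b_k\|^2]=\sigma_k^2$), strong convexity of $f$ together with convexity of $\psi$ (via $\psi'(x_k)\in\partial\psi(x_k)$) gives $l_k(x^\star)-F^\star \le b_k^\top(x^\star-x_\kmone)$, while $L$-smoothness gives $F(x_k)-l_k(x_k)\le \tfrac{L-\mu}{2}\|x_k-x_\kmone\|^2 - b_k^\top(x_k-x_\kmone)$. Collecting the quadratic terms and using both relations in~(\ref{eq:gamma}), the coefficient of $\|x_k-x_\kmone\|^2$ collapses to $\tfrac{\gamma_k(\eta_k L-1)}{2}$, which is nonpositive exactly because $\eta_k\le 1/L$; meanwhile the two noise contributions merge into the single cross term $\delta_k\,b_k^\top(x^\star-x_k)$.

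The expectation of this cross term is where the main obstacle lies: $x_k$ is \emph{not} $\Fcal_\kmone$-measurable, since it depends on $b_k$ through the proximal step, so one cannot directly invoke $\E[b_k\mid\Fcal_\kmone]=0$. A crude application of Young's inequality here would only close the argument under the stronger condition $\eta_k\le 1/(2L)$. To recover the sharp constant I would introduce the noiseless iterate $\bar{x}_k=\Prox_{\eta_k\psi}[x_\kmone-\eta_k\nabla f(x_\kmone)]$, which \emph{is} $\Fcal_\kmone$-measurable. Splitting $x^\star-x_k=(x^\star-\bar{x}_k)+(\bar{x}_k-x_k)$ annihilates the first part in conditional expectation, and nonexpansiveness of the proximal operator yields $\|x_k-\bar{x}_k\|\le\eta_k\|b_k\|$; Cauchy--Schwarz then gives $\E[b_k^\top(x^\star-x_k)\mid\Fcal_\kmone]\le\eta_k\,\E[\|b_k\|^2\mid\Fcal_\kmone]$, hence $\E[\delta_k b_k^\top(x^\star-x_k)]\le\eta_k\delta_k\sigma_k^2$. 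Dropping the nonpositive quadratic term and taking total expectation then delivers~(\ref{eq:relation}). I expect only the bookkeeping of the quadratic coefficient and the nonexpansiveness step to require care; the remaining manipulations are routine.
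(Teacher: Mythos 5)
Your proposal is correct and is essentially the paper's own proof, reorganized: your exact identity amounts to the paper's evaluation of the recursion~(\ref{eq:surrogate1}) at $x_k$ and at $x^\star$, your quadratic-coefficient collapse $\gamma_k(\eta_k L-1)/2\le 0$ is precisely the paper's use of $\delta_k L\le \gamma_k$ (from $\delta_k=\eta_k\gamma_k$ and $\eta_k\le 1/L$), and your strong-convexity bound on $l_k(x^\star)-F^\star$ is exactly how the paper obtains~(\ref{eq:est}). In particular, your key step---handling the non-$\Fcal_\kmone$-measurable cross term by comparing $x_k$ with the noiseless iterate $\bar{x}_k$ and invoking prox nonexpansiveness plus Cauchy--Schwarz---coincides with the paper's argument built around $w_\kmone=\Prox_{\eta_k\psi}\left[x_\kmone-\eta_k\nabla f(x_\kmone)\right]$.
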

Then, without making further assumption on~$\sigma_k$, we have the following general convergence result, which is a direct consequence of the averaging Lemma~\ref{lemma:averaging} in the appendix, inspired in part by~\citet{ghadimi2012optimal}:
\begin{theorem}[\bf General convergence result]\label{thm:conv}
Under the same assumptions as in Proposition~\ref{prop:keyprop}, 
by using the averaging strategy of Lemma~\ref{lemma:averaging}, which produces an iterate~$\hat{x}_k$, 
\begin{multline*}
   \E\left[F(\hat{x}_k)- F^\star + \frac{\gamma_k}{2}\|x_k-x^\star\|^2\right] \\ \leq  {\Gamma_k}\left(F(x_0)- F^\star + \frac{\gamma_0}{2}\|x_0-x^\star\|^2  +  \sum_{t=1}^k \frac{\delta_t\eta_t\sigma_t^2}{\Gamma_t} \right), 
\end{multline*}
where $\Gamma_k = \prod_{t=1}^k (1-\delta_t)$ and $x^\star$ is a minimizer of $F$.
\end{theorem}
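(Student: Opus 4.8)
The plan is to convert the one-step Key relation of Proposition~\ref{prop:keyprop} into a telescoping recursion, sum it, and then turn the resulting weighted average of the quantities $\E[F(x_t)]-F^\star$ into a bound on a single averaged iterate by convexity of $F$.

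First I would exploit the quadratic form $d_k(x)=d_k^\star+\frac{\gamma_k}{2}\|x-x_k\|^2$ established before the statement, which gives $\E[d_k(x^\star)-d_k^\star]=\E[\frac{\gamma_k}{2}\|x_k-x^\star\|^2]\geq 0$. Substituting this into~(\ref{eq:relation}) and dividing by $\Gamma_k=\prod_{t=1}^k(1-\delta_t)$, I would use $\Gamma_k=(1-\delta_k)\Gamma_{k-1}$, equivalently $\frac{1-\delta_k}{\Gamma_k}=\frac{1}{\Gamma_{k-1}}$, to rewrite the inequality as a telescoping recursion in the normalized distance $P_t\defin\frac{1}{\Gamma_t}\E[\frac{\gamma_t}{2}\|x_t-x^\star\|^2]$:
\begin{equation*}
\frac{\delta_k}{\Gamma_k}\big(\E[F(x_k)]-F^\star\big)+P_k\leq P_{k-1}+\frac{\eta_k\delta_k\sigma_k^2}{\Gamma_k}.
\end{equation*}
Summing from $t=1$ to $k$ telescopes the $P_t$ terms, and with $\Gamma_0=1$ and the deterministic initialization $P_0=\frac{\gamma_0}{2}\|x_0-x^\star\|^2$, multiplying through by $\Gamma_k$ yields
\begin{equation*}
\Gamma_k\sum_{t=1}^k\frac{\delta_t}{\Gamma_t}\big(\E[F(x_t)]-F^\star\big)+\E\Big[\tfrac{\gamma_k}{2}\|x_k-x^\star\|^2\Big]\leq\Gamma_k\Big(\tfrac{\gamma_0}{2}\|x_0-x^\star\|^2+\sum_{t=1}^k\tfrac{\eta_t\delta_t\sigma_t^2}{\Gamma_t}\Big).
\end{equation*}

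The second ingredient is the averaging step (Lemma~\ref{lemma:averaging}). The key bookkeeping identity is $\frac{\delta_t}{\Gamma_t}=\frac{1}{\Gamma_t}-\frac{1}{\Gamma_{t-1}}$, which telescopes to $\sum_{t=1}^k\frac{\delta_t}{\Gamma_t}=\frac{1}{\Gamma_k}-1$. Hence assigning weight $1$ to $x_0$ and weight $\frac{\delta_t}{\Gamma_t}$ to $x_t$ for $t\geq 1$ gives nonnegative weights summing to $1/\Gamma_k$; defining $\hat{x}_k$ as the convex combination $\hat{x}_k=\Gamma_k\big(x_0+\sum_{t=1}^k\frac{\delta_t}{\Gamma_t}x_t\big)$ and invoking convexity of $F$ (Jensen), then taking expectations and using that $x_0$ is deterministic, produces
\begin{equation*}
\E[F(\hat{x}_k)]-F^\star\leq\Gamma_k\big(F(x_0)-F^\star\big)+\Gamma_k\sum_{t=1}^k\frac{\delta_t}{\Gamma_t}\big(\E[F(x_t)]-F^\star\big).
\end{equation*}
This is precisely where the $\Gamma_k(F(x_0)-F^\star)$ term of the statement originates. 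Adding $\E[\frac{\gamma_k}{2}\|x_k-x^\star\|^2]$ to both sides and bounding the weighted sum by the summed inequality above gives the claim.

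The genuinely delicate part is not the algebra but the design of the averaging weights, which must simultaneously telescope the distance terms $P_t$ (forcing the factor $\delta_t/\Gamma_t$) and sum to $1/\Gamma_k$ after including the unit weight on $x_0$ (which creates the $F(x_0)-F^\star$ contribution and lets Jensen close the argument). I would also check the side conditions making the divisions legitimate, namely $\Gamma_t>0$, i.e. $\delta_t<1$, which follows from $\delta_t=\eta_t\gamma_t$ together with $\eta_t\leq 1/L$. Finally, one should note that this clean weighted average differs from the cheap recursive online average $\hat{x}_k=(1-\tau_k)\hat{x}_{k-1}+\tau_k x_k$ of Algorithm~\ref{alg:svrg}; reconciling the two is exactly the content that Lemma~\ref{lemma:averaging} is meant to package, and I would defer to it for the exact weight recursion.
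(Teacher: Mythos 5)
Your proposal is correct and takes essentially the same route as the paper: the paper proves Theorem~\ref{thm:conv} by applying Lemma~\ref{lemma:averaging} to Proposition~\ref{prop:keyprop} with $T_k = \E[d_k(x^\star)-d_k^\star] = \E\left[\frac{\gamma_k}{2}\|x_k-x^\star\|^2\right]$, $\alpha = 1$ and $\beta_k = \eta_k\delta_k\sigma_k^2$, and your argument simply inlines the proof of that lemma---dividing by $\Gamma_k$, telescoping, using $\sum_{t=1}^k \delta_t/\Gamma_t + 1 = 1/\Gamma_k$ (Lemma~\ref{lemma:simple}), and closing with Jensen's inequality on the weighted average $\hat{x}_k = \Gamma_k\left(x_0 + \sum_{t=1}^k \frac{\delta_t}{\Gamma_t}x_t\right)$, which is exactly the averaging sequence of the lemma.
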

Theorem~\ref{thm:conv} allows us to recover convergence rates for various algorithms.
In the corollary below, we consider the stochastic setting with constant step sizes; the
algorithm converges with the same rate as the deterministic problem to a
noise-dominated region of radius $\sigma^2/L$. 
The proof simply uses Lemma~\ref{eq:rate_gamma}, which provides the convergence rate of $(\Gamma_k)_{k \geq 0}$ and uses the relation $\Gamma_k \sum_{t=1}^k \frac{\delta_t}{\Gamma_t} =1 -\Gamma_k \leq 1$ from Lemma~\ref{lemma:simple} in the appendix.
\begin{corollary}[\bf Prox-SGD with constant step-size]~\label{corollary:sgd_constant}\newline
Assume in Theorem~\ref{thm:conv} that $\sigma_k\leq \sigma$, and choose $\gamma_0=\mu$ and $\eta_k = 1/L$. Then, 
\begin{multline}
   \E\left[F(\hat{x}_k)- F^\star + \frac{\mu}{2}\|x_k-x^\star\|^2\right] \\ \leq   \left(1-\frac{\mu}{L}\right)^k\left(F(x_0)- F^\star + \frac{\mu}{2} \|x_0 - x^\star\|^2\right) + \frac{\sigma^2}{L}.\label{eq:sgd}
\end{multline}
\end{corollary}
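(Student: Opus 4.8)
The plan is to specialize Theorem~\ref{thm:conv} to the stated parameter choices and then bound the accumulated noise term. First I would check that $\eta_k = 1/L$ is admissible, since Theorem~\ref{thm:conv} (through Proposition~\ref{prop:keyprop}) only requires $\eta_k \leq 1/L$, so the boundary value is allowed. Next I would observe that the choice $\gamma_0 = \mu$ propagates through the recursion~(\ref{eq:gamma}): if $\gamma_\kmone = \mu$, then $\gamma_k = (1-\delta_k)\mu + \mu\delta_k = \mu$, so $\gamma_k = \mu$ for every $k$. Combined with $\eta_k = 1/L$, the relation $\delta_k = \eta_k\gamma_k$ from~(\ref{eq:gamma}) gives the constant value $\delta_k = \mu/L$, and hence $\Gamma_k = \prod_{t=1}^k(1-\delta_t) = (1-\mu/L)^k$. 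This already identifies the geometric factor appearing in the statement.

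I would then substitute these values directly into the bound of Theorem~\ref{thm:conv}. On the left-hand side, $\gamma_k = \mu$ turns the quantity into $\E[F(\hat{x}_k) - F^\star + \frac{\mu}{2}\|x_k-x^\star\|^2]$, matching the corollary. On the right-hand side, the two deterministic terms multiplied by $\Gamma_k = (1-\mu/L)^k$ reproduce verbatim the first summand of the claimed bound. It then remains only to control the stochastic contribution $\Gamma_k \sum_{t=1}^k \delta_t\eta_t\sigma_t^2/\Gamma_t$.

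For this last term I would use the uniform bound $\sigma_t \leq \sigma$ together with $\eta_t = 1/L$ to factor out the constants, giving
\begin{equation*}
   \Gamma_k \sum_{t=1}^k \frac{\delta_t\eta_t\sigma_t^2}{\Gamma_t} \;\leq\; \frac{\sigma^2}{L}\,\Gamma_k \sum_{t=1}^k \frac{\delta_t}{\Gamma_t}.
\end{equation*}
The remaining weighted sum is handled by the elementary identity $\Gamma_k \sum_{t=1}^k \delta_t/\Gamma_t = 1 - \Gamma_k$ (Lemma~\ref{lemma:simple}), which is at most $1$ since $\Gamma_k \geq 0$. Thus the noise term is bounded by $\sigma^2/L$, and adding it to the deterministic part yields the stated inequality.

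Since every step is either a direct substitution or an invocation of a lemma already available, there is no serious obstacle here; the only point needing genuine care is the bookkeeping of the noise term. Specifically, one must recognize that the awkward-looking weighted sum $\Gamma_k\sum_t \delta_t/\Gamma_t$ does not grow with $k$ but collapses via telescoping, so that the accumulated variance saturates at the constant floor $\sigma^2/L$. I would verify the identity directly by writing $\delta_t/\Gamma_t = 1/\Gamma_t - 1/\Gamma_{t-1}$, which follows from $\Gamma_t = (1-\delta_t)\Gamma_{t-1}$ and $\Gamma_0 = 1$, so that the sum telescopes to $1/\Gamma_k - 1$ and multiplication by $\Gamma_k$ gives $1 - \Gamma_k$.
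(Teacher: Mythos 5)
Your proposal is correct and follows essentially the same route as the paper: specialize Theorem~\ref{thm:conv}, note that $\gamma_0=\mu$ forces $\gamma_k=\mu$ and hence $\delta_k=\mu/L$ so that $\Gamma_k=(1-\mu/L)^k$ (the content of Lemma~\ref{eq:rate_gamma}), and collapse the noise sum via the identity $\Gamma_k\sum_{t=1}^k \delta_t/\Gamma_t = 1-\Gamma_k \leq 1$ of Lemma~\ref{lemma:simple}, which you correctly re-derive by telescoping. No gaps; this matches the paper's argument.
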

We now show that it is also possible to obtain converging algorithms  
by using decreasing step sizes. 
\begin{corollary}[\bf Prox-SGD with decreasing step-sizes] \label{corollary:sgd}~\newline
Assume that we target an accuracy
$\varepsilon$ smaller than $2\sigma^2/L$. First, use iteration~(\ref{eq:opt1}) as in Theorem~\ref{thm:conv} with a constant step-size $\eta_k=1/L$ and $\gamma_0=\mu$, 
leading to the convergence rate~(\ref{eq:sgd}),  until $\E[F(\hat{x}_k)- F^\star] \leq  2 \sigma^2/L$.
Then, restart the optimization procedure with decreasing step-sizes $\eta_k
= \min \left(\frac{1}{L},\frac{2}{\mu (k+2)}\right)$. The resulting number of gradient evaluations to achieve $\E[F(\hat{x}_k)- F^\star] \leq  \varepsilon$ is upper bounded by
$$O\left( \frac{L}{\mu} \log\left(\frac{F(x_0)- F^\star}{\varepsilon}\right)\right) + O\left( \frac{\sigma^2}{\mu \varepsilon}\right).$$
\end{corollary}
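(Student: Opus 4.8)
The plan is to analyze the two phases separately and add their iteration counts, using throughout that the choice $\gamma_0=\mu$ together with the recursion~(\ref{eq:gamma}) forces $\gamma_k=\mu$ and hence $\delta_k=\eta_k\mu$ for every $k$; since each iteration of Prox-SGD costs a single gradient evaluation, counting iterations is the same as counting gradient evaluations. The role of the restart is to guarantee that the second phase begins from a point $\bar{x}_0$ with $\E[F(\bar{x}_0)-F^\star]\le 2\sigma^2/L$, which keeps the initial-condition term of Phase~2 at the level of the noise floor rather than at the much larger $F(x_0)-F^\star$.

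Phase~1 is immediate from Corollary~\ref{corollary:sgd_constant}. Since the noise floor in~(\ref{eq:sgd}) is $\sigma^2/L$, it suffices to drive the geometrically decaying term $(1-\mu/L)^k C_0$ below $\sigma^2/L$, where $C_0=F(x_0)-F^\star+\tfrac{\mu}{2}\|x_0-x^\star\|^2$; this happens after $k_1=O(\tfrac{L}{\mu}\log(\tfrac{C_0 L}{\sigma^2}))$ iterations. Strong convexity of $F$ gives $\tfrac{\mu}{2}\|x_0-x^\star\|^2\le F(x_0)-F^\star$, so $C_0\le 2(F(x_0)-F^\star)$; combining this with the standing assumption $\varepsilon<2\sigma^2/L$ (hence $L/\sigma^2<2/\varepsilon$) turns this count into $O(\tfrac{L}{\mu}\log(\tfrac{F(x_0)-F^\star}{\varepsilon}))$, which is exactly the first term of the claimed bound.

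For Phase~2 I apply Theorem~\ref{thm:conv} afresh from the restart point $\bar{x}_0$ with the schedule $\eta_k=\min(1/L,\,2/(\mu(k+2)))$. In the decreasing regime $\eta_k=2/(\mu(k+2))$ one has $\delta_k=2/(k+2)$ and a telescoping product gives $\Gamma_k=\prod_{t=1}^k(1-\tfrac{2}{t+2})=\tfrac{2}{(k+1)(k+2)}=O(1/k^2)$; the warm-up indices where the $\min$ is active only contribute an extra constant factor, which I absorb by invoking Lemma~\ref{eq:rate_gamma}. Using $\sigma_t\le\sigma$ and $\delta_t\eta_t=\mu\eta_t^2$, the noise term becomes
\[
\Gamma_k\sum_{t=1}^k\frac{\delta_t\eta_t\sigma_t^2}{\Gamma_t}\le \mu\sigma^2\,\Gamma_k\sum_{t=1}^k\frac{\eta_t^2}{\Gamma_t},
\]
and substituting the closed forms yields $\tfrac{\eta_t^2}{\Gamma_t}=\tfrac{2(t+1)}{\mu^2(t+2)}=O(1/\mu^2)$, so $\Gamma_k\sum_{t=1}^k\tfrac{\eta_t^2}{\Gamma_t}=O(\tfrac{k}{\mu^2 k^2})=O(\tfrac{1}{\mu^2 k})$ and the noise term is $O(\sigma^2/(\mu k))$. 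The initial-condition term $\Gamma_k(F(\bar{x}_0)-F^\star+\tfrac{\mu}{2}\|\bar{x}_0-x^\star\|^2)$ is $O(\tfrac{\sigma^2}{L k^2})$ by the restart guarantee and strong convexity, hence dominated. Therefore $\E[F(\hat{x}_k)-F^\star]=O(\sigma^2/(\mu k))$ in Phase~2, reaching accuracy $\varepsilon$ needs $k_2=O(\sigma^2/(\mu\varepsilon))$ iterations, and adding $k_1+k_2$ gives the stated complexity.

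The main obstacle will be the careful bookkeeping of the two-regime step size: verifying that the warm-up indices (where $\eta_k$ is clipped to $1/L$) do not spoil the $\Gamma_k=O(1/k^2)$ decay and only change constants, and confirming that the noise sum telescopes to exactly $O(\sigma^2/(\mu k))$ rather than $O(\sigma^2\log k/(\mu k))$, which would arise for a slightly different decay rate of $\eta_k$. Matching constants so that the crossover between the two phases is consistent, and checking that the restart expectation bound propagates correctly through a fresh application of Theorem~\ref{thm:conv}, are the remaining delicate points.
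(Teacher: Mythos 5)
Your overall route is the paper's: a first phase at constant step size justified by Corollary~\ref{corollary:sgd_constant}, a restart, and then a fresh application of Theorem~\ref{thm:conv} under the clipped schedule $\eta_k=\min\left(\frac{1}{L},\frac{2}{\mu(k+2)}\right)$, with the noise sum split between warm-up and decreasing indices. Your Phase~1 count (including the use of $\frac{\mu}{2}\|x_0-x^\star\|^2\le F(x_0)-F^\star$ and of $\varepsilon\le 2\sigma^2/L$ to convert $\log(C_0L/\sigma^2)$ into $\log((F(x_0)-F^\star)/\varepsilon)$) is correct, and so is the pure decreasing-regime arithmetic: $\delta_t=2/(t+2)$, $\Gamma_k=2/((k+1)(k+2))$, and a noise term $O(\sigma^2/(\mu k))$ with no logarithm.

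The gap is in how you dispose of the warm-up indices, which you yourself flag as the delicate point but resolve with two claims that are wrong as stated. First, Lemma~\ref{eq:rate_gamma} cannot be invoked here: it assumes a single constant $\eta$ with $\delta_k=\gamma_k\eta$, whereas your schedule is time-varying; the relevant tool is the third case of Lemma~\ref{lemma:step}, which gives, with $k_0=\left\lceil \frac{2L}{\mu}-2\right\rceil$, $\Gamma_k=\Gamma_{k_0-1}\,\frac{k_0(k_0+1)}{(k+1)(k+2)}$ for $k\ge k_0$. Second, relative to the pure value $\frac{2}{(k+1)(k+2)}$ this carries a multiplicative factor $\Gamma_{k_0-1}k_0(k_0+1)/2=\Theta\left((L/\mu)^2\right)$, which is \emph{not} an ``extra constant factor.'' Consequently your initial-condition term is really $O\left(\sigma^2L/(\mu^2k^2)\right)$, not $O\left(\sigma^2/(Lk^2)\right)$, and the warm-up portion of the noise sum, handled via Lemma~\ref{lemma:simple} as $\Gamma_k\frac{\sigma^2}{L}\sum_{t<k_0}\frac{\delta_t}{\Gamma_t}\le\frac{\sigma^2}{L}\cdot\frac{k_0(k_0+1)}{(k+1)(k+2)}$, is of the same inflated order. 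The argument still closes, but only with the additional observation the paper makes explicitly: all of these inflated terms are $O(\sigma^2/(\mu k))$ once $k\ge k_0=O(L/\mu)$, and $k_0$ is itself dominated by the target count $O(\sigma^2/(\mu\varepsilon))$ precisely because $\varepsilon\le 2\sigma^2/L$. With that final bookkeeping step supplied (as in the paper's proof), your argument goes through; taken literally as written, the ``constant factor'' absorption and the appeal to Lemma~\ref{eq:rate_gamma} do not justify the bound.
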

We note that the dependency in $\sigma^2$ with the rate $O(\sigma^2/\mu \varepsilon)$ is optimal for strongly convex functions~\cite{nemirovski}. 
Unfortunately, estimating $\sigma$ is not easy and knowing in practice when to start decreasing the step sizes in SGD algorithms is an open problem.
The corollary simply supports the heuristic consisting of adopting a constant step size long enough until the iterates oscillate without much progress, before decreasing the step sizes~\citep[see][]{bottou2018optimization}.

\subsection{Faster Convergence with Variance Reduction}\label{subsec:conv2}

Stochastic variance-reduced algorithms rely on gradient estimates whose variance
decreases as fast as the objective. Our framework provides a unified proof of
convergence for our variants of SVRG and SAGA and  makes them robust to stochastic perturbations. 
Specifically, we consider the minimization
of a finite sum of functions as in~(\ref{eq:f2}), but each observation of the
gradient $\nabla f_i(x)$ is corrupted by a random noise variable.
The next proposition extends the proof of SVRG~\cite{proxsvrg} and characterizes~$\sigma_k^2$. 
\begin{proposition}[\bf Generic Upper-Bound on Variance]\label{prop:nonu}~\newline
Consider the optimization problem~(\ref{eq:prob}) when $f$ is a finite sum of functions $f=\frac{1}{n}\sum_{i=1}^n f_i$ where each $f_i$ is convex and $L_i$-smooth with $L_i \geq \mu$. Then, the random-SVRG and SAGA gradient estimates defined in Section~\ref{subsec:gradients} satisfy
\begin{multline}
   \sigma_k^2 \leq 4 L_Q \E[F(x_{\kmone})  - F^\star]  \\ + \frac{2}{n}\E\left[ \sum_{i=1}^n \frac{1}{n q_i}\| u^i_\kmone - \nabla f_i(x^\star)\|^2\right] + {3 \rho_Q \tilde{\sigma}^2}, \label{eq:var2}
\end{multline}
where $L_Q = \max_i L_i/(q_i n)$, $\rho_Q = 1/(n \min_i q_i)$, and for all $i$ and $k$, $u_k^i$ is equal to $z_k^i$ without noise---that is 
        \begin{displaymath}
        \begin{split}
u_k^i & = \nabla f_i(\tilde{x}_k) ~~~\text{for random-SVRG}\\
            u_{k}^{j_k} & = \nabla f_{j_k}(x_k) ~~~\text{and}~~~ u_k^j  = u_\kmone^j ~~~\text{if}~~~j \neq j_k ~~~\text{for SAGA}.
            \end{split}
        \end{displaymath}
\end{proposition}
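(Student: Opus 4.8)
The plan is to bound $\sigma_k^2 = \E[\|g_k - \nabla f(x_\kmone)\|^2]$ by isolating the three sources of randomness inside $g_k$: the sampled index $i_k$, the fresh perturbation $\zeta_k$ carried by $\tildenabla f_{i_k}(x_\kmone)$, and the perturbations already frozen inside $z_\kmone^i$ and $\bar z_\kmone$. First I would verify the unbiasedness $\E[g_k \mid \Fcal_\kmone] = \nabla f(x_\kmone)$, which follows from $\sum_i q_i \tfrac{1}{q_i n}(\cdot) = \tfrac1n\sum_i(\cdot)$ together with $\E[\zeta_k \mid \Fcal_\kmone] = 0$, so that $\sigma_k^2$ is an expected conditional variance. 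Next I would peel off the fresh noise: writing $g_k = G_k + \tfrac{1}{q_{i_k}n}\zeta_k$ with $G_k \defin \tfrac{1}{q_{i_k}n}(\nabla f_{i_k}(x_\kmone) - z_\kmone^{i_k}) + \bar z_\kmone$, the term $\tfrac{1}{q_{i_k}n}\zeta_k$ has zero mean given $\Fcal_\kmone$ and $i_k$, hence is uncorrelated with $G_k - \nabla f(x_\kmone)$. This yields $\sigma_k^2 = \E\|G_k - \nabla f(x_\kmone)\|^2 + \E[\tfrac{1}{(q_{i_k}n)^2}\|\zeta_k\|^2]$, and the last term is at most $\tfrac{\tilde\sigma^2}{n^2}\sum_i q_i^{-1} \le \rho_Q\tilde\sigma^2$, using $\E_{i_k}[(q_{i_k}n)^{-2}] = n^{-2}\sum_i q_i^{-1}$ and $q_i \ge \min_j q_j$.

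For the remaining term, $G_k - \nabla f(x_\kmone)$ is the centered version over $i_k$ of $\tfrac{1}{q_{i_k}n}(\nabla f_{i_k}(x_\kmone) - z_\kmone^{i_k})$, so the inequality $\mathrm{Var}\le\E\|\cdot\|^2$ gives $\E\|G_k - \nabla f(x_\kmone)\|^2 \le \tfrac1{n^2}\E[\sum_i q_i^{-1}\|\nabla f_i(x_\kmone) - z_\kmone^i\|^2]$. I would then split each summand via $\|\cdot\|^2\le 2\|\nabla f_i(x_\kmone)-\nabla f_i(x^\star)\|^2 + 2\|z_\kmone^i - \nabla f_i(x^\star)\|^2$. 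For the first piece I invoke the co-coercivity bound for convex $L_i$-smooth functions, $\|\nabla f_i(x)-\nabla f_i(x^\star)\|^2 \le 2L_i(f_i(x)-f_i(x^\star)-\nabla f_i(x^\star)^\top(x-x^\star))$, and bound each weight $\tfrac{L_i}{q_i n}$ by $L_Q$; summing the $f_i$ reconstructs $f$, while the linear term $-\nabla f(x^\star)^\top(x_\kmone-x^\star)$ is absorbed using the optimality condition $-\nabla f(x^\star)\in\partial\psi(x^\star)$ and convexity of $\psi$. The two factors of $2$ combine to give exactly $4L_Q\,\E[F(x_\kmone)-F^\star]$ for this piece.

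The delicate piece is the second one, $\tfrac{2}{n^2}\E[\sum_i q_i^{-1}\|z_\kmone^i-\nabla f_i(x^\star)\|^2]$. Here I would crucially keep the noisy stored gradient $z_\kmone^i$ intact and only now decompose $z_\kmone^i - \nabla f_i(x^\star) = (u_\kmone^i - \nabla f_i(x^\star)) + (z_\kmone^i - u_\kmone^i)$, where $z_\kmone^i - u_\kmone^i$ is the stored perturbation. The key point is that $u_\kmone^i - \nabla f_i(x^\star)$ depends only on the anchor point $\tilde x_\kmone$ (resp., for SAGA, the point at which index $i$ was last sampled), which is measurable with respect to the information available when that perturbation was drawn; conditioning on that sigma-field makes the cross term vanish in expectation, leaving $\E\|z_\kmone^i - \nabla f_i(x^\star)\|^2 \le \E\|u_\kmone^i - \nabla f_i(x^\star)\|^2 + \tilde\sigma^2$. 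The weighted sum then contributes $\tfrac2n\E[\sum_i \tfrac{1}{nq_i}\|u_\kmone^i-\nabla f_i(x^\star)\|^2] + 2\rho_Q\tilde\sigma^2$, and adding the $\rho_Q\tilde\sigma^2$ from the fresh noise yields the claimed $3\rho_Q\tilde\sigma^2$.

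I expect this conditioning argument to be the main obstacle, for a subtle reason: the \emph{order} of the inequalities is essential. Had I separated the stored noise before applying the $2\|\cdot\|^2+2\|\cdot\|^2$ split, I would instead face the cross term $\E[(\nabla f_i(x_\kmone)-\nabla f_i(x^\star))^\top(z_\kmone^i-u_\kmone^i)]$, which does \emph{not} vanish, since $x_\kmone$ is correlated with the stored perturbation through the intervening updates that used $\bar z$. Performing the split first, and grouping the stored noise with the anchor gradient, is precisely what isolates a cross term $(u_\kmone^i-\nabla f_i(x^\star))^\top(z_\kmone^i-u_\kmone^i)$ whose first factor is frozen before the noise is sampled. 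The SAGA case is then handled identically once ``anchor update'' is replaced by ``last time index $i$ was sampled,'' since the relevant perturbation is again drawn only after the evaluation point is fixed; care is needed only in specifying the correct sigma-field for each estimator.
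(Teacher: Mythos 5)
Your proposal is correct and follows essentially the same route as the paper's proof: peel off the fresh perturbation to get $\rho_Q\tilde{\sigma}^2$, apply the variance inequality $\E\|X-\E[X]\|^2 \leq \E\|X\|^2$ over $i_k$, split each summand at $\nabla f_i(x^\star)$ via Young's inequality, bound the stored-noise term $\E\|z_\kmone^i-\nabla f_i(x^\star)\|^2 \leq \E\|u_\kmone^i-\nabla f_i(x^\star)\|^2 + \tilde{\sigma}^2$ by conditioning on the sigma-field in force when that perturbation was drawn (yielding the remaining $2\rho_Q\tilde{\sigma}^2$), and finish with co-coercivity (Nesterov's Theorem 2.1.5) and the optimality condition $-\nabla f(x^\star)\in\partial\psi(x^\star)$. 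Your remark that the Young split must precede the stored-noise decomposition (since $x_\kmone$ is correlated with the stored perturbations) is a subtlety the paper handles implicitly in exactly this order, so it is the same argument, just made explicit.
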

Next, we apply this result to Proposition~\ref{prop:keyprop}.
\begin{proposition}[\bf Lyapunov function]\label{thm:lyapunov}
Consider the same setting as Proposition~\ref{prop:nonu} and the same gradient estimators.
When using the construction of $d_k$ from Sections~\ref{subsec:dk}, and assuming $\gamma_0 \geq \mu$ and $(\eta_k)_{k \geq 0}$ is non-increasing with $\eta_k \leq \frac{1}{12 L_Q}$, we have for all $k \geq 1$, with $\tau_k = \min\left(\delta_k, \frac{1}{5n}\right)$,
\begin{multline}
    \frac{\delta_k}{6}\E[F(x_k)-F^\star] + T_k  \leq \left( 1 - \tau_k \right)T_\kmone  + {3 \rho_Q \eta_k \delta_k\tilde{\sigma}^2}, \label{eq:aux2} \\
\text{where}~~T_k = {5} L_Q\eta_k \delta_k \E[F(x_k)-F^\star] + \E[d_k(x^\star)-d_k^\star] \\ + \frac{5 \eta_k \delta_k}{2}\E\left[\frac{1}{n} \sum_{i=1}^n \frac{1}{q_i n} \| u^i_k - u^i_\star\|^2\right].
\end{multline}
\end{proposition}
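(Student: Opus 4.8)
The plan is to chain the two preceding propositions together, add one extra ingredient (a one-step contraction for the auxiliary term), and then verify a coefficient-by-coefficient comparison. Throughout, abbreviate $A_k \defin \E[F(x_k)-F^\star]$, $P_k \defin \E[d_k(x^\star)-d_k^\star]$, and $V_k \defin \E[\frac{1}{n}\sum_{i=1}^n \frac{1}{q_i n}\|u_k^i-\nabla f_i(x^\star)\|^2]$, so that $T_k = 5L_Q\eta_k\delta_k A_k + P_k + \tfrac{5\eta_k\delta_k}{2}V_k$. Since $\eta_k \le \tfrac{1}{12L_Q}\le \tfrac1L$, Proposition~\ref{prop:keyprop} applies and yields $\delta_k A_k + P_k \le (1-\delta_k)P_{k-1} + \eta_k\delta_k\sigma_k^2$. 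Feeding in the variance bound of Proposition~\ref{prop:nonu}, namely $\sigma_k^2 \le 4L_Q A_{k-1} + 2V_{k-1} + 3\rho_Q\tilde{\sigma}^2$, turns this into the one-step inequality $\delta_k A_k + P_k \le (1-\delta_k)P_{k-1} + 4L_Q\eta_k\delta_k A_{k-1} + 2\eta_k\delta_k V_{k-1} + 3\rho_Q\eta_k\delta_k\tilde{\sigma}^2$, whose right-hand side involves only index-$(k-1)$ quantities and the noise floor.

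The ingredient not contained in the excerpt is a one-step recursion for $V_k$ itself. For both estimators the variables $u_k^i$ are refreshed to $\nabla f_i(x_k)$ with probability $1/n$ (simultaneously for random-SVRG, coordinate-wise for SAGA) and frozen otherwise, so that in expectation $V_k$ is a convex combination of $(1-\tfrac1n)V_{k-1}$ and a refreshed term. I would bound the refreshed term using convexity and $L_i$-smoothness of each $f_i$, via $\|\nabla f_i(x_k)-\nabla f_i(x^\star)\|^2 \le 2L_i\big(f_i(x_k)-f_i(x^\star)-\nabla f_i(x^\star)^\top(x_k-x^\star)\big)$, summing with the weights $\tfrac{1}{q_i n}$ so that $L_i/(q_i n)\le L_Q$, and passing from $f$ to $F$ through $-\nabla f(x^\star)\in\partial\psi(x^\star)$. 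This gives $V_k \le \tfrac{2L_Q}{n}A_k + (1-\tfrac1n)V_{k-1}$, the true engine of variance reduction: the refresh rate $1/n$ sets the contraction speed of the proxy.

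With these two inequalities I would expand $T_k$, eliminate $P_k$ and $V_k$, and compare term by term against $(1-\tau_k)T_{k-1}+3\rho_Q\eta_k\delta_k\tilde{\sigma}^2$. The noise terms coincide. The coefficient of $A_k$ collects the explicit $\tfrac{\delta_k}{6}$ from the left, the $5L_Q\eta_k\delta_k$ from $T_k$, the $-\delta_k$ from the key relation, and the $\tfrac{5L_Q\eta_k\delta_k}{n}$ inherited from the proxy recursion, for a total $\delta_k\big(\tfrac16-1+5L_Q\eta_k(1+\tfrac1n)\big)$; the bound $\eta_k\le\tfrac{1}{12L_Q}$ (with $1+\tfrac1n\le2$) is exactly calibrated to make this nonpositive, so the surplus $\tfrac{\delta_k}{6}A_k$ is paid for. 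The $P_{k-1}$ comparison reduces to $1-\delta_k\le1-\tau_k$, valid since $\tau_k\le\delta_k$ and $P_{k-1}\ge0$; the $A_{k-1}$ comparison $4L_Q\eta_k\delta_k\le(1-\tau_k)5L_Q\eta_{k-1}\delta_{k-1}$ follows from $\tau_k\le\tfrac15$ once one knows that $\eta_k\delta_k$ is non-increasing. I would establish that monotonicity first: $\eta_k\delta_k=\eta_k^2\gamma_k$, where $\eta_k$ is non-increasing by assumption and $\gamma_k$ is non-increasing because it is a convex combination of $\gamma_{k-1}$ and $\mu$ with $\gamma_{k-1}\ge\mu$ (inducting from $\gamma_0\ge\mu$).

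The crux, and the step I expect to drive every constant, is the comparison of the $V_{k-1}$ terms. On the left, $V_{k-1}$ is fed both by the variance bound (through $2\eta_k\delta_k V_{k-1}$) and by the frozen part of the proxy (through $\tfrac{5\eta_k\delta_k}{2}(1-\tfrac1n)V_{k-1}$), while the only reservoir on the right is the discounted proxy term $(1-\tau_k)\tfrac{5\eta_{k-1}\delta_{k-1}}{2}V_{k-1}$ of $T_{k-1}$. Because the proxy contracts only at the slow rate $1-\tfrac1n$, this is the binding inequality: the cap $\tau_k\le\tfrac{1}{5n}$ is what leaves a $\Theta(1/n)$ margin $\tfrac1n-\tau_k\ge\tfrac{4}{5n}$ in the discount factor, matched to the refresh rate, and the weights of $T_k$ must be chosen consistently with this margin in order to swallow the injected $2\eta_k\delta_k V_{k-1}$ against the monotonicity $\eta_k\delta_k\le\eta_{k-1}\delta_{k-1}$. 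I expect the whole difficulty to concentrate in closing this single balance; once the threshold $\tfrac{1}{5n}$ and the Lyapunov weights are jointly fixed so that it holds, the remaining comparisons are the routine ones above.
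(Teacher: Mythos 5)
Your plan follows the same route as the paper's proof: chain Proposition~\ref{prop:keyprop} with Proposition~\ref{prop:nonu}, establish the proxy recursion $V_k \le (1-\frac{1}{n})V_{k-1} + \frac{2L_Q}{n}A_k$ (the paper derives exactly this, by the same refresh-probability, smoothness, and $-\nabla f(x^\star)\in\partial\psi(x^\star)$ arguments), and then match coefficients; your treatment of the $P_{k-1}$ and $A_{k-1}$ terms and your monotonicity argument $\eta_k\delta_k=\eta_k^2\gamma_k$ are correct and identical in substance to the paper's. The problem is the one step you leave open---the $V_{k-1}$ balance, which you yourself call the crux. It is not merely unverified: with the weights you commit to, the term-by-term comparison fails. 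With constant step sizes the required inequality reads
\begin{equation*}
2\eta\delta + \frac{5\eta\delta}{2}\Bigl(1-\frac{1}{n}\Bigr) \;\le\; (1-\tau)\,\frac{5\eta\delta}{2},
\qquad\text{i.e.}\qquad
2 \;\le\; \frac{5}{2}\Bigl(\frac{1}{n}-\tau\Bigr) \;\le\; \frac{5}{2n},
\end{equation*}
which is impossible for every $n\ge 2$. Your own accounting exposes the mismatch: the discount margin $\frac{1}{n}-\tau_k\ge\frac{4}{5n}$ applied to the reservoir $\frac{5\eta_k\delta_k}{2}V_{k-1}$ yields an absolute margin of only $\frac{2\eta_k\delta_k}{n}$, which is $n$ times smaller than the injected term $2\eta_k\delta_k V_{k-1}$. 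The weights are not yours to ``jointly fix'' at the end---they are pinned by the statement of $T_k$---and with those weights the balance is off by exactly a factor of $n$.

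The paper's proof closes this balance because its Lyapunov weight on the proxy term is $n$ times larger: it adds $\beta_k C_k$ to both sides with $\frac{\beta_k}{n}=\frac{5}{2}\eta_k\delta_k$, i.e.\ $\beta_k=\frac{5n}{2}\eta_k\delta_k$, equivalently a third term $\frac{5\eta_k\delta_k}{2}\,\E\bigl[\sum_{i=1}^n\frac{1}{q_in}\|u_k^i-u_\star^i\|^2\bigr]$ with a \emph{sum} rather than the average written in the stated $T_k$ (the statement and the paper's own proof are inconsistent by this factor of $n$; you inherited the problematic version by trusting the statement). With that choice the injection is $2\eta_k\delta_k=\frac{4}{5n}\beta_k$, so the balance closes exactly, $2\eta_k\delta_k+\beta_k(1-\frac{1}{n})=\beta_k(1-\frac{1}{5n})\le(1-\tau_k)\beta_{k-1}$, and the $A_k$ coefficient then requires $\frac16+10L_Q\eta_k\le 1$, which is precisely $\eta_k\le\frac{1}{12L_Q}$. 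So the missing idea is not a sharper estimate but the recognition that the reservoir weight must scale like $n\eta_k\delta_k$, so that its $\Theta(1/n)$ contraction margin matches the $O(\eta_k\delta_k)$ variance injection; with that single rescaling of $T_k$, the rest of your plan goes through essentially as you wrote it.
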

From the Lyapunov function, we obtain a general convergence result for the variance-reduced stochastic algorithms.
\begin{theorem}[\bf Convergence with variance-reduction]\label{thm:svrg}
Consider the same setting as Proposition~\ref{thm:lyapunov}. Then, by using the averaging strategy described in Algorithm~\ref{alg:svrg},
\begin{multline*}
   \E\left[F(\hat{x}_k)-F^\star  + \frac{6 \tau_k}{\delta_k} T_k\right]  \\ \leq \Theta_k \left( F(x_0)-F^\star + \frac{6 \tau_k}{\delta_k} T_0 + \frac{18 \rho_Q \tau_k \tilde{\sigma}^2}{\delta_k} \sum_{t=1}^k \frac{\eta_t \delta_t}{\Theta_t}     \right),
\end{multline*}
where $\Theta_k = \prod_{t=1}^k(1-\tau_t)$.
\end{theorem}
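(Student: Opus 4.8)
The plan is to convert the one-step Lyapunov inequality of Proposition~\ref{thm:lyapunov} into a global rate by the same ``divide by the product, telescope, reweight by convexity'' mechanism that turns Proposition~\ref{prop:keyprop} into Theorem~\ref{thm:conv}, but with two extra ingredients that handle the mismatch between the coefficient $\delta_k/6$ multiplying $\E[F(x_k)-F^\star]$ in the recursion and the averaging weight $\tau_k$ that is fixed by Algorithm~\ref{alg:svrg}. Write $A_k=\E[F(x_k)-F^\star]\geq 0$ and recall the recursion $\tfrac{\delta_k}{6}A_k+T_k\leq(1-\tau_k)T_\kmone+3\rho_Q\eta_k\delta_k\tilde{\sigma}^2$. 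First I would divide by $\Theta_k=\prod_{t=1}^k(1-\tau_t)$, use $\Theta_k=(1-\tau_k)\Theta_\kmone$ so that the $T$-terms telescope, sum from $t=1$ to $k$ (with $\Theta_0=1$), and multiply back by $\Theta_k$, obtaining the intermediate relation
\[
\Theta_k\sum_{t=1}^k\frac{\delta_t}{6\Theta_t}A_t + T_k \;\leq\; \Theta_k T_0 + 3\rho_Q\tilde{\sigma}^2\,\Theta_k\sum_{t=1}^k\frac{\eta_t\delta_t}{\Theta_t}.
\]

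Second, I would multiply this relation through by the \emph{terminal} constant $6\tau_k/\delta_k$; this is exactly what produces the factor $6\tau_k/\delta_k$ in front of both $T_k$ and $T_0$ and the prefactor $18\rho_Q\tau_k\tilde{\sigma}^2/\delta_k$ on the noise sum, matching the statement. It then remains to recognize the averaged function value inside the reweighted first term. Unrolling $\hat{x}_k=(1-\tau_k)\hat{x}_\kmone+\tau_k x_k$ gives $\hat{x}_k=\Theta_k x_0+\sum_{t=1}^k\tau_t(\Theta_k/\Theta_t)x_t$, whose weights sum to one by the identity $\Theta_k\sum_{t=1}^k\tau_t/\Theta_t=1-\Theta_k$ (the $\tau$-analogue of Lemma~\ref{lemma:simple}); convexity of $F$ and taking expectations then yield $\E[F(\hat{x}_k)-F^\star]\leq\Theta_k(F(x_0)-F^\star)+\Theta_k\sum_{t=1}^k(\tau_t/\Theta_t)A_t$.

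The step linking the two is the monotonicity of $t\mapsto\tau_t/\delta_t$. Since $\gamma_0\geq\mu$ forces $\gamma_t$ to be non-increasing and to stay $\geq\mu$, and $\eta_t$ is non-increasing, the product $\delta_t=\eta_t\gamma_t$ is non-increasing; hence $\tau_t/\delta_t=\min(1,1/(5n\delta_t))$ is non-decreasing and bounded by one, so $\tau_t/\delta_t\leq\tau_k/\delta_k$ for every $t\leq k$. Because $A_t\geq 0$, this gives the term-by-term bound
\[
\Theta_k\sum_{t=1}^k\frac{\tau_t}{\Theta_t}A_t \;\leq\; \frac{6\tau_k}{\delta_k}\,\Theta_k\sum_{t=1}^k\frac{\delta_t}{6\Theta_t}A_t.
\]
Chaining the convexity bound with this inequality and the reweighted intermediate relation, and finally moving $\tfrac{6\tau_k}{\delta_k}T_k$ to the left-hand side, produces precisely the claimed estimate (using that $T_k$ is deterministic, so $\E[\tfrac{6\tau_k}{\delta_k}T_k]=\tfrac{6\tau_k}{\delta_k}T_k$).

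I expect the main obstacle to be exactly this reweight-and-monotonicity argument rather than the bookkeeping. One must scale by the terminal constant $6\tau_k/\delta_k$, not an index-dependent one, and the whole chain only closes because $\tau_t/\delta_t$ is non-decreasing; getting the direction of that monotonicity right, and verifying that $\delta_t$ stays non-increasing even when $\gamma_0>\mu$ (via $\mu\leq\gamma_t\leq\gamma_\kmone$), is the delicate point. The telescoping and the convexity/Jensen step are otherwise routine and mirror the proof of Theorem~\ref{thm:conv}.
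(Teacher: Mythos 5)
Your proposal is correct and is essentially the paper's own argument: the paper proves this theorem by invoking the averaging Lemma~\ref{lemma:averaging} (whose proof is exactly your divide-by-$\Theta_t$, telescope, multiply-back, Jensen mechanism) applied to the recursion of Proposition~\ref{thm:lyapunov} with $\alpha = 6\tau_k/\delta_k$. The only difference is that you inline the lemma's proof and make explicit the bridging step the paper leaves implicit --- namely that $A_t \geq 0$ together with the monotonicity of $t \mapsto \tau_t/\delta_t$ (following from $\delta_t = \eta_t\gamma_t$ being non-increasing) lets the terminal constant $6\tau_k/\delta_k$ dominate each index-dependent coefficient --- which is a correct and welcome clarification rather than a deviation.
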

The theorem is a direct application of the averaging Lemma~\ref{lemma:averaging} to Proposition~\ref{thm:lyapunov}.
From this generic convergence theorem, we now study particular cases.
\begin{corollary}[\bf Variance-reduction with constant $\eta$]\label{corollary:svrg0}~\newline
   Consider the same setting as in Theorem~\ref{thm:svrg}, with $\gamma_0=\mu$, $\eta_k = \frac{1}{12L_Q}$, and $\tau_k = \tau = \min\left( \frac{\mu}{12L_Q}, \frac{1}{5n}\right)$. Then,
\begin{multline*}
   \E\left[F(\hat{x}_k)-F^\star  + 36L_Q \tau \|x_k-x^\star\|^2\right] \\ \leq  8 \Theta_k\left( F(x_0)-F^\star \right)  + \frac{3 \rho_Q \tilde{\sigma}^2}{2L_Q}. 
\end{multline*}
\end{corollary}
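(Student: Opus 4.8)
The plan is to \emph{specialize} Theorem~\ref{thm:svrg} to the stated constants and then bound each of the three pieces on its right-hand side separately. First observe that $\gamma_0=\mu$ forces $\gamma_k=\mu$ for all $k$ by~(\ref{eq:gamma}), so that $\delta_k=\eta_k\gamma_k=\frac{\mu}{12L_Q}\defin\delta$ is constant, $\tau_k=\tau=\min(\delta,\frac{1}{5n})$, and $\Theta_k=(1-\tau)^k$. In particular I will repeatedly use the identity $\mu/\delta=12L_Q$ and the bound $\tau/\delta\le 1$ (since $\tau\le\delta$, hence $\frac{6\tau}{\delta}\le 6$).

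For the left-hand side, recall $d_k(x)=d_k^\star+\frac{\gamma_k}{2}\|x-x_k\|^2$, so $d_k(x^\star)-d_k^\star=\frac{\mu}{2}\|x_k-x^\star\|^2$. Since the two remaining summands of $T_k$ (the term in $\E[F(x_k)-F^\star]$ and the weighted variance term in $\|u_k^i-u_\star^i\|^2$) are nonnegative, I can drop them to obtain $\frac{6\tau}{\delta}T_k\ge \frac{6\tau}{\delta}\cdot\frac{\mu}{2}\E\|x_k-x^\star\|^2=36L_Q\tau\,\E\|x_k-x^\star\|^2$, using $\mu/\delta=12L_Q$. Thus the left-hand side of Theorem~\ref{thm:svrg} already dominates the quantity to be bounded.

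Next I bound the initialization term $T_0$. Strong convexity gives $d_0(x^\star)-d_0^\star=\frac{\mu}{2}\|x_0-x^\star\|^2\le F(x_0)-F^\star$. For the variance term, since $\tilde{x}_0=x_0$ we have $u_0^i=\nabla f_i(x_0)$ and $u_\star^i=\nabla f_i(x^\star)$; applying co-coercivity $\|\nabla f_i(x_0)-\nabla f_i(x^\star)\|^2\le 2L_i\big(f_i(x_0)-f_i(x^\star)-\nabla f_i(x^\star)^\top(x_0-x^\star)\big)$ for each convex $L_i$-smooth $f_i$, weighting by $\frac{1}{q_in^2}$ and summing, yields $\frac{1}{n}\sum_i\frac{1}{q_in}\|u_0^i-u_\star^i\|^2\le 2L_Q(F(x_0)-F^\star)$, where the final step uses $\max_i\frac{L_i}{q_in}=L_Q$ together with the subgradient inequality $-\nabla f(x^\star)^\top(x_0-x^\star)\le\psi(x_0)-\psi(x^\star)$ coming from $-\nabla f(x^\star)\in\partial\psi(x^\star)$. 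Plugging $\eta_0\delta_0=\frac{\mu}{144L_Q^2}$ into the definition of $T_0$ and using $\mu\le L_Q$ (which holds since $\min_iq_i\le\frac{1}{n}$ forces $L_Q\ge\mu$) collapses all constants to $T_0\le\big(1+\frac{5\mu}{72L_Q}\big)(F(x_0)-F^\star)\le\frac{77}{72}(F(x_0)-F^\star)$. Combined with $\frac{6\tau}{\delta}\le 6$, this gives $F(x_0)-F^\star+\frac{6\tau}{\delta}T_0\le\frac{89}{12}(F(x_0)-F^\star)\le 8\,(F(x_0)-F^\star)$, matching the first term of the claim.

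Finally, for the noise term I evaluate the geometric sum under constant step sizes: $\Theta_k\sum_{t=1}^k\frac{\eta_t\delta_t}{\Theta_t}=\eta\delta\,(1-\tau)^k\sum_{t=1}^k(1-\tau)^{-t}=\eta\delta\,\frac{1-(1-\tau)^k}{\tau}\le\frac{\eta\delta}{\tau}$, whence $\frac{18\rho_Q\tau\tilde{\sigma}^2}{\delta}\cdot\frac{\eta\delta}{\tau}=18\eta\rho_Q\tilde{\sigma}^2=\frac{3\rho_Q\tilde{\sigma}^2}{2L_Q}$, using $\eta=\frac{1}{12L_Q}$. Summing the three bounds yields the corollary. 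I expect the only delicate step to be the bookkeeping in the $T_0$ estimate---getting the $\frac{1}{q_in}$-weighted co-coercivity bound right and verifying that the accumulated constants stay below the clean factor $8$---whereas the left-hand-side reduction and the geometric-sum evaluation are routine.
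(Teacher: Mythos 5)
Your proposal is correct and follows essentially the same route as the paper's proof: specialize Theorem~\ref{thm:svrg}, lower-bound $\frac{6\tau}{\delta}T_k$ by $36L_Q\tau\,\E\|x_k-x^\star\|^2$, bound $T_0$ via co-coercivity of the $\nabla f_i$ plus the second-order growth property (with $-\nabla f(x^\star)\in\partial\psi(x^\star)$ handling the composite term), and collapse the geometric noise sum using $\eta\delta/\tau$. The only difference is cosmetic bookkeeping of constants (you invoke $\mu\le L_Q$ to get $\frac{89}{12}\le 8$, while the paper uses $5\tau\le 1$ to get $7+1=8$), which changes nothing substantive.
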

This first corollary shows that the algorithm achieves a linear convergence rate to a noise-dominated region. Interestingly, the algorithm \emph{without averaging} does not require computing $\tau$ and produces iterates $(x_k)_{k \geq 0}$ without using the strong convexity constant~$\mu$. \emph{This shows that all estimators we consider can become adaptive to~$\mu$}. 

Moreover, we note that the non-uniform strategy slightly degrades the dependency in $\tilde{\sigma}^2$: indeed, $\rho_Q=1$ and $L_Q/\rho_Q = \max_{i} L_i$ if~$Q$ is uniform, but with non-uniform $q_i = L_i/\sum_{j=1}^n L_j$, we have instead $L_Q = \frac{1}{n}\sum_{i=1}^nL_i$ (which is better) and $L_Q/\rho_Q = \min_{i} L_i$ (which is worse).

\begin{corollary}[\bf Variance-reduction with decreasing $\eta_k$]\label{corollary:svrg2}
Consider the same setting as in Theorem~\ref{thm:svrg} and target an accuracy $\varepsilon \leq  {24 \rho_Q \eta\tilde{\sigma}^2}$,
with $\eta = \min\left(\frac{1}{12L_Q}, \frac{1}{5\mu n}\right)$.
Then, use a constant step-size strategy $\eta_k = \eta$ with $\gamma_0=\mu$ until we find a point $\hat{x}_k$ such that $\E[F(\hat{x}_k)-F^\star] \leq {24\rho_Q \eta \tilde{\sigma}^2}$.
Then, restart the optimization with decreasing step-sizes 
$\eta_k = \min \left({\eta},\frac{2}{\mu (k+2)}\right)$. The
number of gradient evaluations to achieve $\E[F(\hat{x}_k)- F^\star] \leq  \varepsilon$
is upper bounded by
 $$O\left( \left(n + \frac{L_Q}{\mu}\right) \log\left(\frac{F(x_0)-F^\star}{\varepsilon} \right)  \right) + O\left( \frac{\rho_Q \tilde{\sigma}^2}{\mu \varepsilon}\right).$$
\end{corollary}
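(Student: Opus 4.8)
The plan is to mirror the two-phase argument of Corollary~\ref{corollary:sgd}, replacing the plain SGD recursion by the variance-reduced Lyapunov bound of Theorem~\ref{thm:svrg}. The first thing I would record is a simplification that holds throughout: with $\gamma_0=\mu$ we have $\gamma_k=\mu$ and $\delta_k=\mu\eta_k$ for all $k$, and since the schedule satisfies $\eta_k\leq\eta\leq\frac{1}{5\mu n}$ in \emph{both} phases, we get $\delta_k=\mu\eta_k\leq\frac{1}{5n}$, hence $\tau_k=\min(\delta_k,\frac{1}{5n})=\delta_k$ and $\frac{6\tau_k}{\delta_k}=6$. The bound of Theorem~\ref{thm:svrg} therefore collapses to the clean form $\E[F(\hat{x}_k)-F^\star+6T_k]\leq\Theta_k(F(x_0)-F^\star+6T_0+18\rho_Q\tilde{\sigma}^2\sum_{t=1}^k\eta_t\delta_t/\Theta_t)$, and since $T_k\geq 0$ (each of its three terms is nonnegative) this controls $\E[F(\hat{x}_k)-F^\star]$ directly. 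This single inequality is the workhorse for both phases.

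For the constant-step phase I take $\eta_k=\eta$, so $\tau=\mu\eta$ and $\Theta_k=(1-\tau)^k$. The noise term is $18\rho_Q\eta\tilde{\sigma}^2\,\Theta_k\sum_{t=1}^k\delta_t/\Theta_t\leq 18\rho_Q\eta\tilde{\sigma}^2$ by the identity $\Theta_k\sum_{t=1}^k\delta_t/\Theta_t=1-\Theta_k\leq 1$ of Lemma~\ref{lemma:simple}, so $\E[F(\hat{x}_k)-F^\star]\leq\Theta_k\cdot(\text{init})+18\rho_Q\eta\tilde{\sigma}^2$, where $(\text{init})=F(x_0)-F^\star+6T_0=O(F(x_0)-F^\star)$ after bounding $T_0$ by strong convexity and co-coercivity. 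The gap between $18$ and the target constant $24$ absorbs the transient $\Theta_k\cdot(\text{init})$ once $k_1=O(\tau^{-1}\log\frac{F(x_0)-F^\star}{\varepsilon})$; here $\tau^{-1}=\max(\tfrac{12L_Q}{\mu},5n)=O(n+\frac{L_Q}{\mu})$, and the hypothesis $\varepsilon\leq 24\rho_Q\eta\tilde{\sigma}^2$ lets me replace $\log\frac{F(x_0)-F^\star}{\rho_Q\eta\tilde{\sigma}^2}$ by $\log\frac{F(x_0)-F^\star}{\varepsilon}$. Since each random-SVRG iteration costs $O(1)$ gradient evaluations in expectation (plus an additive $O(n)$ for the first anchor), this phase contributes exactly the first term of the claimed complexity.

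For the decreasing-step phase I restart at the point reached above, so $F(x_0)-F^\star\leq 24\rho_Q\eta\tilde{\sigma}^2$ and $T_0=O(F(x_0)-F^\star)$, the anchor being recomputed so that $u_0^i=\nabla f_i(x_0)$. In the decreasing regime $\eta_k=\frac{2}{\mu(k+2)}$ one has $\delta_k=\tau_k=\frac{2}{k+2}$ and the product telescopes, $\Theta_k=\prod_t(1-\frac{2}{t+2})=C/((k+1)(k+2))$ for a problem-dependent constant $C$ coming from the initial clamped stretch. A direct computation then gives $\eta_t\delta_t/\Theta_t=\frac{4(t+1)}{\mu C(t+2)}\leq\frac{4}{\mu C}$, so $\sum_{t=1}^k\eta_t\delta_t/\Theta_t=O(k/(\mu C))$; crucially the same $C$ cancels, leaving the noise term $\Theta_k\cdot 18\rho_Q\tilde{\sigma}^2\cdot O(k/(\mu C))=O(\frac{\rho_Q\tilde{\sigma}^2}{\mu k})$, while the transient $\Theta_k(F(x_0)-F^\star+6T_0)=O(1/k^2)$ is negligible. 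Reaching accuracy $\varepsilon$ thus needs $k_2=O(\frac{\rho_Q\tilde{\sigma}^2}{\mu\varepsilon})$ further iterations, i.e. $O(n+\frac{\rho_Q\tilde{\sigma}^2}{\mu\varepsilon})$ gradient evaluations. Summing the two phases gives the stated bound.

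I expect the main obstacle to be the bookkeeping at the seam between the constant and decreasing regimes of Phase~2: for $k$ below $k_0\defin\frac{2}{\mu\eta}-2$ the schedule is still clamped at $\eta$, so the telescoping identity only begins at $k_0$, and $\Theta_k$ carries the transition constant $C=(1-\mu\eta)^{k_0}(k_0+1)(k_0+2)$ (with $k_0$ of order $n+L_Q/\mu$). The delicate point is verifying that this $C$ cancels between $\Theta_k$ and the weighted sum $\sum_t\eta_t\delta_t/\Theta_t$, so that the final $O(\rho_Q\tilde{\sigma}^2/(\mu k))$ rate is genuinely independent of the conditioning; establishing the explicit form of $(\Theta_k)$ of the kind provided by Lemma~\ref{eq:rate_gamma}, and confirming via Lemma~\ref{lemma:simple} that the constant-stretch contribution to the sum is lower order, is the only non-routine part. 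Everything else is substitution of the chosen step sizes.
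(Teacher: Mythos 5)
Your proposal is correct and follows essentially the same route as the paper's proof: a constant-step phase analyzed via Theorem~\ref{thm:svrg} with $\tau_k=\delta_k=\mu\eta$ and Lemma~\ref{lemma:simple} to reach the $24\rho_Q\eta\tilde{\sigma}^2$ threshold in $O\bigl((n+L_Q/\mu)\log(\cdot)\bigr)$ iterations, followed by a restart with $\eta_k=\min(\eta,2/(\mu(k+2)))$ where Lemma~\ref{lemma:step} gives the explicit clamped form of $\Gamma_k=\Theta_k$, the sum is split at $k_0=\lceil 2/(\mu\eta)-2\rceil$, and the transition constant cancels to yield the $O(\rho_Q\tilde{\sigma}^2/(\mu k))$ rate. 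The only cosmetic difference is that the paper shows the clamped-stretch and transient contributions are of the \emph{same} order $O(\rho_Q\tilde{\sigma}^2/(\mu k))$ for $k\geq k_0$ (via $k_0\eta\leq 2/\mu$) rather than strictly lower order, which does not change the conclusion.
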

The corollary shows that variance-reduction algorithms may exhibit an optimal dependency on the noise level $\tilde{\sigma}^2$.

\section{Accelerated Stochastic Algorithms}\label{sec:acc}
We now consider the following iteration, involving an extrapolation sequence $(y_k)_{k \geq 1}$, which is a classical mechanism from accelerated first-order algorithms~\cite{fista,nesterov2013gradient}. Given a sequence of step-sizes $(\eta_k)_{k \geq 0}$ with $\eta_k \leq 1/L$  for all $k \geq 0$, and $\gamma_0 \geq \mu$, we consider the sequences $(\delta_k)_{k \geq 0}$ and $(\gamma_k)_{k \geq 0}$ that satisfy 
\begin{displaymath}
\begin{split}
    \delta_k & = \sqrt{\eta_k \gamma_k} ~~~\text{for all}~k \geq 0 \\
    \gamma_k & = (1-\delta_k) \gamma_\kmone + \delta_k \mu ~~~\text{for all}~k \geq 1.\\
\end{split}
\end{displaymath}
Then, for $k \geq 1$, we consider the iteration
\custombox{
\vspace*{-0.2cm}
\begin{equation}
\begin{split}
x_{k}  & = \text{Prox}_{\eta_k\psi}\left[ y_\kmone - \eta_k g_k\right]\label{eq:opt3} \\
y_k & = x_k \!+\! \beta_k(x_k\! -\! x_\kmone)~~\text{with}~ \beta_k = \frac{\delta_k(1-\delta_k)\eta_{k+1}}{\eta_k \delta_{k+1} + \eta_{k+1}\delta_k^2},
\end{split}
\tag{\textsf{B}} 
\vspace*{-0.2cm}
\end{equation}
}
where $\E[g_k | {\mathcal F}_\kmone] = \nabla f(y_\kmone)$.
Iteration~(\ref{eq:opt3}) resembles the accelerated SGD approaches of~\citet{kwok2009,ghadimi2012optimal,lin_pena2014} but is slightly simpler since it involves two sequences of variables instead of three.

\subsection{Convergence Analysis without Variance Reduction}
Consider then the stochastic estimate sequence $d_k$ introduced in~(\ref{eq:surrogate1})
with $d_0$ defined as in~(\ref{eq:d0}) and
\begin{multline}
   l_k(x) = f(y_\kmone) + g_k^\top (x - y_\kmone) \\ + \frac{\mu}{2}\|x-y_\kmone\|^2 + \psi(x_k) + \psi'(x_k)^\top (x-x_k), \label{eq:lk}
\end{multline}
and $\psi'(x_k)=\frac{1}{\eta_k}(y_\kmone-x_k) - g_k$ is in $\partial \psi(x_k)$ by definition of the proximal operator.
As in Section~\ref{sec:lower}, $d_k(x^\star)$ asymptotically becomes a lower bound on $F^\star$ since~(\ref{eq:est}) remains satisfied.
This time, the iterate $x_k$ does not minimize $d_k$, and we denote by $v_k$ instead its minimizer, allowing us to write $d_k$ in the canonical form
\begin{displaymath}
   d_k(x) = d_k^\star + \frac{\gamma_k}{2}\|x- v_k\|^2.
\end{displaymath}
The first lemma highlights classical relations between the iterates $(x_k)_{k
\geq 0}$, $(y_k)_{k \geq 0}$ and the minimizers~$(v_k)_{k \geq 0}$, which also appears in~\cite[][p. 78]{nesterov} for constant $\eta_k$. 
Note that the construction of stochastic estimate sequence resembles that of~\cite{devolder_2011,lin_pena2014}. The main difference lies in the choice of function $l_k$ in~(\ref{eq:lk}), which yields a different algorithm and slightly stronger guarantees.
 \begin{lemma}[\bf Relations between $y_k$ and $x_k$]\label{lemma:acc}
 The sequences $(x_k)_{k \geq 0}$ and $(y_k)_{k \geq 0}$ produced by
iteration~(\ref{eq:opt3}) satisfy for all $k \geq 0$, with $v_0=y_0=x_0$, 
 \begin{displaymath}
 \begin{split}
    y_k & = \theta_k x_k + (1-\theta_k) v_k ~~~~\text{with}~~~~ \theta_k = \frac{\gamma_{k+1}}{ \gamma_k + \delta_{k+1}\mu}.
\end{split}
 \end{displaymath}
\end{lemma}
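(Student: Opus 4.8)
The plan is to exploit the canonical quadratic form $d_k(x) = d_k^\star + \frac{\gamma_k}{2}\|x-v_k\|^2$, derive an explicit recursion for the minimizer $v_k$, and then reduce the claimed convex-combination identity to a purely algebraic identity in the step-size parameters. First I would differentiate the defining recursion $d_k = (1-\delta_k)d_\kmone + \delta_k l_k$. Since $\nabla d_\kmone(x) = \gamma_\kmone(x-v_\kmone)$ and $\nabla l_k(x) = g_k + \mu(x-y_\kmone) + \psi'(x_k)$, setting $\nabla d_k(v_k)=0$ and using the curvature recursion $\gamma_k = (1-\delta_k)\gamma_\kmone+\delta_k\mu$ gives
\begin{equation*}
\gamma_k v_k = (1-\delta_k)\gamma_\kmone v_\kmone + \delta_k\mu\, y_\kmone - \delta_k\bigl(g_k+\psi'(x_k)\bigr).
\end{equation*}
Then I would insert the two structural identities available here: the optimality condition of the prox step, $g_k + \psi'(x_k) = \frac{1}{\eta_k}(y_\kmone-x_k)$, and the defining relation $\delta_k^2 = \eta_k\gamma_k$, which together yield $\delta_k(g_k+\psi'(x_k)) = \frac{\gamma_k}{\delta_k}(y_\kmone-x_k)$.

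Next I would argue by induction on $k$. The base case $k=0$ is immediate from $v_0=y_0=x_0$. Assuming the combination identity at $k-1$, I solve it for $\gamma_\kmone v_\kmone$ in terms of $y_\kmone$ and $x_\kmone$ (using $1-\theta_\kmone = \frac{\delta_k\gamma_\kmone}{\gamma_\kmone+\delta_k\mu}$ and the identity $\gamma_\kmone+\delta_k\mu = \gamma_k+\delta_k\gamma_\kmone$), substitute into the recursion above, and check that every $y_\kmone$ contribution cancels. The outcome is the clean intermediate relation
\begin{equation*}
v_k = x_\kmone + \frac{1}{\delta_k}(x_k - x_\kmone),\qquad\text{equivalently}\qquad v_k - x_k = \frac{1-\delta_k}{\delta_k}(x_k - x_\kmone).
\end{equation*}

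Finally, since $y_k = x_k + \beta_k(x_k-x_\kmone)$ and the target identity can be rewritten as $y_k = x_k + (1-\theta_k)(v_k-x_k)$, the lemma reduces to verifying the scalar identity $\beta_k = (1-\theta_k)\frac{1-\delta_k}{\delta_k}$ with $1-\theta_k = \frac{\delta_\kpun\gamma_k}{\gamma_k+\delta_\kpun\mu}$. I would clear denominators and substitute $\gamma_k = \delta_k^2/\eta_k$, together with $\delta_\kpun^2 = \eta_\kpun\gamma_\kpun$ and the recursion $\gamma_\kpun = (1-\delta_\kpun)\gamma_k + \delta_\kpun\mu$, to confirm that the resulting expression collapses to the stated $\beta_k = \frac{\delta_k(1-\delta_k)\eta_\kpun}{\eta_k \delta_\kpun + \eta_\kpun\delta_k^2}$.

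The main obstacle is the bookkeeping in the inductive step, namely tracking $v_\kmone, y_\kmone, x_\kmone, x_k$ and ensuring the $y_\kmone$ terms cancel exactly, together with the final cross-multiplied identity, where the non-obvious match between the $\eta$-form of $\beta_k$ and its $\gamma,\delta$-form relies on using \emph{both} $\delta^2=\eta\gamma$ and the $\gamma$-recursion. No deeper idea is needed beyond these substitutions.
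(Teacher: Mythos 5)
Your proposal is correct and follows essentially the same route as the paper's proof: an induction on the convex-combination identity, the recursion for the minimizer $v_k$ of the quadratic surrogate combined with the prox optimality condition and $\delta_k^2 = \eta_k\gamma_k$, cancellation of the $y_\kmone$ terms to reach $v_k = x_\kmone + \frac{1}{\delta_k}(x_k - x_\kmone)$, and finally the algebraic verification that $\beta_k = \frac{(1-\delta_k)(1-\theta_k)}{\delta_k}$. All intermediate identities you invoke (including $\gamma_\kmone + \delta_k\mu = \gamma_k + \delta_k\gamma_\kmone$ and the equivalence of the $\eta$-form and $\gamma,\delta$-form of $\beta_k$) check out, so nothing is missing.
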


Then, the next lemma will be used to prove that $\E[F(x_k)] \leq \E[d_k^\star] + \xi_k$, where $\xi_k$ is a noise term, such that $\E[F(x_k)]-F^\star \leq \Gamma_k (d_0(x^\star) - F^\star) + \xi_k$, according to~(\ref{eq:est}).
\begin{lemma}[\bf Key lemma for acceleration]\label{lemma:key_acc}
Consider the same sequences as in Lemma~\ref{lemma:acc}. Then, for all $k \geq 1$,
   \begin{displaymath}
   \E[F(x_k)]  \leq \E\left[l_k(y_\kmone)\right] + \left(\frac{L\eta_k^2}{2} \! - \! \eta_k\right)\E\left[\|\tilde{g}_k\|^2\right]  +  \eta_k\sigma_k^2,
   \end{displaymath}
   with $\sigma_k^2= \E[\|\nabla f(y_\kmone)-g_k\|^2]$ and $\tilde{g}_k=g_k + \psi'(x_k)$.
\end{lemma}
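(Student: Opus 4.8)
The plan is to derive a conditionally deterministic one-step inequality and then control the single stochastic cross-term it produces. The starting observation is the identity
$$\tilde{g}_k = g_k + \psi'(x_k) = \tfrac{1}{\eta_k}(y_\kmone - x_k),$$
which follows directly from $\psi'(x_k)=\tfrac{1}{\eta_k}(y_\kmone-x_k)-g_k$, so that $x_k - y_\kmone = -\eta_k \tilde{g}_k$. Evaluating the surrogate at $y_\kmone$ collapses the linear and quadratic terms of~(\ref{eq:lk}) to $l_k(y_\kmone) = f(y_\kmone) + \psi(x_k) + \psi'(x_k)^\top(y_\kmone - x_k)$. Applying the descent lemma for the $L$-smooth function $f$, namely $f(x_k) \le f(y_\kmone) + \nabla f(y_\kmone)^\top(x_k - y_\kmone) + \tfrac{L}{2}\|x_k - y_\kmone\|^2$, and subtracting, the $\psi(x_k)$ terms cancel and regrouping gives
$$F(x_k) - l_k(y_\kmone) \le \langle \nabla f(y_\kmone) + \psi'(x_k),\, x_k - y_\kmone\rangle + \tfrac{L}{2}\|x_k - y_\kmone\|^2.$$

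Next I would write $\nabla f(y_\kmone) + \psi'(x_k) = \tilde{g}_k + e_k$ with $e_k \defin \nabla f(y_\kmone) - g_k$, and substitute $x_k - y_\kmone = -\eta_k \tilde{g}_k$. This yields the pathwise bound
$$F(x_k) - l_k(y_\kmone) \le \Big(\tfrac{L\eta_k^2}{2} - \eta_k\Big)\|\tilde{g}_k\|^2 - \eta_k\, e_k^\top \tilde{g}_k,$$
so that, after taking expectations, the whole lemma reduces to showing $-\E[e_k^\top \tilde{g}_k] \le \sigma_k^2 = \E[\|e_k\|^2]$.

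The main obstacle is precisely this cross-term: although $\E[e_k \mid \Fcal_\kmone] = 0$, the vector $\tilde{g}_k$ is a nonlinear (proximal) function of $g_k$ and is therefore correlated with $e_k$, so it cannot simply be dropped. I would resolve this with a control-variate argument: introduce the noiseless gradient mapping $\hat{g}_k \defin \tfrac{1}{\eta_k}\big(y_\kmone - \Prox_{\eta_k\psi}[y_\kmone - \eta_k \nabla f(y_\kmone)]\big)$, which is $\Fcal_\kmone$-measurable. Splitting $e_k^\top \tilde{g}_k = e_k^\top \hat{g}_k + e_k^\top(\tilde{g}_k - \hat{g}_k)$, the first term vanishes in expectation by the tower rule, since $\E[e_k^\top \hat{g}_k\mid\Fcal_\kmone]=\E[e_k\mid\Fcal_\kmone]^\top\hat{g}_k=0$. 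For the second term I would use that $\Prox_{\eta_k\psi}$ is nonexpansive, so $\|\tilde{g}_k - \hat{g}_k\| = \tfrac{1}{\eta_k}\|x_k - \Prox_{\eta_k\psi}[y_\kmone - \eta_k \nabla f(y_\kmone)]\| \le \|g_k - \nabla f(y_\kmone)\| = \|e_k\|$; combined with Cauchy--Schwarz this gives $-e_k^\top(\tilde{g}_k - \hat{g}_k) \le \|e_k\|\,\|\tilde{g}_k - \hat{g}_k\| \le \|e_k\|^2$. Taking expectations yields $-\E[e_k^\top \tilde{g}_k] \le \E[\|e_k\|^2] = \sigma_k^2$.

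Substituting this back into the pathwise inequality gives exactly the claimed bound. I expect the step requiring the most care to be the control-variate decomposition: specifically, justifying that $\hat{g}_k$ is $\Fcal_\kmone$-measurable (so that unbiasedness annihilates the dominant part of the cross-term) and that nonexpansiveness of the proximal operator transfers into the clean $\|e_k\|$ control. Everything else is routine algebra once the identity $\tilde{g}_k = \tfrac{1}{\eta_k}(y_\kmone - x_k)$ is recognized.
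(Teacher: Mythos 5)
Your proof is correct and follows essentially the same route as the paper's: your control variate $\hat{g}_k$ is precisely the paper's noiseless prox point $w_\kmone = \Prox_{\eta_k\psi}[y_\kmone - \eta_k \nabla f(y_\kmone)]$ written as a gradient mapping, and both arguments combine the descent lemma, conditional unbiasedness to annihilate the deterministic part of the cross-term, Cauchy--Schwarz, and non-expansiveness of the proximal operator. The only difference is presentational: you substitute $x_k - y_\kmone = -\eta_k \tilde{g}_k$ at the start, whereas the paper does so at the end.
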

Finally, we obtain the following convergence result.
\begin{theorem}[\bf Convergence of accelerated SGD]\label{thm:acc_sgd}
Under the assumptions of Lemma~\ref{lemma:acc}, we have for all $k \geq 1$,
\begin{multline*}
\E\left[F(x_k) - F^\star + \frac{\gamma_k}{2}\|v_k-x^\star\|^2\right] \\ \leq \Gamma_k \left(F(x_0)-F^\star + \frac{\gamma_0}{2}\|x_0-x^\star\|^2 + \sum_{t=1}^k \frac{\eta_t \sigma_t^2}{\Gamma_t}\right), 
\end{multline*}
where, as before, $\Gamma_t = \prod_{i=1}^t (1-\delta_i)$. 
\end{theorem}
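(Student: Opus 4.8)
The plan is to prove by induction on $k$ the one-sided estimate
\[
\E[F(x_k)] \le \E[d_k^\star] + \xi_k, \qquad \xi_k \defin (1-\delta_k)\xi_\kmone + \eta_k\sigma_k^2, \quad \xi_0 = 0,
\]
so that $\xi_k = \Gamma_k\sum_{t=1}^k \eta_t\sigma_t^2/\Gamma_t$, and then to combine it with the lower-bound property~(\ref{eq:est}) of the estimate sequence. I fix $d_0^\star = F(x_0)$; this makes the base case $k=0$ hold with equality (since $\xi_0=0$) and ensures $d_0(x^\star)-F^\star = F(x_0)-F^\star + \frac{\gamma_0}{2}\|x_0-x^\star\|^2$, which is exactly the bracket appearing on the right-hand side of the theorem.

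For the inductive step I would first obtain a closed form for the minimal value $d_k^\star$. Expanding the recursion~(\ref{eq:surrogate1}), evaluating $d_k$ at the $\Fcal_\kmone$-measurable point $y_\kmone$, and using the canonical form $d_k(x)=d_k^\star+\frac{\gamma_k}{2}\|x-v_k\|^2$ together with the explicit minimizer $v_k = \gamma_k^{-1}[(1-\delta_k)\gamma_\kmone v_\kmone + \delta_k\mu y_\kmone - \delta_k\tilde{g}_k]$, I get the identity (as in Nesterov's construction)
\begin{multline*}
d_k^\star = (1-\delta_k)d_\kmone^\star + \delta_k l_k(y_\kmone) + \frac{(1-\delta_k)\gamma_\kmone\delta_k\mu}{2\gamma_k}\|y_\kmone - v_\kmone\|^2 \\ - \frac{(1-\delta_k)\gamma_\kmone\delta_k}{\gamma_k}\tilde{g}_k^\top(y_\kmone - v_\kmone) - \frac{\delta_k^2}{2\gamma_k}\|\tilde{g}_k\|^2.
\end{multline*}
I then drop the nonnegative $\mu$-term, insert the induction hypothesis $d_\kmone^\star \ge F(x_\kmone)-\xi_\kmone$, and write $(1-\delta_k)F(x_\kmone)+\delta_k l_k(y_\kmone) = l_k(y_\kmone) + (1-\delta_k)(F(x_\kmone)-l_k(y_\kmone))$. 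By convexity of $f$ and of $\psi$ (with $\psi'(x_k)\in\partial\psi(x_k)$) one lower-bounds $F(x_\kmone)-l_k(y_\kmone) \ge (\nabla f(y_\kmone)+\psi'(x_k))^\top(x_\kmone-y_\kmone)$.

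The crucial step is the cancellation of all terms linear in $\tilde{g}_k$. Replacing $\nabla f(y_\kmone)$ by $g_k$ introduces $-(g_k-\nabla f(y_\kmone))^\top(x_\kmone-y_\kmone)$, whose conditional expectation given $\Fcal_\kmone$ is zero because $x_\kmone-y_\kmone$ is $\Fcal_\kmone$-measurable and $\E[g_k\,|\,\Fcal_\kmone]=\nabla f(y_\kmone)$; this is what turns $\nabla f(y_\kmone)+\psi'(x_k)$ into $\tilde{g}_k=g_k+\psi'(x_k)$ in expectation. Using Lemma~\ref{lemma:acc} to substitute $x_\kmone-y_\kmone=(1-\theta_\kmone)(x_\kmone-v_\kmone)$ and $y_\kmone-v_\kmone=\theta_\kmone(x_\kmone-v_\kmone)$, the surviving linear terms collapse to $(1-\delta_k)\big[(1-\theta_\kmone)-\frac{\gamma_\kmone\delta_k}{\gamma_k}\theta_\kmone\big]\,\E[\tilde{g}_k^\top(x_\kmone-v_\kmone)]$, and the bracket vanishes identically once $\theta_\kmone=\gamma_k/(\gamma_\kmone+\delta_k\mu)$ is plugged in (indeed $1-\theta_\kmone=\delta_k\gamma_\kmone/(\gamma_\kmone+\delta_k\mu)$ gives $(1-\theta_\kmone)/\theta_\kmone=\delta_k\gamma_\kmone/\gamma_k$). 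I expect verifying this exact cancellation — carrying the zero-mean conditional-expectation argument through the random subgradient $\psi'(x_k)$ and confirming the $\theta_\kmone$ algebra — to be the main obstacle, since every other step is a mechanical inequality.

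After the cancellation only $l_k(y_\kmone)$ and $-\frac{\delta_k^2}{2\gamma_k}\|\tilde{g}_k\|^2 = -\frac{\eta_k}{2}\|\tilde{g}_k\|^2$ survive (using $\delta_k^2=\eta_k\gamma_k$). Applying Lemma~\ref{lemma:key_acc} to replace $\E[l_k(y_\kmone)]$ by $\E[F(x_k)] + (\eta_k-\frac{L\eta_k^2}{2})\E[\|\tilde{g}_k\|^2] - \eta_k\sigma_k^2$ and collecting the $\|\tilde{g}_k\|^2$ coefficients leaves $\frac{\eta_k}{2}(1-L\eta_k)\ge 0$ because $\eta_k\le 1/L$; discarding this nonnegative quantity yields $\E[F(x_k)]\le \E[d_k^\star]+(1-\delta_k)\xi_\kmone+\eta_k\sigma_k^2=\E[d_k^\star]+\xi_k$, closing the induction. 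Finally, since $\gamma_k$ is deterministic, $\E[d_k(x^\star)]=\E[d_k^\star]+\frac{\gamma_k}{2}\E[\|v_k-x^\star\|^2]$; substituting $\E[d_k^\star]\ge \E[F(x_k)]-\xi_k$ into~(\ref{eq:est}), subtracting $F^\star$, and using $d_0(x^\star)=F(x_0)+\frac{\gamma_0}{2}\|x_0-x^\star\|^2$ together with $\xi_k=\Gamma_k\sum_{t=1}^k \eta_t\sigma_t^2/\Gamma_t$ gives precisely the claimed bound.
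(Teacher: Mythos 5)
Your proposal is correct and follows essentially the same route as the paper's proof: the same induction hypothesis $\E[F(x_k)] \le \E[d_k^\star] + \xi_k$ with $d_0^\star = F(x_0)$, the same closed-form expansion of $d_k^\star$ at $y_\kmone$ (the paper's coefficient $\gamma_k-(1-\delta_k)\gamma_\kmone$ is exactly your $\delta_k\mu$), the same cancellation of the linear terms in $\tilde{g}_k$ via Lemma~\ref{lemma:acc}, the same application of Lemma~\ref{lemma:key_acc} with the coefficient $\frac{\eta_k}{2}(1-L\eta_k)\ge 0$, and the same final combination with~(\ref{eq:est}). The only cosmetic difference is that the paper routes the convexity step through $\E[F(x_\kmone)] \ge \E[l_k(x_\kmone)]$ while you invoke convexity of $f$ and $\psi$ directly, which is the same argument.
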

We now specialize the theorem to various practical cases. 
\begin{corollary}[\bf Prox accelerated SGD with constant $\eta$]~\label{corollary:acc_sgd2a}\newline
Assume that~$g_k$ has constant variance $\sigma_k=\sigma$, and choose $\gamma_0=\mu$ and $\eta_k = 1/L$ with Algorithm~(\ref{eq:opt3}). Then, 
\begin{multline*}
   \E\left[F(x_k)- F^\star\right] \\ \leq   \left(1-\sqrt{\frac{\mu}{L}}\right)^k\left(F(x_0)- F^\star + \frac{\mu}{2} \|x_0 - x^\star\|^2\right) + \frac{\sigma^2}{\sqrt{\mu L}}. 
\end{multline*}
\end{corollary}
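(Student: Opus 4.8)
The plan is to specialize Theorem~\ref{thm:acc_sgd} directly. First I would substitute the chosen parameters into the recursions governing $(\gamma_k)$ and $(\delta_k)$. With $\gamma_0 = \mu$, the relation $\gamma_k = (1-\delta_k)\gamma_\kmone + \delta_k \mu$ gives $\gamma_k = \mu$ for all $k$ by induction, and then $\delta_k = \sqrt{\eta_k \gamma_k} = \sqrt{\mu/L}$ is constant. Consequently $\Gamma_k = \prod_{t=1}^k (1-\delta_t) = (1 - \sqrt{\mu/L})^k$, which already produces the geometric factor appearing in the statement, and the main term $\Gamma_k(F(x_0)-F^\star + \tfrac{\mu}{2}\|x_0-x^\star\|^2)$ takes exactly the desired form.

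Next I would control the noise contribution $\Gamma_k \sum_{t=1}^k \eta_t \sigma_t^2/\Gamma_t$. Since $\eta_t = 1/L$ and $\sigma_t = \sigma$ are constant, this equals $\frac{\sigma^2}{L}\,\Gamma_k \sum_{t=1}^k (1-\delta)^{-t}$ with $\delta = \sqrt{\mu/L} \in (0,1)$. The inner sum is geometric with ratio $(1-\delta)^{-1} > 1$; summing it and multiplying by $\Gamma_k = (1-\delta)^k$ collapses the expression to $\frac{\sigma^2}{L\delta}\bigl(1 - (1-\delta)^k\bigr) \leq \frac{\sigma^2}{L\delta} = \frac{\sigma^2}{\sqrt{\mu L}}$, which matches the residual term in the corollary.

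Finally, on the left-hand side of Theorem~\ref{thm:acc_sgd} I would discard the nonnegative term $\frac{\gamma_k}{2}\|v_k - x^\star\|^2 = \frac{\mu}{2}\|v_k - x^\star\|^2$, retaining only $\E[F(x_k) - F^\star]$; combining this with the two bounds above yields exactly the claimed inequality. I do not expect a genuine obstacle here, since the result is a direct corollary: the only place requiring care is the geometric-series manipulation of the noise term, where one must track the ratio $(1-\delta)^{-1}$ correctly so that the cancellation with $\Gamma_k$ produces the clean bound $\sigma^2/\sqrt{\mu L}$ rather than a looser constant.
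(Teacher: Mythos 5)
Your proposal is correct and follows exactly the route the paper intends for this corollary (which it leaves implicit, as a direct specialization of Theorem~\ref{thm:acc_sgd}): with $\gamma_0=\mu$ and $\eta_k=1/L$ one gets $\gamma_k=\mu$, $\delta_k=\sqrt{\mu/L}$ constant, hence $\Gamma_k=(1-\sqrt{\mu/L})^k$, and the noise term is absorbed via $\Gamma_k\sum_{t=1}^k 1/\Gamma_t \leq 1/\delta$, which the paper obtains from Lemma~\ref{lemma:simple} and which is equivalent to your geometric-series computation. Dropping the nonnegative term $\frac{\mu}{2}\|v_k-x^\star\|^2$ then gives the stated bound, so there is no gap.
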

We now show that with decreasing step sizes, we obtain an algorithm with optimal complexity similar to~\citep{ghadimi2013optimal,cohen2018acceleration,aybat2019universally}, though we use a two-stages algorithm only.
\begin{corollary}[\bf Prox accelerated SGD with decreasing~$\eta_k$] \label{corollary:acc_sgd2}
Target an accuracy
$\varepsilon$ smaller than $2\sigma^2/\sqrt{\mu L}$. First, use a constant step-size $\eta_k=1/L$ with $\gamma_0=\mu$ within Algorithm~(\ref{eq:opt3})  until $\E[F({x}_k)- F^\star] \leq  2 \sigma^2/\sqrt{\mu L}$.
Then, we restart the optimization procedure with decreasing step-sizes $\eta_k
= \min \left(\frac{1}{L},\frac{4}{\mu (k+2)^2}\right)$. The number of gradient evaluations to achieve $\E[F({x}_k)- F^\star] \leq  \varepsilon$ is upper bounded by
$$O\left( \sqrt{\frac{L}{\mu}} \log\left(\frac{F(x_0)- F^\star}{\varepsilon}\right)\right) + O\left( \frac{\sigma^2}{\mu \varepsilon}\right).$$
\end{corollary}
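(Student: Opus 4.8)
The plan is to run the two-stage argument supported by the two results we already have: during the first stage, Corollary~\ref{corollary:acc_sgd2a} drives the expected gap $\E[F(x_k)-F^\star]$ linearly down to the noise floor $2\sigma^2/\sqrt{\mu L}$, and during the second stage, the general bound of Theorem~\ref{thm:acc_sgd} with decreasing step-sizes converts that floor into the optimal statistical rate. For stage one, since Corollary~\ref{corollary:acc_sgd2a} gives $\E[F(x_k)-F^\star]\le(1-\sqrt{\mu/L})^k(F(x_0)-F^\star+\tfrac{\mu}{2}\|x_0-x^\star\|^2)+\sigma^2/\sqrt{\mu L}$, I would simply require the geometric term to fall below $\sigma^2/\sqrt{\mu L}$; solving for $k$ yields $O(\sqrt{L/\mu}\log\frac{F(x_0)-F^\star}{\sigma^2/\sqrt{\mu L}})$ iterations, and because the target accuracy satisfies $\varepsilon\le 2\sigma^2/\sqrt{\mu L}$, this is dominated by $O(\sqrt{L/\mu}\log\frac{F(x_0)-F^\star}{\varepsilon})$, the first term in the claimed bound.

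For stage two I would first exploit the structural simplification afforded by $\gamma_0=\mu$: the recursion $\gamma_k=(1-\delta_k)\gamma_\kmone+\delta_k\mu$ preserves the fixed point, so $\gamma_k=\mu$ for all $k$ and hence $\delta_k=\sqrt{\eta_k\mu}$ throughout. In the genuinely decreasing regime $\eta_k=\tfrac{4}{\mu(k+2)^2}$ this gives the explicit $\delta_k=\tfrac{2}{k+2}$, whereupon $\Gamma_k=\prod_{t=1}^k(1-\tfrac{2}{t+2})=\prod_{t=1}^k\tfrac{t}{t+2}$ telescopes to $\tfrac{2}{(k+1)(k+2)}=O(1/k^2)$, recovering the accelerated decay. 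I would then plug $\sigma_t\le\sigma$ into Theorem~\ref{thm:acc_sgd}: the noise sum has summand $\tfrac{\eta_t}{\Gamma_t}=\tfrac{2(t+1)}{\mu(t+2)}=O(1/\mu)$, so $\Gamma_k\sum_{t=1}^k\tfrac{\eta_t\sigma_t^2}{\Gamma_t}=O(\sigma^2/(\mu k))$, while the initial contribution $\Gamma_k(F(x_0)-F^\star+\tfrac{\mu}{2}\|x_0-x^\star\|^2)$ is controlled by bounding the restart Lyapunov via strong convexity, $\tfrac{\mu}{2}\|x_0-x^\star\|^2\le F(x_0)-F^\star\le 2\sigma^2/\sqrt{\mu L}$, giving an $O(1/k^2)$ transient. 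Equating each piece with $\varepsilon$ shows the statistical term $O(\sigma^2/(\mu k))$ is binding, so stage two costs $O(\sigma^2/(\mu\varepsilon))$ iterations, and summing the two stages produces the stated complexity.

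The main obstacle is that the schedule $\eta_k=\min(1/L,\tfrac{4}{\mu(k+2)^2})$ is not purely decreasing: the floor $1/L$ is active until $(k+2)^2\ge 4L/\mu$, i.e. for a burn-in of length $k_0=O(\sqrt{L/\mu})$ during which $\delta_k=\sqrt{\mu/L}$ is constant. I would therefore split $\Gamma_k$ as $(1-\sqrt{\mu/L})^{k_0-1}\prod_{t=k_0}^k\tfrac{t}{t+2}$; the constant-step prefix evaluates to $O(1)$, but the shifted telescoping product $\prod_{t=k_0}^k\tfrac{t}{t+2}=\tfrac{k_0(k_0+1)}{(k+1)(k+2)}$ carries an extra factor $k_0^2=O(L/\mu)$ into the transient. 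The delicate point is to verify that this enlarged transient term, of size $O\big(\tfrac{L}{\mu k^2}\cdot\tfrac{\sigma^2}{\sqrt{\mu L}}\big)=O\big(\tfrac{\sigma^2}{\mu k^2}\sqrt{L/\mu}\big)$, still requires only $k=O(\sigma^2/(\mu\varepsilon))$ to reach $\varepsilon$; this is precisely where the hypothesis $\varepsilon\le 2\sigma^2/\sqrt{\mu L}$ is used, since it is equivalent to $\sqrt{\varepsilon}\le\sqrt{2}\,\sigma/(L\mu)^{1/4}$ and hence guarantees the burn-in term is dominated by the statistical term. A careful but routine bookkeeping of the noise sum across the two phases then completes the argument.
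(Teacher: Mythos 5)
Your proposal is correct and follows essentially the same route as the paper's proof: a two-stage restart argument where Corollary~\ref{corollary:acc_sgd2a} handles the constant-step phase, strong convexity bounds the Lyapunov value at the restart point, and Theorem~\ref{thm:acc_sgd} with the schedule $\delta_k=\min\bigl(\sqrt{\mu/L},\tfrac{2}{k+2}\bigr)$ (whose $\Gamma_k$ is exactly the third case of Lemma~\ref{lemma:step}) gives the $O(\sigma^2/\mu k)$ decay after the burn-in of length $k_0=O(\sqrt{L/\mu})$. The only difference is presentational: the paper absorbs the transient into $\tfrac{8\sigma^2}{\mu(k+2)}$ by using $k\geq k_0$ directly, whereas you keep it in $1/k^2$ form and invoke the hypothesis $\varepsilon\leq 2\sigma^2/\sqrt{\mu L}$ to show it is dominated---both correctly identify where that hypothesis is needed.
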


\subsection{Accelerated Algorithm with Variance Reduction}
Next, we show how to build accelerated algorithms with the random-SVRG gradient estimator. 
First, we control the variance of the estimator in a similar manner to Katyusha~\cite{accsvrg}, as stated in the next proposition. Note that the estimator here does not require storing the seed of the random perturbations, unlike in the previous section, and does not rely on an averaging procedure (hence preserving the potential sparsity of the solution when $\psi$ is sparsity-inducing).

 \begin{algorithm}[ht!]
 \caption{Accelerated and robust random-SVRG}\label{alg:acC_svrg}
 \begin{algorithmic}[1]
  \STATE {\bfseries Input:} $x_0$ in $\Real^p$ (initial point); $K$ (number of iterations); $(\eta_k)_{k \geq 0}$ (step sizes); $\gamma_0 \geq \mu$;
  \STATE {\bfseries Initialization:} $\tilde{x}_0 = v_0 = x_0$; $\bar{z}_0= \tildenabla f(x_0)$;
 \FOR{$k=1,\ldots,K$}
      \STATE Find $(\delta_k,\gamma_k)$ such that
      \vsmall
      \begin{displaymath}
          \gamma_k = (1-\delta_k)\gamma_\kmone + \delta_k \mu ~~~~\text{and}~~~\delta_k = \sqrt{\frac{5\eta_k\gamma_k}{3n}};
      \end{displaymath}
      \STATE Choose 
      \vsmall
      \begin{displaymath}
        \!\! y_\kmone \!=\!  \theta_k v_\kmone + (1-\theta_k) \tilde{x}_\kmone~~\text{with}~~ \theta_k \!=\! \frac{3 n \delta_k \!-\! 5 \mu \eta_k}{3 \!-\! 5\mu \eta_k};
      \vsmall
      \end{displaymath}
      \STATE Sample $i_k \sim Q=\{q_1,\ldots,q_n\}$;
      \STATE Compute the gradient estimator with perturbations:
      \vsmall
         \begin{equation*}
            g_k = \frac{1}{q_{i_k} n}\left(\tildenabla f_{i_k} (y_\kmone) - \tildenabla f_{i_k}(\tilde{x}_\kmone) \right) + \bar{z}_\kmone; 
      \vsmall
         \end{equation*}
      \STATE Obtain the new iterate
      \vsmall
      \begin{displaymath}
      x_{k} \leftarrow \text{Prox}_{\eta_k\psi}\left[ y_\kmone - \eta_k g_k\right];
      \vsmall
      \end{displaymath}
      \vsmall
      \STATE Find the minimizer $v_k$ of the estimate sequence $d_k$:
      \vsmall
      \begin{displaymath}
          v_k = \left(1- \frac{\mu \delta_k}{\gamma_k}\right)v_\kmone + \frac{\mu\delta_k}{\gamma_k}y_\kmone + \frac{\delta_k}{\gamma_k \eta_k}(x_k-y_\kmone);
      \vsmall
      \end{displaymath}
      \STATE With probability $1/n$, update the anchor point
      \vsmall
      \begin{displaymath}
          \tilde{x}_k = x_k ~~~~\text{and}~~~~ \bar{z}_k = \tildenabla f(\tilde{x}_k);
      \vsmall
      \end{displaymath}
      \STATE Otherwise, with probability $1-1/n$, keep the anchor point unchanged $\tilde{x}_k = \tilde{x}_\kmone$ and $\bar{z}_k = \bar{z}_\kmone$;
 \ENDFOR
 \STATE {\bfseries Output:} $\tilde{x}_k$.
 \end{algorithmic}
 \end{algorithm}

\begin{proposition}[\bf Variance reduction for random-SVRG]\label{prop:nonu2}
Consider problem~(\ref{eq:prob}) when $f$ is a finite sum of
functions $f=\frac{1}{n}\sum_{i=1}^n f_i$ where each $f_i$ is
$L_i$-smooth with $L_i \geq \mu$. 
Then, the variance of $g_k$ defined in Algorithm~\ref{alg:acC_svrg}
satisfies
\begin{displaymath}
    \sigma_k^2 \leq {2 L_Q}\left[  f(\tilde{x}_\kmone) \!-\! f(y_{\kmone}) \!-\!  g_k^\top (\tilde{x}_\kmone\!-\!y_\kmone)\right]  \!+\! {3 \rho_Q \tilde{\sigma}^2}.
\end{displaymath}
\end{proposition}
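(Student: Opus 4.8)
The plan is to bound $\sigma_k^2 = \E\|g_k - \nabla f(y_\kmone)\|^2$ by cleanly separating the two independent sources of randomness in $g_k$ — the sampled index $i_k \sim Q$ and the stochastic perturbations — and by converting squared gradient differences into Bregman divergences through smoothness and convexity. I would first split the estimator as $g_k = \hat g_k + \xi_k$, where $\hat g_k = \frac{1}{q_{i_k}n}\bigl(\nabla f_{i_k}(y_\kmone) - \nabla f_{i_k}(\tilde x_\kmone)\bigr) + \nabla f(\tilde x_\kmone)$ is the noise-free estimator (unbiased for $\nabla f(y_\kmone)$ given $\Fcal_\kmone$), and $\xi_k$ collects the fresh perturbations in $\tildenabla f_{i_k}(y_\kmone)$ and $\tildenabla f_{i_k}(\tilde x_\kmone)$ together with the anchor noise $\bar\zeta$ carried by $\bar z_\kmone = \nabla f(\tilde x_\kmone) + \bar\zeta$. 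Expanding $\sigma_k^2 = \E\|\hat g_k - \nabla f(y_\kmone)\|^2 + 2\,\E[(\hat g_k - \nabla f(y_\kmone))^\top \xi_k] + \E\|\xi_k\|^2$ then isolates a variance-reduction term, a cross term, and a pure-noise term.

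For the variance-reduction term, since $\hat g_k$ is conditionally unbiased I would use $\E\|\hat g_k - \nabla f(y_\kmone)\|^2 \le \E\bigl\|\frac{1}{q_{i_k}n}(\nabla f_{i_k}(y_\kmone)-\nabla f_{i_k}(\tilde x_\kmone))\bigr\|^2$, dropping the nonnegative $-\|\E[\cdot]\|^2$ term, and average over $i_k$ to obtain $\frac1n\sum_i \frac{1}{q_i n}\|\nabla f_i(y_\kmone)-\nabla f_i(\tilde x_\kmone)\|^2$. The key step is then the co-coercivity inequality for an $L_i$-smooth convex $f_i$, namely $\|\nabla f_i(\tilde x_\kmone)-\nabla f_i(y_\kmone)\|^2 \le 2L_i\bigl[f_i(\tilde x_\kmone)-f_i(y_\kmone)-\nabla f_i(y_\kmone)^\top(\tilde x_\kmone-y_\kmone)\bigr]$. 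Because each resulting Bregman term is nonnegative, I may bound every coefficient $\frac{L_i}{q_i n} \le \max_j \frac{L_j}{q_j n} = L_Q$, and summing gives $2L_Q\bigl[f(\tilde x_\kmone)-f(y_\kmone)-\nabla f(y_\kmone)^\top(\tilde x_\kmone-y_\kmone)\bigr]$. Finally I would replace $\nabla f(y_\kmone)$ by $g_k$ in the inner product, using that $\E[g_k^\top(\tilde x_\kmone-y_\kmone)\mid\Fcal_\kmone] = \nabla f(y_\kmone)^\top(\tilde x_\kmone-y_\kmone)$, so only the expected value matters and the stated form follows.

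The perturbation terms are handled by conditioning on the past $\Fcal_\kmone$ together with the sampled index $i_k$. Given this conditioning the two fresh perturbations are zero-mean, so $\E[\xi_k\mid\Fcal_\kmone,i_k]=\bar\zeta$; the cross term therefore reduces to $2\,\E[(\hat g_k-\nabla f(y_\kmone))^\top\bar\zeta]$, and conditioning further on $\Fcal_\kmone$ alone makes it vanish since $\bar\zeta$ is $\Fcal_\kmone$-measurable and $\hat g_k-\nabla f(y_\kmone)$ is centered. The same cancellation removes the fresh-versus-anchor cross term inside $\E\|\xi_k\|^2$, leaving $\E\bigl\|\frac{1}{q_{i_k}n}(\zeta^y-\zeta^{\tilde x})\bigr\|^2 + \E\|\bar\zeta\|^2$. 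For the first piece I would use $\frac{1}{q_{i_k}^2 n^2} \le \rho_Q\,\frac{1}{q_{i_k}n}$ together with $\E_{i_k}[\frac{1}{q_{i_k}n}]=1$ and $\E\|\zeta^y-\zeta^{\tilde x}\|^2 \le 2\tilde{\sigma}^2$ to get $2\rho_Q\tilde{\sigma}^2$, and for the averaged anchor noise $\E\|\bar\zeta\|^2 \le \tilde{\sigma}^2 \le \rho_Q\tilde{\sigma}^2$ (using $\rho_Q \ge 1$), which together produce the $3\rho_Q\tilde{\sigma}^2$ term, with the factor $3$ transparently decomposing as $2+1$.

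The routine parts are the co-coercivity application and the non-uniform sampling arithmetic that generates $L_Q$ and $\rho_Q$. I expect the main obstacle to be the noise bookkeeping: verifying that the cross terms genuinely cancel, which relies on the fresh perturbations being conditionally zero-mean and on $\bar z_\kmone$ being $\Fcal_\kmone$-measurable. A further delicate point is the final substitution of $\nabla f(y_\kmone)$ by $g_k$, since a noisy anchor makes $g_k$ biased by $\bar\zeta$ conditionally on $\Fcal_\kmone$; I would confirm that the associated inner-product discrepancy is either zero in expectation or absorbed into the subsequent Lyapunov analysis, so that the bound holds in the stated form.
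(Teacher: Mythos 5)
Your proposal is correct and follows essentially the same route as the paper's proof: the same splitting of $g_k$ into the noise-free SVRG estimator plus fresh and anchor perturbations (with the cross terms killed by conditioning on $\Fcal_\kmone$), the same dropping of the $-\|\E[\cdot]\|^2$ term, the same co-coercivity bound $\|\nabla f_i(y_\kmone)-\nabla f_i(\tilde{x}_\kmone)\|^2\le 2L_i\bigl[f_i(\tilde{x}_\kmone)-f_i(y_\kmone)-\nabla f_i(y_\kmone)^\top(\tilde{x}_\kmone-y_\kmone)\bigr]$ with $L_i/(q_in)\le L_Q$, and the same $2\rho_Q\tilde{\sigma}^2+\rho_Q\tilde{\sigma}^2$ noise accounting. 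The only difference is that the final replacement of $\nabla f(y_\kmone)$ by $g_k$, which you rightly flag as delicate because of the conditional bias $\bar{\zeta}_\kmone$, is asserted silently in the paper as an equality of expectations; your fallback (absorption into the subsequent analysis) is indeed how it plays out, since in Theorem~\ref{thrm:acc_svrg} the term linear in $\tilde{g}_k$ cancels pointwise by the construction of $y_\kmone$.
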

Then, we extend Lemma~\ref{lemma:key_acc} to the variance-reduction setting.
\begin{lemma}[\bf Key for accelerated variance-reduction]\label{lemma:key_acc_svrg}~\newline
   Consider the iterates provided by Algorithm~\ref{alg:acC_svrg} and 
call $a_k = 2L_Q \eta_k$ and $\tilde{g}_k=g_k + \psi'(x_k)$. Then,
   \begin{multline*}
   \E[F(x_k)]   \leq  \E\left[a_k F(\tilde{x}_\kmone) + (1- a_k) l_k(y_\kmone)\right] \\
     + \E\left[a_k \tilde{g}_k^\top (y_\kmone \!-\! \tilde{x}_\kmone) \!+\! \left(\frac{L\eta_k^2}{2} \!-\! \eta_k\right)\|\tilde{g}_k\|^2\right] + {3\rho_Q \eta_k\tilde{\sigma}^2}.
   \end{multline*}
\end{lemma}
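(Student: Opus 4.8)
The plan is to combine the two preceding results---the generic acceleration bound of Lemma~\ref{lemma:key_acc} and the variance control of Proposition~\ref{prop:nonu2}---and then convert the outcome into the stated form by a purely algebraic regrouping that terminates with a single application of the convexity of $\psi$. I would start from Lemma~\ref{lemma:key_acc}, which reads $\E[F(x_k)] \leq \E[l_k(y_\kmone)] + (\frac{L\eta_k^2}{2} - \eta_k)\E[\|\tilde{g}_k\|^2] + \eta_k\sigma_k^2$, and into the last term plug the bound $\sigma_k^2 \leq 2L_Q[f(\tilde{x}_\kmone) - f(y_\kmone) - g_k^\top(\tilde{x}_\kmone - y_\kmone)] + 3\rho_Q\tilde{\sigma}^2$ of Proposition~\ref{prop:nonu2}. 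Since $a_k = 2L_Q\eta_k$, this turns $\eta_k\sigma_k^2$ into $a_k[f(\tilde{x}_\kmone) - f(y_\kmone) - g_k^\top(\tilde{x}_\kmone - y_\kmone)] + 3\rho_Q\eta_k\tilde{\sigma}^2$, so the noise term $3\rho_Q\eta_k\tilde{\sigma}^2$ and the curvature term $(\frac{L\eta_k^2}{2}-\eta_k)\E[\|\tilde{g}_k\|^2]$ already sit in their final form, and only the function-value terms remain to be rearranged.

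The substantive step is then to show the pointwise inequality $l_k(y_\kmone) + a_k[f(\tilde{x}_\kmone) - f(y_\kmone) - g_k^\top(\tilde{x}_\kmone - y_\kmone)] \leq a_k F(\tilde{x}_\kmone) + (1-a_k)l_k(y_\kmone) + a_k\tilde{g}_k^\top(y_\kmone - \tilde{x}_\kmone)$. Using the definition $l_k(y_\kmone) = f(y_\kmone) + \psi(x_k) + \psi'(x_k)^\top(y_\kmone - x_k)$ from~(\ref{eq:lk}) (the quadratic and linear $g_k$ terms vanish at $x=y_\kmone$) together with $\tilde{g}_k = g_k + \psi'(x_k)$, I would expand both sides and cancel common terms: the contribution $-a_k g_k^\top(\tilde{x}_\kmone - y_\kmone)$ matches the $g_k$-part of $a_k\tilde{g}_k^\top(y_\kmone - \tilde{x}_\kmone)$, the $f(y_\kmone)$ terms cancel, and after collecting the subgradient terms the gap between the right-hand side and the left-hand side collapses exactly to $a_k\bigl[\psi(\tilde{x}_\kmone) - \psi(x_k) - \psi'(x_k)^\top(\tilde{x}_\kmone - x_k)\bigr]$.

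Finally, since $\psi'(x_k) \in \partial\psi(x_k)$, convexity of $\psi$ gives $\psi(\tilde{x}_\kmone) \geq \psi(x_k) + \psi'(x_k)^\top(\tilde{x}_\kmone - x_k)$, so that residual gap is nonnegative (only $a_k>0$ is needed, which holds as $a_k = 2L_Q\eta_k$); hence the left-hand side is dominated by the right-hand side. Taking expectations and adding back the curvature and noise terms yields precisely the claimed bound. I expect the only real difficulty to be the bookkeeping of the subgradient terms coupling $\psi'(x_k)$ with the three points $y_\kmone$, $x_k$, and $\tilde{x}_\kmone$; conceptually the argument is simply ``insert the variance bound, regroup the function values, and apply one subgradient inequality,'' with no hidden constraint on the step size or on $a_k$ beyond positivity.
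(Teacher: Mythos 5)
Your proof is correct and takes essentially the same route as the paper: both start from Lemma~\ref{lemma:key_acc}, substitute the variance bound of Proposition~\ref{prop:nonu2} so that $\eta_k\sigma_k^2$ becomes $a_k\left[f(\tilde{x}_\kmone)-f(y_\kmone)-g_k^\top(\tilde{x}_\kmone-y_\kmone)\right]+3\rho_Q\eta_k\tilde{\sigma}^2$, and finish with the single subgradient inequality $\psi(\tilde{x}_\kmone)\geq\psi(x_k)+\psi'(x_k)^\top(\tilde{x}_\kmone-x_k)$. Your explicit identification of the residual gap as $a_k\left[\psi(\tilde{x}_\kmone)-\psi(x_k)-\psi'(x_k)^\top(\tilde{x}_\kmone-x_k)\right]$ is simply a rearranged form of the paper's displayed step $l_k(y_\kmone)+f(\tilde{x}_\kmone)-f(y_\kmone)\leq F(\tilde{x}_\kmone)+\psi'(x_k)^\top(y_\kmone-\tilde{x}_\kmone)$, so the two arguments coincide.
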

Then, we may now state our main convergence result.
\begin{theorem}[\bf Convergence of the accelerated SVRG]\label{thrm:acc_svrg}
   Consider the iterates provided by Algorithm~\ref{alg:acC_svrg} and assume that 
$\eta_k \leq \min \left( \frac{1}{3L_Q}, \frac{1}{15 \gamma_k n} \right)$ for all $k \geq 1$. Then, 
\begin{multline*}
   \E\left[F(x_k)-F^\star+ \frac{\gamma_k}{2}\|v_k-x^\star\|^2\right] \\ \leq \Gamma_k \left(F(x_0)-F^\star + \frac{\gamma_0}{2}\|x_0-x^\star\|^2 + \frac{3\rho_Q\tilde{\sigma}^2 }{n}\sum_{t=1}^k \frac{\eta_t}{\Gamma_t} \right). 
\end{multline*}
\end{theorem}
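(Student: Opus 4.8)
The plan is to follow the roadmap announced just before Lemma~\ref{lemma:key_acc}: prove by induction on $k$ the ``asymptotic lower bound'' inequality $\E[F(x_k)]\le\E[d_k^\star]+\xi_k$, with the noise term $\xi_k\defin\Gamma_k\frac{3\rho_Q\tilde\sigma^2}{n}\sum_{t=1}^k\frac{\eta_t}{\Gamma_t}$, and then read off the theorem. Indeed, writing $d_k$ in canonical form $d_k(x)=d_k^\star+\frac{\gamma_k}{2}\|x-v_k\|^2$ turns the left-hand side of the statement into $\E[F(x_k)-d_k^\star]+\bigl(\E[d_k(x^\star)]-F^\star\bigr)$. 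The estimate-sequence property~(\ref{eq:est}) bounds the second summand by $\Gamma_k(d_0(x^\star)-F^\star)$, and the choice $d_0^\star=F(x_0)$ gives $d_0(x^\star)-F^\star=F(x_0)-F^\star+\frac{\gamma_0}{2}\|x_0-x^\star\|^2$. Since $\xi_k$ is exactly the accumulated noise on the right-hand side of the theorem, it remains only to control $\E[F(x_k)-d_k^\star]$.

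I would assemble two ingredients. First, I make the estimate-sequence recursion explicit: substituting the canonical forms of $d_{k-1}$ and $d_k$ into the definition~(\ref{eq:surrogate1}) and writing $l_k(x)=l_k(y_{k-1})+\tilde g_k^\top(x-y_{k-1})+\frac{\mu}{2}\|x-y_{k-1}\|^2$ with $\tilde g_k=g_k+\psi'(x_k)$, the minimizer is the $v_k$ of Algorithm~\ref{alg:acC_svrg}, and a direct computation gives, writing $b_k=(1-\delta_k)\gamma_{k-1}\delta_k/\gamma_k$ and $w=y_{k-1}-v_{k-1}$,
\begin{multline*}
d_k^\star=(1-\delta_k)d_{k-1}^\star+\delta_k\, l_k(y_{k-1}) \\ -\tfrac{\delta_k^2}{2\gamma_k}\|\tilde g_k\|^2-b_k\,\tilde g_k^\top w+\tfrac{\mu b_k}{2}\|w\|^2.
\end{multline*}
Second, I bound $\E[F(x_k)]$ with Lemma~\ref{lemma:key_acc_svrg}, which contributes $a_kF(\tilde x_{k-1})$, $(1-a_k)l_k(y_{k-1})$, the cross term $a_k\tilde g_k^\top(y_{k-1}-\tilde x_{k-1})$, a non-positive multiple of $\|\tilde g_k\|^2$, and the injected noise $3\rho_Q\eta_k\tilde\sigma^2$ (recall $a_k=2L_Q\eta_k$).

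Subtracting the display from the Lemma and targeting the one-step inequality $\E[F(x_k)-d_k^\star]\le(1-\delta_k)\E[F(x_{k-1})-d_{k-1}^\star]+\frac{3\rho_Q\eta_k\tilde\sigma^2}{n}$, three structural facts drive the cancellations. The definition $y_{k-1}=\theta_k v_{k-1}+(1-\theta_k)\tilde x_{k-1}$ makes both $y_{k-1}-\tilde x_{k-1}$ and $w$ collinear with $\tilde x_{k-1}-v_{k-1}$, so the two cross terms merge into a single multiple of $\tilde g_k^\top(\tilde x_{k-1}-v_{k-1})$; the value of $\theta_k$ is chosen so that, after merging, this term is absorbed by the negative $\|\tilde g_k\|^2$ contribution and the $-\frac{\mu b_k}{2}\|w\|^2$ term arising in $F(x_k)-d_k^\star$, via Young's inequality. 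The caps $\eta_k\le 1/(3L_Q)$ (with $L\le L_Q$) and $\eta_k\le 1/(15\gamma_k n)$ guarantee the correct signs: the former makes $a_k\le 2/3$ and the net coefficient of $\|\tilde g_k\|^2$ negative, while the latter, together with $\delta_k=\sqrt{5\eta_k\gamma_k/(3n)}$, forces $\delta_k\le 1/(3n)$, which keeps $\theta_k\in[0,1]$ and bounds the residual quadratics. Finally, the positive term $a_kF(\tilde x_{k-1})$ is disposed of through the anchor update rule $\E[F(\tilde x_k)\mid\mathcal F_{k-1},x_k]=\frac1n F(x_k)+(1-\frac1n)F(\tilde x_{k-1})$, which forces the induction to be carried jointly for the iterate $x_k$ and the anchor $\tilde x_k$, both compared against $d_k^\star$.

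The main obstacle I anticipate is the noise accounting that produces the crucial factor $1/n$: Lemma~\ref{lemma:key_acc_svrg} injects $3\rho_Q\eta_k\tilde\sigma^2$ per step, whereas the budget available in $\xi_k$ is only $\frac{3\rho_Q\eta_k\tilde\sigma^2}{n}$. I expect the gain to come from the estimate-sequence term $-\frac{\delta_k^2}{2\gamma_k}\|\tilde g_k\|^2$, whose coefficient equals $\frac{5\eta_k}{6n}$ under the scaling $\delta_k^2=5\eta_k\gamma_k/(3n)$ and therefore carries the $1/n$, while the $O(\eta_k)$ part of the injected variance must be shown to cancel against the negative net $\|\tilde g_k\|^2$ coefficient. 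Performing this decomposition of $\E[\|\tilde g_k\|^2]$ into mean and variance parts without double-counting the variance already spent on the $a_kF(\tilde x_{k-1})$ term, and checking that the surviving constant $\tfrac52$ fits under the budget constant $3$, is the delicate point. Once the one-step inequality is established, dividing by $\Gamma_k$ and telescoping from the base case $F(x_0)-d_0^\star=0$ closes the induction and, through the reduction of the first paragraph, proves the theorem.
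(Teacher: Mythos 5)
Your plan correctly identifies several ingredients that the paper also uses (the expansion of $d_k^\star$ via the estimate-sequence recursion, the vanishing of the linear term in $\tilde g_k$ through the choice of $\theta_k$, the sign conditions coming from the two step-size caps), but it has a genuine gap exactly at the point you flag as delicate: the factor $1/n$ in the noise. Your mechanism—absorbing the full per-step injection $3\rho_Q\eta_k\tilde\sigma^2$ from Lemma~\ref{lemma:key_acc_svrg} into the negative net coefficient of $\E\|\tilde g_k\|^2$—cannot work, because $\E\|\tilde g_k\|^2$ admits no lower bound proportional to $\tilde\sigma^2$. Indeed $\tilde g_k=\frac{1}{\eta_k}(y_{k-1}-x_k)$ is a prox-gradient mapping: when $\psi$ is, say, the indicator of a very small convex set, $x_k$ is essentially deterministic and $\E\|\tilde g_k\|^2$ can be arbitrarily small while $\tilde\sigma^2$ stays fixed; moreover the perturbation variance is only bounded \emph{above} by $\tilde\sigma^2$, so no term $-c\,\tilde\sigma^2$ can ever be extracted from $-c'\E\|\tilde g_k\|^2$. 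An additive exogenous variance term cannot be cancelled by a quadratic in the iterates.

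The paper's resolution is structural rather than analytic, and it is precisely the point your induction hypothesis misses: the induction is carried on the \emph{anchor}, $\E[d_k^\star]\ge\E[F(\tilde x_k)]-\xi_k$, not on the iterate $x_k$ (the $x_k$ in the theorem statement is the algorithm's output, i.e.\ the anchor). Using $\delta_k\le\frac{1-a_k}{n}$ (guaranteed by the caps), one splits $(1-\delta_k)\E[F(\tilde x_{k-1})]=\bigl(1-\frac{1-a_k}{n}\bigr)\E[F(\tilde x_{k-1})]+\bigl(\frac{1-a_k}{n}-\delta_k\bigr)\E[F(\tilde x_{k-1})]$, lower-bounds the second piece by $l_k(y_{k-1})$ plus a linear term, and only then invokes Lemma~\ref{lemma:key_acc_svrg}—\emph{with weight $1/n$}—to replace $\frac{1-a_k}{n}l_k(y_{k-1})$ by $\frac1n F(x_k)$ plus remainders. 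Because the lemma enters scaled by $1/n$ (matching the probability of an anchor update, so that $\bigl(1-\frac1n\bigr)\E[F(\tilde x_{k-1})]+\frac1n\E[F(x_k)]=\E[F(\tilde x_k)]$), its noise contributes $\frac{3\rho_Q\eta_k\tilde\sigma^2}{n}$ with no cancellation needed, and the $\|\tilde g_k\|^2$ coefficient condition becomes $\frac1n\bigl(\eta_k-\frac{L\eta_k^2}{2}\bigr)\ge\frac{\delta_k^2}{2\gamma_k}=\frac{5\eta_k}{6n}$, which is exactly what $\eta_k\le\frac{1}{3L_Q}$ and $\delta_k=\sqrt{5\eta_k\gamma_k/(3n)}$ deliver. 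Your closing remark that the induction must be run ``jointly for the iterate and the anchor'' gestures at this, but as stated your target recursion $\E[F(x_k)-d_k^\star]\le(1-\delta_k)\E[F(x_{k-1})-d_{k-1}^\star]+\frac{3\rho_Q\eta_k\tilde\sigma^2}{n}$ compares against $F(x_{k-1})$, which Lemma~\ref{lemma:key_acc_svrg} never produces; once you repair it, the iterate drops out entirely and you recover the paper's anchor-only induction.
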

We may now derive convergence rates of our accelerated SVRG algorithm under various settings.  
\begin{corollary}[\bf Accelerated prox SVRG with constant $\eta$]~\label{corollary:accsvrg_constant}\newline
With $\eta_k =  \min \left( \frac{1}{3L_Q}, \frac{1}{15 \mu n} \right)$ and $\gamma_0 = \mu$, 
the iterates produced by Algorithm~\ref{alg:acC_svrg} satisfy

~$\bullet$ if $\frac{1}{3L_Q} \leq \frac{1}{15 \mu n}$,
\begin{equation*}
   \E\left[F(x_k)-F^\star\right] \leq \left( 1- \sqrt{\frac{5\mu}{9L_Qn}}\right)^k T_0 + \frac{3\rho_Q \tilde{\sigma}^2}{\sqrt{5 \mu L_Q n}};
\end{equation*}
~$\bullet$ otherwise,
\begin{equation*}
   \E\left[F(x_k)-F^\star\right] \leq \left( 1- \frac{1}{3n} \right)^k T_0 + \frac{3\rho_Q \tilde{\sigma}^2}{5\mu n},
\end{equation*}
with $T_0 = F(x_0)-F^\star + \frac{\mu}{2}\|x_0-x^\star\|^2$.
\end{corollary}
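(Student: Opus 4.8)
The plan is to treat this corollary as a direct specialization of Theorem~\ref{thrm:acc_svrg}, so that the only real work is to evaluate the constants once the step size is frozen. First I would observe that the choice $\gamma_0=\mu$ makes the recursion $\gamma_k=(1-\delta_k)\gamma_\kmone+\delta_k\mu$ collapse to $\gamma_k=\mu$ for every $k$; consequently $\delta_k=\sqrt{5\eta_k\gamma_k/(3n)}$ becomes the constant $\delta=\sqrt{5\eta\mu/(3n)}$ and $\Gamma_k=(1-\delta)^k$. With $\gamma_k\equiv\mu$, the step-size requirement of Theorem~\ref{thrm:acc_svrg}, namely $\eta_k\leq\min(1/(3L_Q),1/(15\gamma_k n))$, reads exactly $\eta\leq\min(1/(3L_Q),1/(15\mu n))$ and is satisfied with equality by our choice, so the theorem applies.

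Next I would discard the nonnegative term $\frac{\gamma_k}{2}\|v_k-x^\star\|^2$ on the left-hand side of Theorem~\ref{thrm:acc_svrg}, which leaves $\E[F(x_k)-F^\star]\leq\Gamma_k T_0+\Gamma_k\frac{3\rho_Q\tilde\sigma^2}{n}\sum_{t=1}^k\eta_t/\Gamma_t$, where $T_0=F(x_0)-F^\star+\frac{\mu}{2}\|x_0-x^\star\|^2$ is precisely the bracket evaluated at $\gamma_0=\mu$. The contraction factor $\Gamma_k=(1-\delta)^k$ is already in the advertised form once $\delta$ is identified, so the remaining task is to bound the noise sum.

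For the noise term I would sum the geometric series: with $\eta_t=\eta$ and $\Gamma_t=(1-\delta)^t$, one has $\sum_{t=1}^k\eta_t/\Gamma_t=\eta\sum_{t=1}^k(1-\delta)^{-t}=\frac{\eta}{\delta}\bigl((1-\delta)^{-k}-1\bigr)$, hence $\Gamma_k\sum_{t=1}^k\eta_t/\Gamma_t=\frac{\eta}{\delta}\bigl(1-(1-\delta)^k\bigr)\leq\frac{\eta}{\delta}$. Using $\delta=\sqrt{5\eta\mu/(3n)}$ gives the clean identity $\frac{1}{n}\cdot\frac{\eta}{\delta}=\sqrt{3\eta/(5\mu n)}$, so the whole noise contribution is at most $3\rho_Q\tilde\sigma^2\sqrt{3\eta/(5\mu n)}$.

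Finally I would plug in the two branches of the minimum. When $1/(3L_Q)\leq1/(15\mu n)$ we have $\eta=1/(3L_Q)$, which yields $\delta=\sqrt{5\mu/(9L_Q n)}$ and noise level $3\rho_Q\tilde\sigma^2/\sqrt{5\mu L_Q n}$; otherwise $\eta=1/(15\mu n)$, giving $\delta=1/(3n)$ and noise level $3\rho_Q\tilde\sigma^2/(5\mu n)$, which are exactly the two stated bounds. I do not expect any genuine obstacle here: the statement is a computation built on top of the main theorem, and the only point requiring a little care is the geometric-sum estimate $\Gamma_k\sum_t\eta_t/\Gamma_t\leq\eta/\delta$ together with the algebraic simplification of $\eta/\delta$ in each regime, which must be tracked so that the constants $5$, $9$, and $3$ come out correctly.
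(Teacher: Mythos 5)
Your proposal is correct and follows exactly the route the paper intends: specialize Theorem~\ref{thrm:acc_svrg} with $\gamma_0=\mu$ (so $\gamma_k\equiv\mu$ and $\delta_k\equiv\delta=\sqrt{5\eta\mu/(3n)}$), drop the nonnegative $\frac{\gamma_k}{2}\|v_k-x^\star\|^2$ term, and bound the noise via $\Gamma_k\sum_{t=1}^k \eta/\Gamma_t \leq \eta/\delta = \sqrt{3n\eta/(5\mu)}$, which is precisely the inequality the paper invokes in the text following the corollary. Your explicit geometric-series evaluation is just an unrolled form of the paper's Lemma~\ref{lemma:simple}, and the algebra in both branches ($\delta=\sqrt{5\mu/(9L_Qn)}$ with noise $3\rho_Q\tilde{\sigma}^2/\sqrt{5\mu L_Q n}$, and $\delta=1/(3n)$ with noise $3\rho_Q\tilde{\sigma}^2/(5\mu n)$) checks out.
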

The corollary uses $\Gamma_k\sum_{t=1}^k \eta/\Gamma_t \leq \eta/\delta = \sqrt{3 n \eta/5\mu}$ and thus 
the algorithm converges linearly to an area of radius $3\rho_Q \tilde{\sigma}^2 \sqrt{3 \eta/ 5 \mu n}  = O\left( \rho_Q{\tilde{\sigma}^2}\min \left( \frac{1}{\sqrt{n \mu L_Q}}, \frac{1}{\mu n} \right) \right)$, where as before, $\rho_Q=1$ if $Q$ is uniform. When $\tilde{\sigma}^2=0$, the algorithm achieves the optimal complexity for finite sums~\cite{arjevani2016dimension}. 
Interestingly, we see that here non-uniform sampling may hurt the convergence guarantees in some situations. Whenever $5\mu n > \max_i L_i$, the optimal sampling strategy is indeed the uniform one.
Next, we show how to obtain a converging algorithm.
\begin{corollary}[\bf Accelerated prox SVRG - decreasing~$\eta_k$]\label{corollary:acc_svrg}~\newline
Target an accuracy
$\varepsilon$ smaller than $B=3\rho_Q \tilde{\sigma}^2\sqrt{\eta/\mu}$ with the same step size $\eta$ as in the previous corollary.
First, use such a constant step-size strategy $\eta_k=\eta$ with $\gamma_0=\mu$ within Algorithm~\ref{alg:acC_svrg}, 
 until $\E[F({x}_k)- F^\star] \leq B$.
Then, restart the optimization procedure with decreasing step-sizes $\eta_k
= \min \left(\eta ,\frac{12 n}{5 \mu (k+2)^2}\right)$. The number of gradient evaluations to achieve $\E[F({x}_k)- F^\star] \leq  \varepsilon$ is upper bounded by
$$O\left( \left(n + \sqrt{\frac{{nL_Q}}{\mu}}\right) \log\left(\frac{F(x_0)- F^\star}{\varepsilon}\right)\right) + O\left( \frac{\rho_Q \sigma^2}{\mu \varepsilon}\right).$$
\end{corollary}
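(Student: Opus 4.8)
The plan is to treat the two phases separately and then add the iteration counts. The first phase runs with the constant step size $\eta$ of Corollary~\ref{corollary:accsvrg_constant}, which guarantees linear convergence to a noise-dominated region. Since the threshold $B = 3\rho_Q\tilde{\sigma}^2\sqrt{\eta/\mu}$ can be checked to sit above the asymptotic radius of that corollary in both regimes of $\eta = \min(1/(3L_Q),1/(15\mu n))$ (the ratio of $B$ to the radius is $\sqrt{5n/3}\geq 1$ in each case), the stopping criterion $\E[F(x_k)-F^\star]\leq B$ is met after $O((n+\sqrt{nL_Q/\mu})\log((F(x_0)-F^\star)/B))$ iterations. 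Because $\varepsilon\leq B$, this is dominated by the first term of the announced complexity, so the entire logarithmic cost is paid in phase one.

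For the second phase I would restart Theorem~\ref{thrm:acc_svrg} with $x_0$ replaced by the phase-one output, so that $F(x_0)-F^\star\leq B$ and, by strong convexity, $F(x_0)-F^\star+\frac{\mu}{2}\|x_0-x^\star\|^2\leq 2B$. The key computation is that $\gamma_0=\mu$ forces $\gamma_k=\mu$ for all $k$ through $\gamma_k=(1-\delta_k)\gamma_\kmone+\delta_k\mu$, and then on the decreasing branch $\eta_k=\frac{12n}{5\mu(k+2)^2}$ the defining relation $\delta_k=\sqrt{5\eta_k\gamma_k/(3n)}$ collapses to the clean value $\delta_k=\frac{2}{k+2}$. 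This is the linchpin: it yields the accelerated decay $\Gamma_k=\prod_t(1-\delta_t)=O(1/k^2)$ via a telescoping product.

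It then remains to bound the two terms of Theorem~\ref{thrm:acc_svrg}. The initial-condition term is $\Gamma_k\cdot 2B=O(B/k^2)$. For the noise term I would show that $\eta_t/\Gamma_t=O(n/\mu)$ on the decreasing branch, so that $\frac{3\rho_Q\tilde{\sigma}^2}{n}\sum_{t=1}^k\eta_t/\Gamma_t=O(\rho_Q\tilde{\sigma}^2 k/\mu)$ and, after multiplication by $\Gamma_k$, the noise contribution decays only as $O(\rho_Q\tilde{\sigma}^2/(\mu k))$. Since this $O(1/k)$ term dominates the $O(1/k^2)$ initial term for small $\varepsilon$, reaching $\E[F(x_k)-F^\star]\leq\varepsilon$ requires $O(\rho_Q\tilde{\sigma}^2/(\mu\varepsilon))$ iterations in phase two. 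Finally, one iteration of Algorithm~\ref{alg:acC_svrg} costs a constant number of gradient evaluations in expectation, since the full-gradient refresh $\bar{z}_k=\tildenabla f(\tilde{x}_k)$ occurs only with probability $1/n$, contributing $\frac{1}{n}\cdot n=O(1)$; hence iteration counts and gradient-evaluation counts agree up to a constant, giving the stated bound.

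The main obstacle is the bookkeeping across the transition between the constant and decreasing branches of the schedule. Before the crossover, at iteration $k_0=O(n+\sqrt{nL_Q/\mu})$ where $\eta=\frac{12n}{5\mu(k_0+2)^2}$, the step $\delta_t$ is constant and $\Gamma_t$ decays geometrically, so the true value is $\Gamma_k=(1-\delta)^{k_0}\frac{(k_0+1)(k_0+2)}{(k+1)(k+2)}$ rather than the idealized $\frac{2}{(k+1)(k+2)}$. I must check that the geometric prefactor cancels in the ratios $\eta_t/\Gamma_t$ and, crucially, that the portion of $\sum_t\eta_t/\Gamma_t$ accumulated during the constant branch contributes only a lower-order $O(\rho_Q\tilde{\sigma}^2 k_0/(\mu k^2))$ term, so that the clean $O(1/k)$ noise rate survives. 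Verifying the step-size schedule obeys the hypothesis $\eta_k\leq\min(1/(3L_Q),1/(15\mu n))$ of Theorem~\ref{thrm:acc_svrg} throughout, and that the crossover cost $k_0$ is absorbed into the already-present first term, then completes the argument.
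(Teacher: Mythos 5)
Your plan follows the paper's proof almost step for step: the same two-phase structure, the same restart with $F(x_0)-F^\star\le B$ hence $F(x_0)-F^\star+\frac{\mu}{2}\|x_0-x^\star\|^2\le 2B$, the same observation that $\gamma_k=\mu$ and $\delta_k=2/(k+2)$ on the decreasing branch, and the same machinery (Theorem~\ref{thrm:acc_svrg} plus the piecewise $\Gamma_k$ of Lemma~\ref{lemma:step}). Your treatment of the noise sum is sound, including the part you flag as needing verification: for $t\ge k_0$ the prefactor $\Gamma_{k_0-1}k_0(k_0+1)$ sits in the denominator of $\eta_t/\Gamma_t$ and in the numerator of $\Gamma_k$, so it cancels in $\Gamma_k\sum_t\eta_t/\Gamma_t$, and the constant-branch portion is indeed $O(\rho_Q\tilde{\sigma}^2k_0/(\mu k^2))$.

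The genuine gap is the initial-condition term. You bound it by $\Gamma_k\cdot 2B=O(B/k^2)$ and argue it is dominated by the $O(\rho_Q\tilde{\sigma}^2/(\mu k))$ noise term; but this uses the idealized $\Gamma_k=O(1/k^2)$, whereas the true value is $\Gamma_k=\Gamma_{k_0-1}\,k_0(k_0+1)/((k+1)(k+2))=\Theta(k_0^2/k^2)$, since $\Gamma_{k_0-1}=(1-\delta)^{k_0-1}\approx e^{-2}$ is a constant (here $k_0\approx 2/\delta=\Theta(n+\sqrt{nL_Q/\mu})$). Unlike the noise sum, there is nothing here for the prefactor to cancel against, so the initial term is really $\Theta(k_0^2B/k^2)$. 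The paper closes exactly this hole with one extra ingredient you are missing: the coupling $k_0B\le 6\rho_Q\tilde{\sigma}^2/\mu$, which turns $k_0^2B/k^2$ into $O(\rho_Q\tilde{\sigma}^2k_0/(\mu k^2))=O(\rho_Q\tilde{\sigma}^2/(\mu k))$ for $k\ge k_0$. Note that this coupling holds only for the threshold the paper's proof actually uses, $B=3\rho_Q\tilde{\sigma}^2\sqrt{\eta/(\mu n)}$ (a constant multiple of the noise radius in Corollary~\ref{corollary:accsvrg_constant}); the value $B=3\rho_Q\tilde{\sigma}^2\sqrt{\eta/\mu}$ in the statement, which you adopted (your phase-one ratio $\sqrt{5n/3}$ confirms this), appears to be a typo, and with it one only gets $k_0B=O(\sqrt{n}\,\rho_Q\tilde{\sigma}^2/\mu)$. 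Then the domination argument fails: for instance at $\varepsilon\approx B/\sqrt{n}$ the initial term forces $k\gtrsim k_0\,n^{1/4}$ phase-two iterations, exceeding both terms of the claimed complexity. The repair is to take $B$ as in the paper's proof and add the step $k_0B=O(\rho_Q\tilde{\sigma}^2/\mu)$; the rest of your argument then goes through.
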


\section{Experiments}\label{sec:exp}
Following~\citet{smiso,zheng2018lightweight} we consider logistic regression
with DropOut~\citep{srivastava_dropout:_2014}, which consists of randomly
setting to zero each vector entry with probability $\delta$, leading to the problem
\begin{equation}
    \min_{x \in \Real^p} \frac{1}{n}\sum_{i=1}^n \E_\rho\left[ \log( 1+ e^{-b_i (\rho \circ a_i)^\top x)}\right]  + \frac{\lambda}{2}\|x\|^2, \label{eq:expectation}
\end{equation}
where $\rho$ is a vector in $\{0,1\}^p$ with i.i.d. Bernoulli entries, $\circ$ denotes the elementwise multiplication between two vectors,
the $a_i$'s are vectors in $\Real^p$ and $b_i$ are labels in $\{-1,+1\}$.
Since we normalize the vectors $a_i$, the corresponding functions $f_i$ are $L$-smooth with $L=0.25$.
We consider two DropOut regimes, with $\delta$ in $\{0.01,0.1\}$, representing small and medium perturbations.
The parameter~$\lambda$ acts as a lower bound on~$\mu$ and we consider
$\lambda=1/10n$, which is of the order of the smallest
value that one would try when doing parameter search.
We use three data sets~\textrm{alpha}, \textrm{ckn-cifar}, and~\textrm{gene} from different nature, which
are presented in the appendix, along with other experimental details.

\begin{figure}[t!]
\centering
\includegraphics[width=0.495\linewidth]{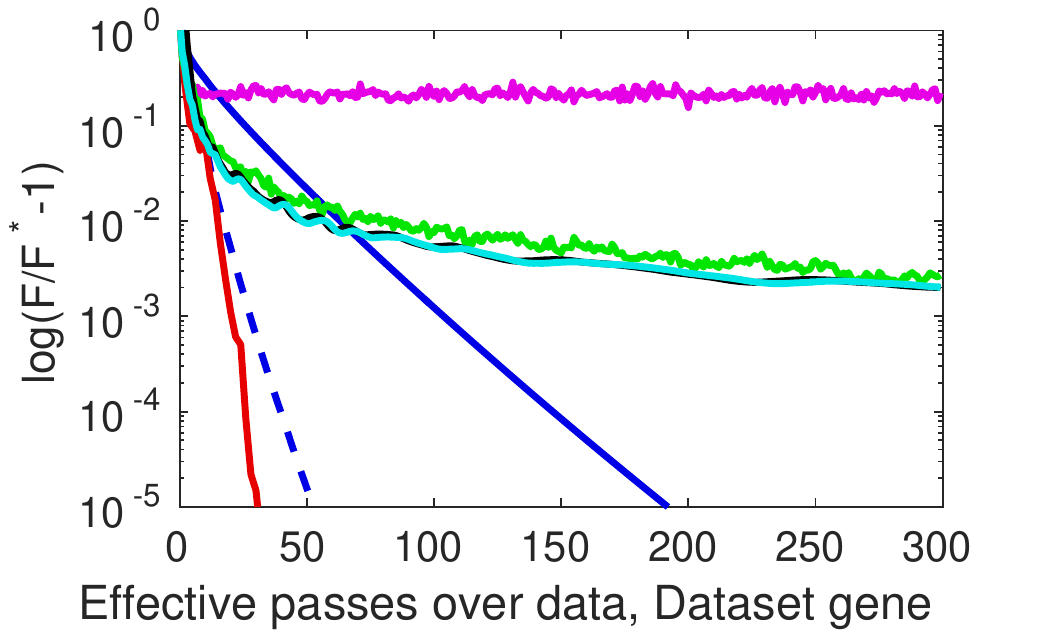} 
\includegraphics[width=0.495\linewidth]{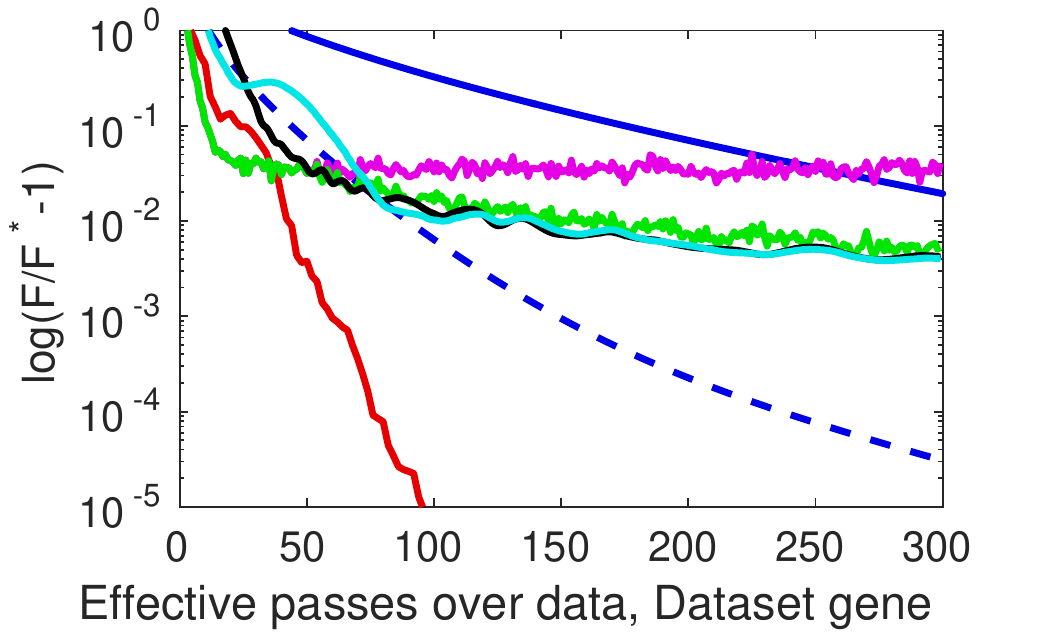}  \\
\includegraphics[width=0.495\linewidth]{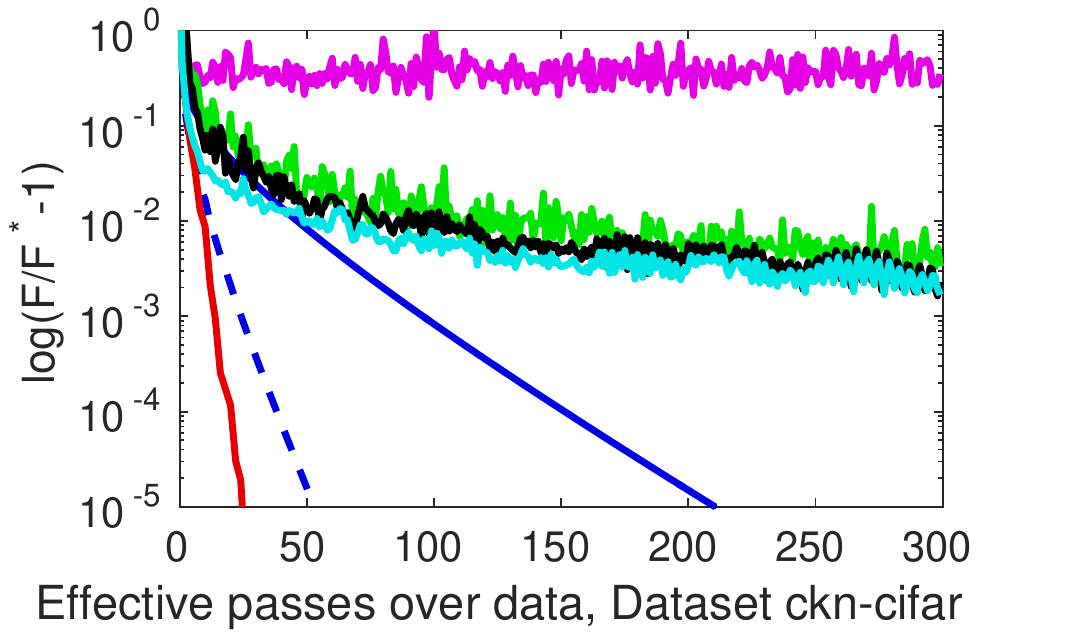} 
\includegraphics[width=0.495\linewidth]{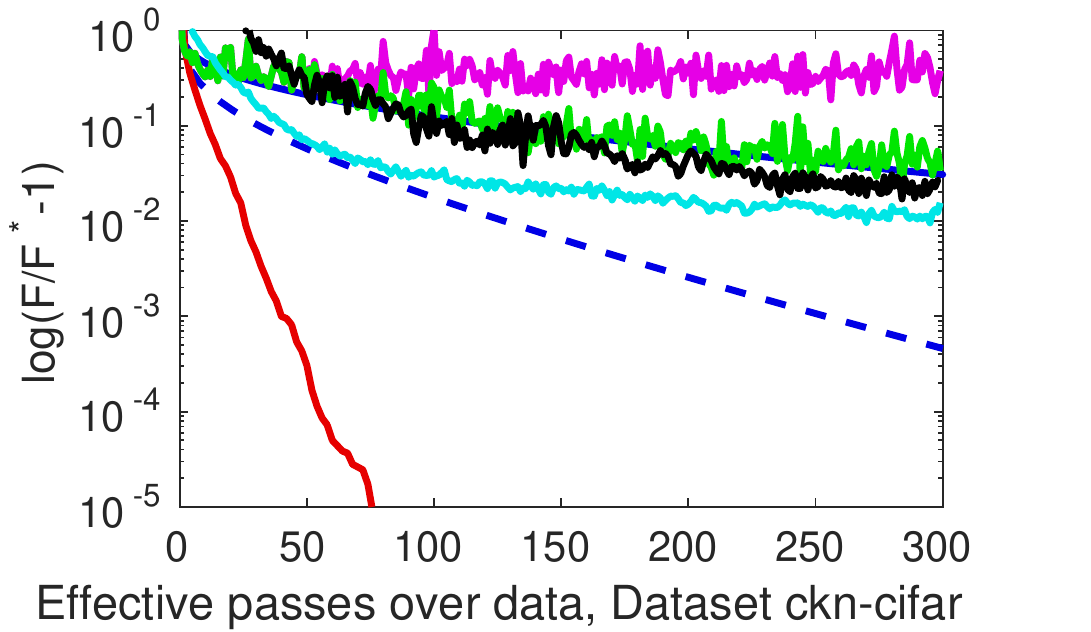}  \\
\includegraphics[width=0.495\linewidth]{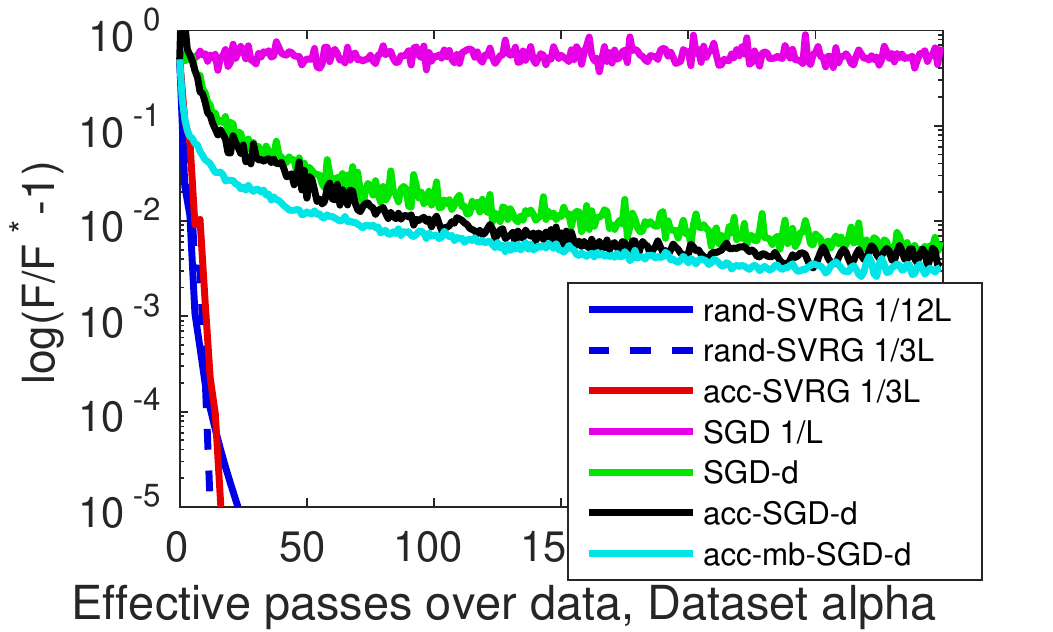} 
\includegraphics[width=0.495\linewidth]{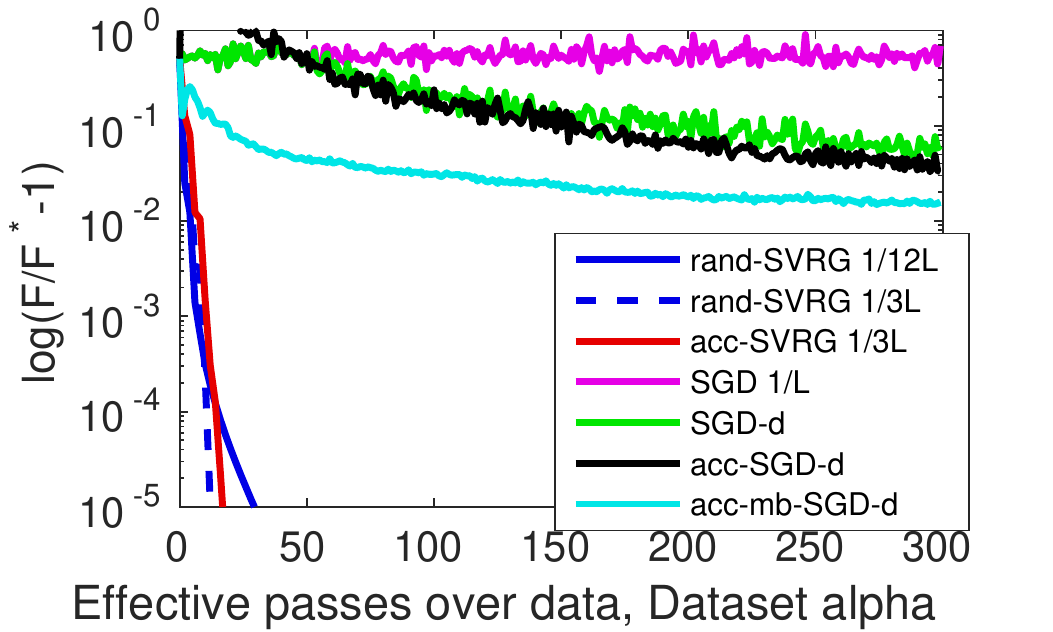} 
\caption{Objective function value on a logarithmic scale with $\lambda=1/10n$ (left) and $\lambda=1/100n$ (right), with no DropOut.}\label{fig:nodropout}
\end{figure}

We consider various methods such as \textrm{SGD}, \textrm{rand-SVRG}, \textrm{acc-SGD} (accelerated SGD), and \textrm{acc-SVRG} (accelerated SVRG).
We use them always with their theoretical step size, except \textrm{rand-SVRG}, which we evaluate with $\eta=1/3L$ in order to obtain a fair comparison with \textrm{acc-SVRG}. When using the decreasing step size strategy, we add the suffix \textrm{-d} to the method's name, and we consider also a minibatch variant of~\textrm{acc-SGD}, denoted by~\textrm{acc-mb-SGD} with minibatch size $b=\sqrt{L/\mu}$.
We also use the initial step size $1/3L$ for \textrm{rand-SVRG-d} since it performs better in practice.
The methods do not use averaging, since it empirically slows down convergence when used from the first iteration;  knowing when to start averaging is indeed not easy and requires heuristics which we do not evaluate here.

\vs
\paragraph{Experiments without perturbation (Figure~\ref{fig:nodropout}).} ~\\
In such a regime, we obtain the following conclusions: \\
~$\bullet$
Acceleration for SVRG is effective on \textrm{gene}
and~\textrm{ckn-cifar} except on~\textrm{alpha}, where all SVRG-like methods
perform already well. This may be due to hidden strong convexity leading to a regime where the complexity is $O(n \log(1/\varepsilon))$, which is independent of the condition number.\\
~$\bullet$ Acceleration is more effective when the problem is badly conditioned---that is, when $\lambda=1/100n$.\\
~$\bullet$ \textrm{acc-mb-SGD-d} performs best among SGD methods and is competitive with \textrm{rand-SVRG} in the low precision regime.

\vs
\paragraph{Experiments with perturbations (Figure~\ref{fig:dropout}).}

\begin{figure}[t!]
\centering
\includegraphics[width=0.495\linewidth]{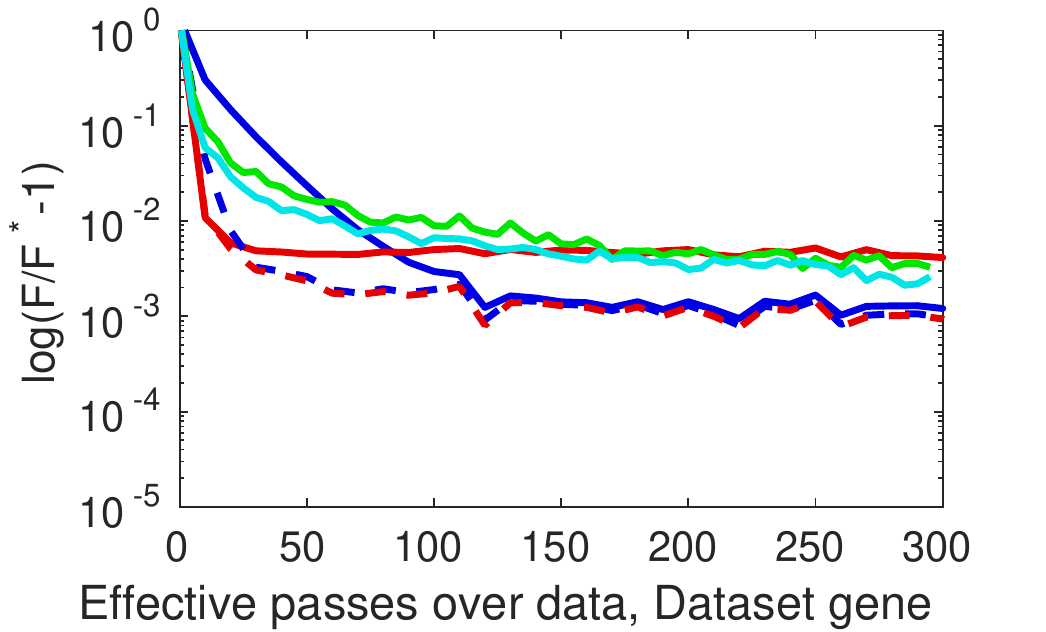} 
\includegraphics[width=0.495\linewidth]{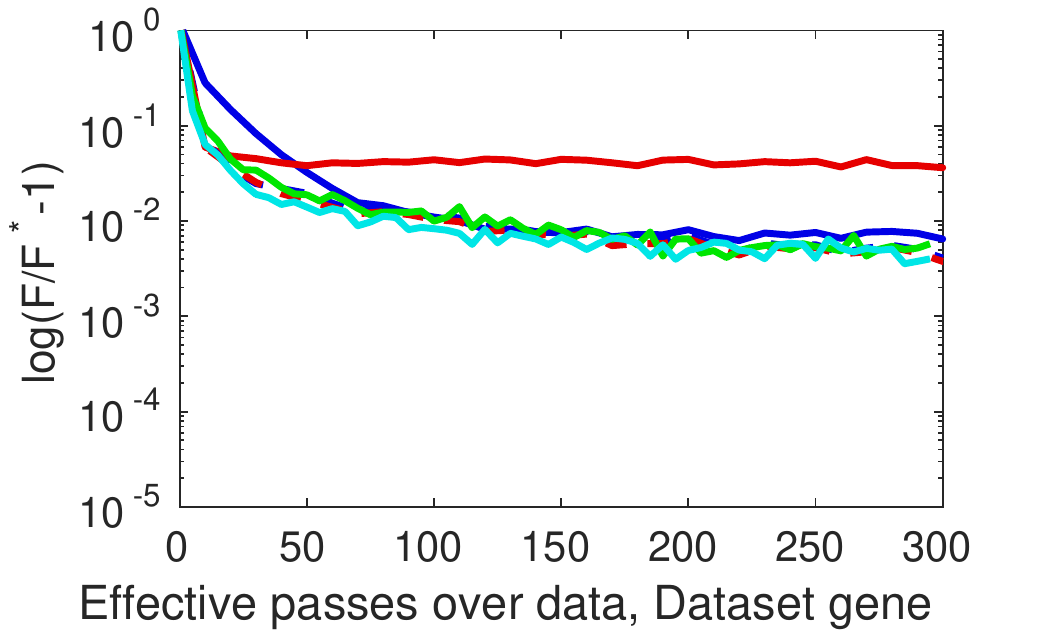}  \\
\includegraphics[width=0.495\linewidth]{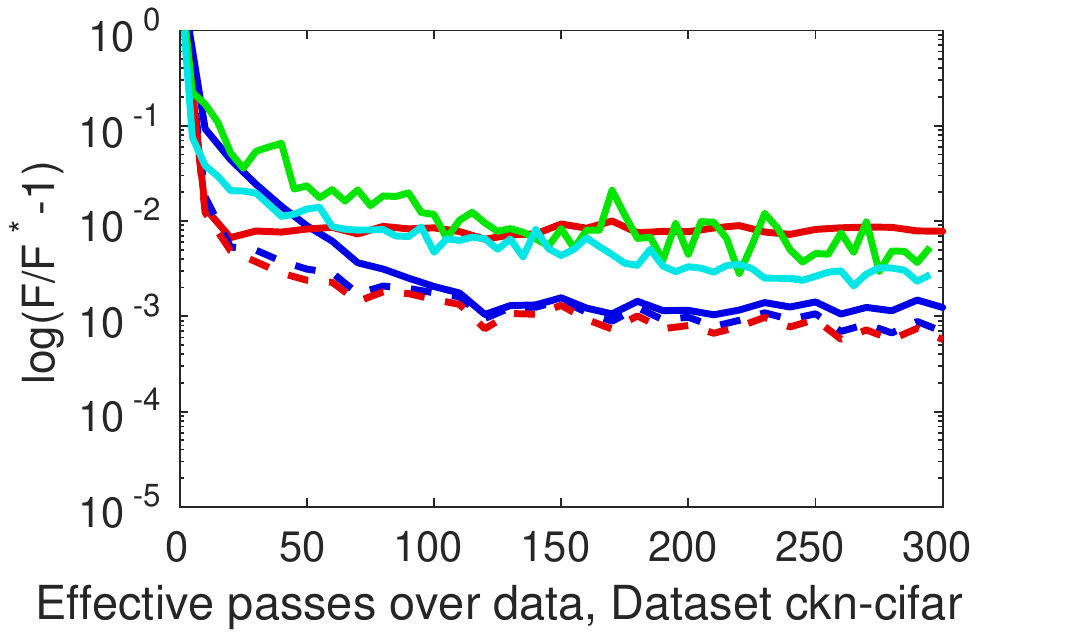} 
\includegraphics[width=0.495\linewidth]{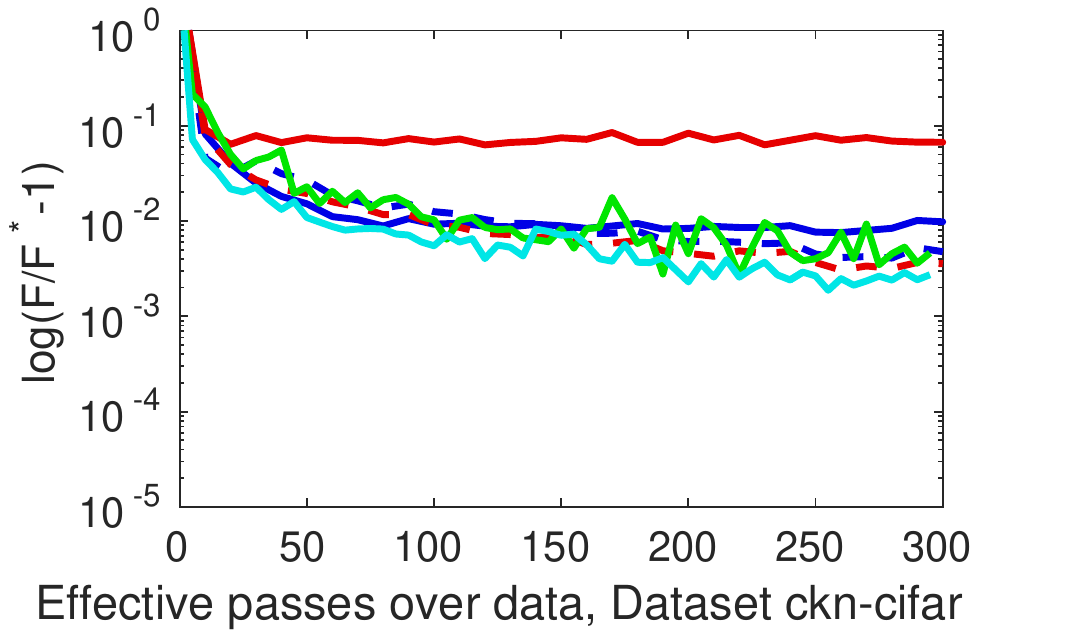}  \\
\includegraphics[width=0.495\linewidth]{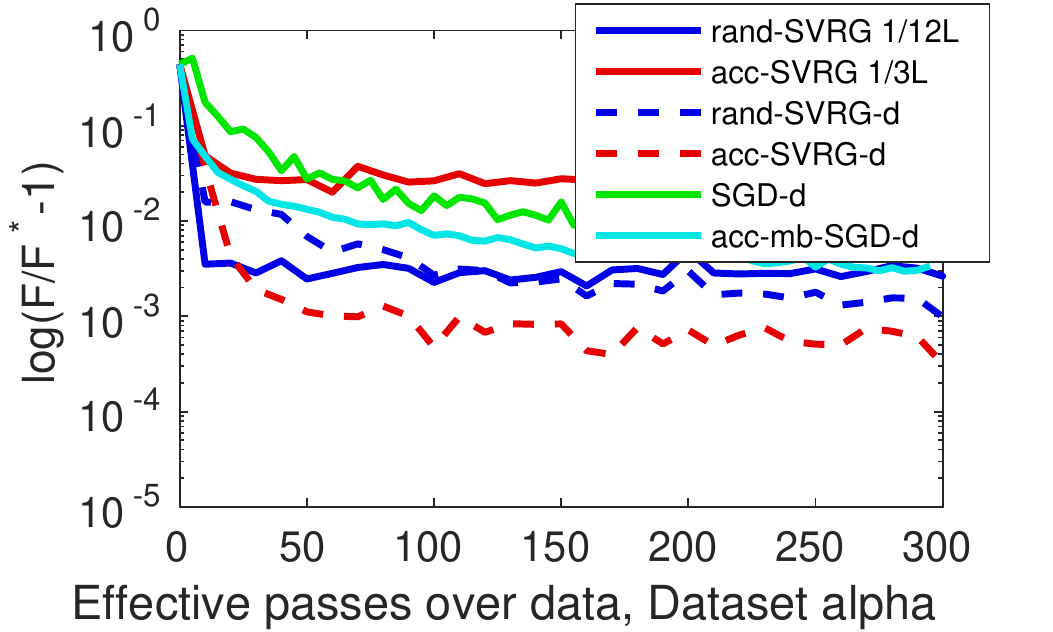} 
\includegraphics[width=0.495\linewidth]{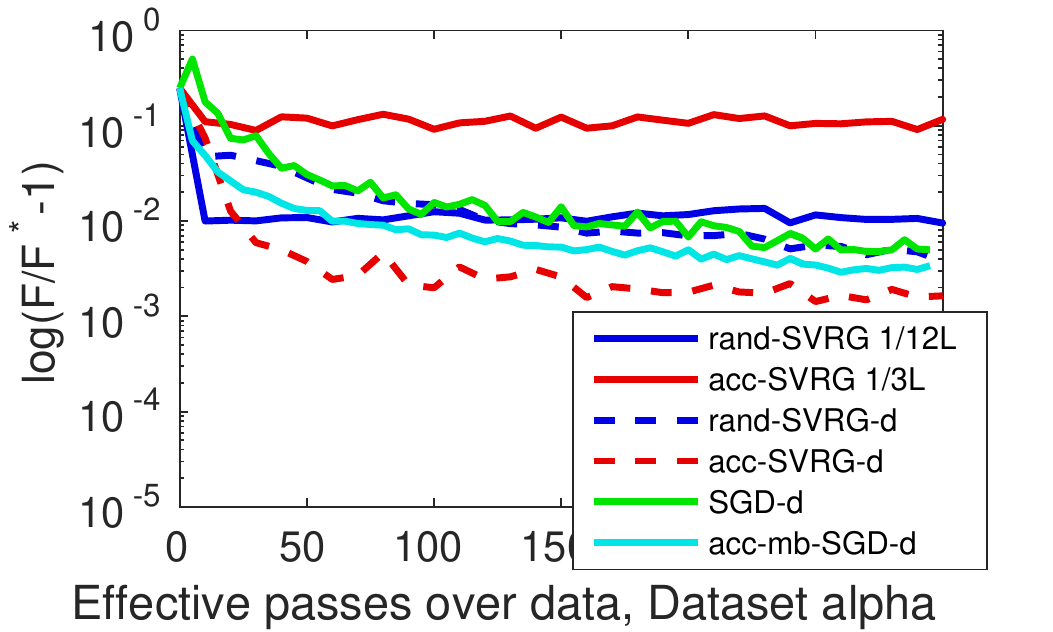} 
\caption{Objective function value on a logarithmic scale with $\lambda=1/10n$, with DropOut $\delta=0.01$ (left) and $\delta=0.1$ (right).}\label{fig:dropout}
\end{figure}

As predicted by theory, approaches with constant step size do not converge.
Therefore, we focus on methods with decreasing step sizes. The conclusions are the following: \\
~$\bullet$ \textrm{acc-mb-SGD-d} with minibatch performs best among SGD approaches and could further benefit from parallelization. \\
~$\bullet$ Acceleration for SVRG is less effective when DropOut is used; the gains are significant on the data set~\textrm{alpha}, and the performance is similar as~\textrm{rand-SVRG} on the two other data sets. Not reported here, acceleration is also more effective with poorly conditioned problems, when $\lambda=1/100n$. \\
~$\bullet$ \textrm{acc-rand-SVRG-d} performs better than SGD approaches in the low perturbation regime $\delta=0.01$ and only on the \textrm{alpha} data set when $\delta=0.1$. Otherwise, the methods perform similarly, making \textrm{acc-rand-SVRG-d} safe to use.

\section*{Acknowledgements}
This work was supported by the ERC grant number 714381 (SOLARIS project). The authors would like to thank Anatoli Juditksy for interesting comments, and the anonymous reviewers who helped improving the manuscript.

\bibliography{../bib}
\bibliographystyle{icml2019}

\clearpage

\appendix


\onecolumn

\vs
\begin{center}
   {\Large Supplementary Material of \\ ``Estimate Sequences for Variance-Reduced Stochastic Composite Optimization''}
\end{center}

\section{Making SAGA Robust to Stochastic Perturbations}\label{appendix:saga}
 \begin{algorithm}[h!]
 \caption{Iteration~(\ref{eq:opt1}) with SAGA estimator}\label{alg:saga}
 \begin{algorithmic}[1]
  \STATE {\bfseries Input:} $x_0$ in $\Real^p$ (initial point); $K$ (number of iterations); $(\eta_k)_{k \geq 0}$ (step sizes); $\beta \in [0,\mu]$;  if averaging, $\gamma_0 \geq \mu$.
  \STATE {\bfseries Initialization:} $z_0^i = \tildenabla f_i(x_0) - \beta x_0$ for all $i=1,\ldots,n$ and $\bar{z}_0 = \frac{1}{n}\sum_{i=1}^n {z_0^i}$.
 \FOR{$k=1,\ldots,K$}
     \STATE Sample $i_k$ according to the distribution $Q=\{q_1,\ldots,q_n\}$;
      \STATE Compute the gradient estimator, possibly corrupted by random perturbations:
         \begin{displaymath}
             g_k = \frac{1}{q_{i_k} n}\left(\tildenabla f_{i_k} (x_\kmone) - \beta x_\kmone - z^{i_k}_\kmone \right) + \bar{z}_\kmone + \beta x_\kmone;
         \end{displaymath}
      \STATE Obtain the new iterate
      $$x_{k} \leftarrow \text{Prox}_{\eta_k\psi}\left[ x_\kmone - \eta_k g_k\right];$$
      \STATE Draw $j_k$ from the uniform distribution in $\{1,\ldots,n\}$;
      \STATE Update the auxiliary variables
         \begin{displaymath}
            z^{j_k}_k  = \tildenabla f_{j_k} (x_k) - \beta x_k ~~~\text{and}~~~ z^j_k  = z^j_\kmone~~~\text{for all}~~~~j \neq j_k;
          \end{displaymath}
     \STATE Update the average variable $\bar{z}_k = \bar{z}_\kmone + \frac{1}{n}(z^{j_k}_k - z^{j_k}_\kmone)$.
      \STATE {\bfseries Optional}: Use the same averaging strategy as in Algorithm~\ref{alg:svrg}.
 \ENDFOR
 \STATE {\bfseries Output:} $x_k$ or $\hat{x}_k$ (if averaging).
 \end{algorithmic}
 \end{algorithm}

\section{Details about the Experimental Setup}
We consider three datasets with various number of points~$n$ and dimension~$p$, coming from different scientific fields:
\begin{itemize}
\vs
\vs
\item \textrm{alpha} is from  the  Pascal  Large  Scale Learning Challenge
website\footnote{\url{http://largescale.ml.tu-berlin.de/}} and contains $n=250\,000$
with $p=500$.
\item \textrm{gene} consists of gene expression data and the binary labels $b_i$ characterize two different types of breast cancer. This is a small dataset with $n=295$ and $p=8\,141$.
\item \textrm{ckn-cifar} is an image classification task where each image from the CIFAR-10 dataset\footnote{\url{https://www.cs.toronto.edu/~kriz/cifar.html}} 
is represented by using a two-layer unsupervised convolutional neural network~\citep{mairal2016end}. Since CIFAR-10 originally contains 10 different classes, we consider the binary classification task consisting of predicting the class 1 vs. other classes. The dataset contains $n=50\,000$ images and the dimension of the representation is $p=9\,216$. 
\end{itemize}
\vs
For simplicity, we normalize the features of all datasets and thus we use a uniform sampling strategy~$Q$ in all algorithms.
Then, we consider several methods with their theoretical step sizes, described in Table~\ref{table:algs}.
Note that we also evaluate the strategy \textrm{random-SVRG} with step size $1/3L$, even though our analysis requires $1/12L$, in order to get a fair comparison with the accelerated SVRG method.
In all figures, we consider that $n$ iterations of SVRG
count as $2$ effective passes over the data since it appears empirically to be a good
proxy of the computational time.
Indeed, (i) if one is allowed to store all variables~$z_i^k$, then $n$ iterations indeed correspond to two passes over the data; (ii) the gradients $\tildenabla f_i(x_\kmone)- \tildenabla f_i(\tilde{x}_\kmone)$ access the same training point which reduces the data access overhead; (iii) computing the full gradient $\bar{z}_k$ can be done in practice in a much more efficient manner than computing individually the $n$ gradients $\tildenabla f_i(x_k)$, either through parallelization or by using more efficient routines (\eg, BLAS2).
Each experiment is conducted five times and we always report the average of the five experiments in each figure.

To evaluate the quality of a solution, when $\tilde{\sigma}^2=0$, we can check
that the value $F^\star$ we consider is optimal by computing a duality gap
using Fenchel duality. In the stochastic case when $\tilde{\sigma}^2 \neq 0$, 
we evaluate the loss function every $5$ data passes and we estimate the expectation~(\ref{eq:expectation}) by drawing $5$ random
perturbations per data point, resulting in $5n$ samples. The optimal value $F^\star$ is estimated by letting the methods
run for $1000$ epochs and selecting the best point found as a proxy of $F^\star$.

\begin{table}[hbtp!]
\definecolor{alizarin}{rgb}{0.82, 0.1, 0.26}
\centering
\begin{tabular}{|c|c|c|c|c|}
\hline
Algorithm & step size $\eta_k$ & Theory & Complexity $O(.)$ & Bias $O(.)$ \\
\hline
\textrm{SGD} & $\frac{1}{L}$ &  Cor.~\ref{corollary:sgd_constant}  & $\frac{L}{\mu}\log\left(\frac{C_0}{\varepsilon}  \right)$ & $\frac{\sigma^2}{L}$ \\
\hline
\textrm{SGD-d} & $\min
\left(\frac{1}{L},\frac{2}{\mu (k+2)}\right)$ &  Cor.~\ref{corollary:sgd}  & $ \frac{L}{\mu}\log\left(\frac{C_0}{\varepsilon}\right) + \frac{\sigma^2}{\mu \varepsilon}  $ & 0 \\
\hline
\textrm{acc-SGD} & $\frac{1}{L}$ &  Cor.~\ref{corollary:acc_sgd2a}  & $ \sqrt{\frac{L}{\mu}}\log\left(\frac{C_0}{\varepsilon}\right)$ & $\frac{\sigma^2}{\sqrt{\mu L}}$ \\
\hline
\textrm{acc-SGD-d} & $\min
\left(\frac{1}{L},\frac{4}{\mu (k+2)^2}\right)$ &  Cor.~\ref{corollary:acc_sgd2}  & $ \sqrt{\frac{L}{\mu}}\log\left(\frac{C_0}{\varepsilon}\right) + \frac{\sigma^2}{\mu \varepsilon}  $ & 0 \\
\hline
\textrm{acc-mb-SGD-d} & $\min
\left(\frac{1}{L},\frac{4}{\mu (k+2)^2}\right)$ &  Cor.~\ref{corollary:acc_sgd2}  & $ \frac{L}{\mu}\log\left(\frac{C_0}{\varepsilon}\right) + \frac{\sigma^2}{\mu \varepsilon}  $ & 0 \\
\hline
\textrm{rand-SVRG} & $\frac{1}{12 L}$ &  Cor.~\ref{corollary:svrg0}  & $ \left(n + \frac{L}{\mu}\right)\log\left(\frac{C_0}{\varepsilon}\right) $ & $\frac{{\color{alizarin} \tilde{\sigma}^2}}{L}$ \\
\hline
\textrm{rand-SVRG-d} & $\min\left(\frac{1}{12L_Q}, \frac{1}{5\mu n}, \frac{2}{\mu(k+2)}\right)$ &  Cor.~\ref{corollary:svrg2}  & $ \left(n + \frac{L}{\mu}\right)\log\left(\frac{C_0}{\varepsilon}\right) + \frac{{\color{alizarin}\tilde{\sigma}^2}}{\mu \varepsilon}  $ & 0 \\
\hline
\textrm{acc-SVRG} & $\min \left( \frac{1}{3L_Q}, \frac{1}{15 \mu n} \right)$ &  Cor.~\ref{corollary:accsvrg_constant}  & $ \left(n + \sqrt{\frac{nL}{\mu}}\right)\log\left(\frac{C_0}{\varepsilon}\right) $ & $\frac{{\color{alizarin}\tilde{\sigma}^2}}{\sqrt{n \mu L}+n \mu}$ \\
\hline
\textrm{acc-SVRG-d} & $\min \left( \frac{1}{3L_Q}, \frac{1}{15 \mu n} ,\frac{12 n}{5 \mu (k+2)^2}\right)$ &  Cor.~\ref{corollary:acc_svrg}  & $ \left(n + \sqrt{\frac{nL}{\mu}}\right)\log\left(\frac{C_0}{\varepsilon}\right)  + \frac{{\color{alizarin}\tilde{\sigma}^2}}{\mu \varepsilon}$ & 0 \\
\hline
\end{tabular}
\caption{List of algorithms used in the experiments, along with the step size used and the pointer to the corresponding convergence guarantees, with $C_0=F(x_0)-F^\star$. In the experiments, we also use the method \textrm{rand-SVRG} with step size $\eta=1/3L$. 
The approach \textrm{acc-mb-SGD-d} uses minibatches of size $\lceil\sqrt{L/\mu}\rceil$ and could thus easily be parallelized.
Note that we potentially have ${\color{alizarin}\tilde{\sigma}} \ll \sigma$.}\label{table:algs}
\end{table}

\section{Useful Mathematical Results}
\subsection{Simple Results about Convexity and Smoothness}
The next three lemmas are classical upper and lower bounds for smooth or strongly convex functions~\cite{nesterov}.
\begin{appendixlemma}[\bfseries Quadratic upper bound for $L$-smooth functions]~\label{lemma:upper}\newline
Let $f: \Real^p \to \Real$ be $L$-smooth. Then, for all $x, x'$ in $\Real^p$,
   \begin{equation*}
          |f(x') - f(x) - \nabla f(x)^\top (x'-x)| \leq \frac{L}{2}\|x-x'\|^2.\label{eq:lipschitz}
    \end{equation*}
\end{appendixlemma}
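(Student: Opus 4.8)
The plan is to reduce everything to a one–dimensional integration along the segment joining $x$ and $x'$, which turns the statement into a routine application of the fundamental theorem of calculus together with the Lipschitz property of $\nabla f$. Concretely, I would introduce the auxiliary function $\phi(t) = f(x + t(x'-x))$ for $t \in [0,1]$, which is differentiable because $f$ is differentiable, with $\phi'(t) = \nabla f(x + t(x'-x))^\top (x'-x)$. Writing $f(x') - f(x) = \phi(1) - \phi(0) = \int_0^1 \phi'(t)\,dt$ immediately gives an exact integral representation of the left-hand side.

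The second step is to subtract off the linear term. Since $\nabla f(x)^\top(x'-x) = \int_0^1 \nabla f(x)^\top(x'-x)\,dt$, I would write
\begin{equation*}
   f(x') - f(x) - \nabla f(x)^\top(x'-x) = \int_0^1 \bigl[\nabla f(x + t(x'-x)) - \nabla f(x)\bigr]^\top (x'-x)\,dt.
\end{equation*}
Taking absolute values, pulling the modulus inside the integral, and applying the Cauchy--Schwarz inequality to each integrand yields
\begin{equation*}
   \bigl|f(x') - f(x) - \nabla f(x)^\top(x'-x)\bigr| \leq \int_0^1 \bigl\|\nabla f(x + t(x'-x)) - \nabla f(x)\bigr\|\,\|x'-x\|\,dt.
\end{equation*}

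The third step invokes $L$-smoothness in the sense of an $L$-Lipschitz gradient: $\|\nabla f(x + t(x'-x)) - \nabla f(x)\| \leq L\,t\,\|x'-x\|$. Substituting this bound and computing the elementary integral $\int_0^1 t\,dt = 1/2$ produces the claimed constant $L/2$, completing the argument. Honestly, there is no genuine obstacle here: the only points requiring a bit of care are (i) justifying that $\nabla f$ being Lipschitz is exactly the meaning of ``$L$-smooth'' assumed in the paper, so that the key inequality applies, and (ii) keeping the scaling factor $t$ inside the norm correct so that the final integral contributes $1/2$ rather than $1$. Everything else is bookkeeping, and the absolute-value formulation is handled uniformly by bounding the single integral rather than proving two one-sided inequalities separately.
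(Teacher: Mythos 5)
Your proof is correct and complete: the integral representation of $f(x')-f(x)-\nabla f(x)^\top(x'-x)$, Cauchy--Schwarz, and the Lipschitz bound $\|\nabla f(x+t(x'-x))-\nabla f(x)\|\leq Lt\|x'-x\|$ with the factor $t$ kept inside the integral yield exactly the constant $L/2$, and bounding the absolute value of the single integral handles both one-sided inequalities at once. Note that the paper gives no proof of this lemma at all---it is stated as a classical fact with a citation to Nesterov---and your argument is precisely the standard textbook proof of that cited result, so the two approaches coincide.
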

\begin{appendixlemma}[\bfseries Lower bound for strongly convex
   functions]~\label{lemma:lower}\newline
Let $f: \Real^p \to \Real$ be a $\mu$-strongly convex function. Let $z$ be in $\partial f(x)$ for some $x$ in $\Real^p$. Then, the following inequality holds for all $x'$ in $\Real^p$:
\begin{displaymath}
    f(x') \geq f(x) + z^\top(x'-x) + \frac{\mu}{2}\|x-x'\|^2.
\end{displaymath}
\end{appendixlemma}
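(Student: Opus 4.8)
The plan is to reduce the statement to the standard first-order characterization of ordinary convexity by peeling off the quadratic term responsible for strong convexity. Following the convention that $f$ is $\mu$-strongly convex if and only if $g \defin f - \frac{\mu}{2}\|\cdot\|^2$ is convex, I would first record the effect of this shift on subgradients: since $y \mapsto \frac{\mu}{2}\|y\|^2$ is differentiable with gradient $\mu y$, the sum rule for subdifferentials gives $\partial g(x) = \partial f(x) - \mu x$. Hence the hypothesis $z \in \partial f(x)$ is equivalent to $z - \mu x \in \partial g(x)$.

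Next I would apply the first-order inequality valid for any convex function and any of its subgradients, namely $g(x') \geq g(x) + (z - \mu x)^\top (x' - x)$ for all $x'$ in $\Real^p$. Substituting $g = f - \frac{\mu}{2}\|\cdot\|^2$ turns this into $f(x') - \frac{\mu}{2}\|x'\|^2 \geq f(x) - \frac{\mu}{2}\|x\|^2 + (z - \mu x)^\top (x' - x)$, and the only remaining work is to collect the quadratic terms. Expanding $-\mu x^\top (x'-x) + \frac{\mu}{2}\|x'\|^2 - \frac{\mu}{2}\|x\|^2$ and completing the square yields exactly $\frac{\mu}{2}\|x' - x\|^2$, which delivers the claimed bound $f(x') \geq f(x) + z^\top (x'-x) + \frac{\mu}{2}\|x-x'\|^2$.

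There is no genuine obstacle here, as this is a classical fact (see \citealp{nesterov}); the only points requiring care are fixing the precise definition of strong convexity one starts from and verifying the subdifferential sum rule, which is immediate because the subtracted quadratic is smooth. If instead one adopts a Jensen-type definition via midpoint convexity, the same reduction applies after first noting that such a definition is equivalent to convexity of $g$, so the argument is unchanged.
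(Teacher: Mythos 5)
Your proof is correct, but note that the paper itself gives no argument for this statement: Lemmas~\ref{lemma:upper}--\ref{lemma:second} are stated as classical facts with a citation to \citet{nesterov}, so your contribution is a self-contained derivation where the paper has only a reference. Your route---shifting by the quadratic, $g = f - \frac{\mu}{2}\|\cdot\|^2$, applying the plain subgradient inequality to $g$, and completing the square---is the standard one, and the algebra checks out. The only step you treat a bit too casually is the subdifferential sum rule: the inclusion you actually need is $\partial f(x) \subseteq \partial g(x) + \mu x$, which is the \emph{non-elementary} direction (adding subgradient inequalities only gives $\partial g(x) + \mu x \subseteq \partial f(x)$, since that argument requires convexity of both summands, and $-\frac{\mu}{2}\|\cdot\|^2$ is concave). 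It is still easy because the quadratic $h(y) = \frac{\mu}{2}\|y\|^2$ is differentiable: for $z \in \partial f(x)$, any $y$, and $t \in (0,1)$, the subgradient inequality and convexity of $g$ give $t\, z^\top (y-x) \leq f(x + t(y-x)) - f(x) \leq t\bigl(g(y) - g(x)\bigr) + h(x + t(y-x)) - h(x)$; dividing by $t$ and letting $t \to 0^+$ yields $z - \mu x \in \partial g(x)$, which is what your argument uses. Alternatively, you can bypass the sum rule entirely: combine the Jensen-type inequality $f(x + t(x'-x)) \leq (1-t) f(x) + t f(x') - \frac{\mu}{2} t(1-t)\|x'-x\|^2$ with $f(x + t(x'-x)) \geq f(x) + t\, z^\top (x'-x)$, divide by $t$, and let $t \to 0^+$; this gives the lemma in three lines with no subdifferential calculus at all. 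Since $f$ is finite-valued on all of $\Real^p$, no constraint-qualification issues arise on either route.
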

\begin{appendixlemma}[\bfseries Second-order growth property]~\label{lemma:second}\newline
 Let $f: \Real^p \to \Real$ be a $\mu$-strongly convex function and $\Xcal \subseteq \Real^p$ be a convex set.
  Let $x^\star$ be the minimizer of $f$ on~$\Xcal$. Then, the following condition holds for all $x$ in $\Xcal$:
   \begin{displaymath}
        f(x) \geq f(x^\star) + \frac{\mu}{2}\|x-x^\star\|^2.
         \end{displaymath}
\end{appendixlemma}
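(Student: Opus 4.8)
The plan is to combine the minimality of $x^\star$ over the convex set $\Xcal$ with the defining inequality of $\mu$-strong convexity, and then extract the second-order term by a limiting argument. First I would observe that since $\Xcal$ is convex, for any $x \in \Xcal$ and any $t \in (0,1]$ the convex combination $x_t \defin (1-t)x^\star + t x$ lies in $\Xcal$; because $x^\star$ minimizes $f$ over $\Xcal$, this immediately gives $f(x_t) \geq f(x^\star)$.

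Next I would apply the strong-convexity inequality along the segment from $x^\star$ to $x$, in its convex-combination form (which is equivalent to the subgradient bound of Lemma~\ref{lemma:lower}):
\begin{displaymath}
f(x_t) \leq (1-t)f(x^\star) + t\,f(x) - \frac{\mu}{2}t(1-t)\|x - x^\star\|^2 .
\end{displaymath}
Chaining this with $f(x_t) \geq f(x^\star)$, subtracting $(1-t)f(x^\star)$ from both sides, and dividing by $t > 0$ yields
\begin{displaymath}
0 \leq f(x) - f(x^\star) - \frac{\mu}{2}(1-t)\|x - x^\star\|^2 .
\end{displaymath}
Letting $t \to 0^+$ then produces exactly $f(x) \geq f(x^\star) + \frac{\mu}{2}\|x - x^\star\|^2$, which is the claim for an arbitrary $x \in \Xcal$.

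The only step requiring care is the passage to the limit together with the use of $f(x_t) \geq f(x^\star)$; both are unproblematic here, since $x_t \in \Xcal$ by convexity and $f$, being strongly convex on $\Real^p$, is finite and continuous, so the bound survives as $t \to 0^+$. I expect no genuine obstacle on this route. An equivalent but heavier route would invoke Lemma~\ref{lemma:lower} directly, writing $f(x) \geq f(x^\star) + z^\top(x - x^\star) + \frac{\mu}{2}\|x-x^\star\|^2$ for $z \in \partial f(x^\star)$, and then appeal to the first-order optimality condition for constrained minimization to produce a subgradient $z$ with $z^\top(x - x^\star) \geq 0$ on $\Xcal$. There the real difficulty is establishing that variational inequality (through the normal cone $N_{\Xcal}(x^\star)$, i.e. $0 \in \partial f(x^\star) + N_{\Xcal}(x^\star)$, or a directional-derivative argument) — a complication that the convex-combination argument above sidesteps entirely, which is why I would favor it.
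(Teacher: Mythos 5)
Your proof is correct. Note, however, that the paper does not actually prove this lemma: it is stated as one of three classical facts and cited to Nesterov's book, so there is no in-paper argument to compare against. The standard textbook route is the one you labeled ``heavier'': apply Lemma~\ref{lemma:lower} at $x^\star$ and combine it with the constrained first-order optimality condition $z^\top(x-x^\star) \geq 0$ for some $z \in \partial f(x^\star)$ (equivalently $0 \in \partial f(x^\star) + N_{\Xcal}(x^\star)$). Your convex-combination argument is genuinely more elementary: it uses only the definition of strong convexity along segments and the fact that $x_t = (1-t)x^\star + tx$ stays in $\Xcal$, so no subdifferential calculus or normal-cone argument is needed --- which matters, since for nonsmooth $f$ and a constrained minimizer, justifying that variational inequality takes real work. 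One small simplification: the limiting step requires no appeal to continuity of $f$ at all, because $f(x_t)$ has already been eliminated before you pass to the limit; the inequality $0 \leq f(x) - f(x^\star) - \frac{\mu}{2}(1-t)\|x-x^\star\|^2$ holds for every $t \in (0,1]$ and its right-hand side is affine in $t$, so letting $t \to 0^+$ is immediate.
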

%

\subsection{Useful Results to Select Step Sizes}\label{appendix:step}
In this section, we present basic mathematical results regarding the choice of step sizes. The proof of the first two lemmas is trivial by induction.
\begin{appendixlemma}[Relation between $(\delta_k)_{k \geq 0}$ and $(\Gamma_k)_{k \geq 0}$]\label{lemma:step}
Consider the following scenarios for $\delta_k$ and $\Gamma_k=\prod_{t=1}^k(1-\delta_t)$:
\begin{itemize}
   \item $\delta_k=\delta$ (constant). Then $\Gamma_k= (1-\delta)^k$. 
      \item $\delta_k=2/(k+2)$. Then, $\Gamma_k = \frac{2}{(k+1)(k+2)}$. 
      \item $\delta_k = \min( 2/(k+2), \delta)$. Then,
      $$ \Gamma_k = \left\{ \begin{array}{ll} 
      (1-\delta)^k & ~\text{if}~k < k_0 ~~~\text{with}~~~ k_0 = \left\lceil \frac{2}\delta - 2 \right\rceil \\
         \Gamma_{k_0-1} \frac{k_0(k_0+1)}{(k+1)(k+2)} & ~\text{otherwise}.
      \end{array}
      \right.$$
\end{itemize}
\end{appendixlemma}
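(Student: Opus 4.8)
The plan is to work directly from the definition $\Gamma_k = \prod_{t=1}^k (1-\delta_t)$, treating the three cases in increasing order of complexity. For the constant schedule $\delta_k = \delta$ there is nothing to do beyond the definition: every factor equals $1-\delta$, so the product of $k$ identical factors is $(1-\delta)^k$.

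For the harmonic schedule $\delta_k = 2/(k+2)$, I would first rewrite each factor as $1 - \delta_t = t/(t+2)$, which turns $\Gamma_k$ into the telescoping product $\prod_{t=1}^k \frac{t}{t+2}$. Writing the numerator as $1\cdot 2 \cdots k$ and the denominator as $3\cdot 4 \cdots (k+2)$ and cancelling the common block $3,\dots,k$ leaves $\frac{2}{(k+1)(k+2)}$. Equivalently, a one-line induction using $\Gamma_k = \Gamma_{k-1}\cdot \frac{k}{k+2}$ together with the induction hypothesis $\Gamma_{k-1} = \frac{2}{k(k+1)}$ yields the same result.

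The third case combines the two: I would first locate the index at which the schedule switches from the constant value to the harmonic one. Since $2/(t+2) \geq \delta \iff t \leq 2/\delta - 2$, and since $\lceil x\rceil - 1 < x \leq \lceil x\rceil$ for every real $x$, the choice $k_0 = \lceil 2/\delta - 2\rceil$ guarantees that $\delta_t = \delta$ for all $t \leq k_0 - 1$ (constant phase) and $\delta_t = 2/(t+2)$ for all $t \geq k_0$ (harmonic phase). Consequently, for $k < k_0$ the product only sees the constant phase and reduces to the first case, namely $(1-\delta)^k$. For $k \geq k_0$ I would split $\Gamma_k = \Gamma_{k_0-1}\prod_{t=k_0}^k (1-\delta_t)$, use $\Gamma_{k_0-1} = (1-\delta)^{k_0-1}$ from the first phase, and telescope the remaining product exactly as in the second case: cancelling the block $k_0+2, \dots, k$ from $\prod_{t=k_0}^k \frac{t}{t+2}$ leaves $\frac{k_0(k_0+1)}{(k+1)(k+2)}$, which is precisely the claimed formula once one identifies $(1-\delta)^{k_0-1}$ with the stated prefactor $\Gamma_{k_0-1}$.

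The only place requiring care---and the sole genuine obstacle in an otherwise routine argument---is the bookkeeping around the ceiling: one must verify that $k_0 = \lceil 2/\delta - 2\rceil$ is exactly the first index at which the minimum flips, including the borderline case where $2/\delta - 2$ is an integer, so that the constant and harmonic phases are correctly delimited and the two sub-results glue together without an off-by-one error.
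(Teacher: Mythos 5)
Your proof is correct and matches the paper's intent: the paper itself gives no details, stating only that the result is ``trivial by induction,'' and your telescoping products $1-\delta_t = t/(t+2)$ together with the ceiling bookkeeping $\lceil x\rceil - 1 < x \leq \lceil x\rceil$ are exactly the induction being invoked. The careful delimitation of the constant and harmonic phases at $k_0 = \lceil 2/\delta - 2\rceil$, including the borderline integer case where the two values of $\delta_{k_0}$ coincide, is handled correctly.
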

\begin{appendixlemma}[Simple relation]\label{lemma:simple}
Consider a sequence of weights $(\delta_k)_{k \geq 0}$ in $(0,1)$. Then,
\begin{equation}
   \sum_{t=1}^k \frac{\delta_t}{\Gamma_t} + 1 = \frac{1}{\Gamma_k} \qquad
\text{where} \qquad \Gamma_t \defin \prod_{i=1}^t (1-\delta_i).\label{eq:av}
\end{equation}
\end{appendixlemma}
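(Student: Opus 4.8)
The plan is to exploit the multiplicative recursion $\Gamma_t = (1-\delta_t)\,\Gamma_{t-1}$ satisfied by the partial products, together with the empty-product convention $\Gamma_0 = 1$. Since every $\delta_t$ lies in $(0,1)$, each $\Gamma_t$ is strictly positive and all reciprocals appearing in the statement are well defined.

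The main observation I would use is that each summand telescopes. Writing $\delta_t = 1 - (1-\delta_t)$ and dividing by $\Gamma_t$ gives
\[
\frac{\delta_t}{\Gamma_t} = \frac{1}{\Gamma_t} - \frac{1-\delta_t}{\Gamma_t} = \frac{1}{\Gamma_t} - \frac{1}{\Gamma_{t-1}},
\]
where the second equality uses $(1-\delta_t)/\Gamma_t = 1/\Gamma_{t-1}$, which is just a rearrangement of the recursion. Summing this identity over $t = 1, \dots, k$, the interior terms cancel in pairs and only the two endpoints survive, yielding $\sum_{t=1}^k \delta_t/\Gamma_t = 1/\Gamma_k - 1/\Gamma_0 = 1/\Gamma_k - 1$. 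Moving the constant to the other side then produces the asserted relation.

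I do not expect any genuine difficulty here, since the statement is an elementary algebraic identity. As an alternative one could proceed by induction on $k$: the base case $k=1$ is the direct computation $\delta_1/(1-\delta_1) + 1 = 1/(1-\delta_1)$, and the inductive step collapses $\delta_k/\Gamma_k + 1/\Gamma_{k-1}$ into $1/\Gamma_k$ after substituting $1/\Gamma_{k-1} = (1-\delta_k)/\Gamma_k$. The only point deserving care, if any, is the bookkeeping convention $\Gamma_0 = 1$ for the empty product, since it is precisely this value that supplies the additive $1$ on the left-hand side of the claimed identity.
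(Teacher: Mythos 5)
Your proof is correct: the telescoping identity $\delta_t/\Gamma_t = 1/\Gamma_t - 1/\Gamma_{t-1}$ follows directly from the recursion $\Gamma_t = (1-\delta_t)\Gamma_{t-1}$ with $\Gamma_0 = 1$, and summing gives the claim. The paper omits the proof entirely, declaring it ``trivial by induction,'' and your argument (whose inductive variant you also sketch) is just the unrolled form of that same induction, so the approaches coincide.
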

\begin{appendixlemma}[Convergence rate of $\Gamma_k$]\label{eq:rate_gamma}
   Consider the same quantities defined in the previous lemma and consider the sequence $\gamma_k = (1-\delta_k)\gamma_\kmone + \delta_k \mu = \Gamma_k\gamma_0 + (1-\Gamma_k) \mu$ with $\gamma_0 \geq \mu$, and assume the relation $\delta_k=\gamma_k \eta$.
Then, for all $k \geq 0$,
\begin{equation}
     \Gamma_k \leq \min \left( \left( 1-  \mu \eta\right)^k ,  \frac{1}{1 + {\gamma_0 \eta k}} \right). \label{eq:rate_gamma2}
\end{equation}
Besides, 
\begin{itemize}
  \item when $\gamma_0=\mu$, then $\Gamma_k = (1-\mu\eta)^k$.
  \item when $\mu=0$, $\Gamma_k = \frac{1}{1 + {\gamma_0 \eta k}}$.
\end{itemize}
\end{appendixlemma}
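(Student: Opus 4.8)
The plan is to establish the two upper bounds separately and then read off the special cases as their equality cases. I would begin from the closed form $\gamma_k = \Gamma_k\gamma_0 + (1-\Gamma_k)\mu$ supplied in the statement, which exhibits $\gamma_k$ as a convex combination of $\gamma_0$ and $\mu$, since $\Gamma_k = \prod_{t=1}^k(1-\delta_t)$ lies in $[0,1]$. Two elementary consequences drive everything: because $\gamma_0 \geq \mu$ we obtain $\gamma_k \geq \mu$, and because $\mu \geq 0$ we obtain $\gamma_k \geq \Gamma_k\gamma_0$ (discarding the nonnegative term $(1-\Gamma_k)\mu$). If one does not wish to take the closed form as given, it follows by a one-line induction from the recursion $\gamma_k = (1-\delta_k)\gamma_\kmone + \delta_k\mu$.

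For the first bound I would combine $\gamma_k \geq \mu$ with the relation $\delta_k = \eta\gamma_k$ to get $\delta_t \geq \mu\eta$, hence $1-\delta_t \leq 1-\mu\eta$ for every $t$, and taking the product over $t=1,\ldots,k$ yields $\Gamma_k \leq (1-\mu\eta)^k$. For the second bound I would invoke the telescoping identity of Lemma~\ref{lemma:simple}, namely $1/\Gamma_k = 1 + \sum_{t=1}^k \delta_t/\Gamma_t$. Substituting $\delta_t = \eta\gamma_t$ and using $\gamma_t \geq \Gamma_t\gamma_0$ gives $\delta_t/\Gamma_t = \eta\gamma_t/\Gamma_t \geq \eta\gamma_0$ term by term, so $1/\Gamma_k \geq 1 + k\gamma_0\eta$, i.e. $\Gamma_k \leq 1/(1+\gamma_0\eta k)$. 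Taking the minimum of the two bounds gives~(\ref{eq:rate_gamma2}).

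The two special cases then appear precisely as the equality cases of these two arguments. When $\gamma_0 = \mu$, the convex combination collapses to $\gamma_k = \mu$ for all $k$, so $\delta_k = \mu\eta$ is constant and $\Gamma_k = (1-\mu\eta)^k$ holds with equality. When $\mu = 0$, we have $\gamma_k = \Gamma_k\gamma_0$ exactly, so the inequality $\delta_t/\Gamma_t \geq \eta\gamma_0$ becomes an equality and Lemma~\ref{lemma:simple} gives $1/\Gamma_k = 1 + k\gamma_0\eta$ exactly, hence $\Gamma_k = 1/(1+\gamma_0\eta k)$.

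There is no serious obstacle here; the computations are short. The single step that deserves care is the passage $\gamma_t \geq \Gamma_t\gamma_0$, where I drop the nonnegative slack $(1-\Gamma_t)\mu$ — this is exactly the quantity that vanishes when $\mu=0$, which is why that case attains equality, and dually the slack in $\gamma_t \geq \mu$ vanishes when $\gamma_0=\mu$. The only thing to verify beyond the routine algebra is that $\Gamma_k \in [0,1]$ (so that the convex-combination reading is valid), which is immediate once each $\delta_t = \eta\gamma_t \in (0,1)$, the standing hypothesis under which Lemma~\ref{lemma:simple} is stated.
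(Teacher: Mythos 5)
Your proof is correct, and its main ingredients coincide with the paper's: the bound $\Gamma_k \leq (1-\mu\eta)^k$ comes from $\gamma_k \geq \mu$ exactly as in the paper, and both arguments for the second bound hinge on the inequality $\gamma_t \geq \Gamma_t \gamma_0$. Where you diverge is in how that inequality is exploited. The paper derives the one-step recursion $1/\Gamma_k \geq 1/\Gamma_{k-1} + \gamma_0\eta$ and unrolls it, whereas you substitute $\delta_t/\Gamma_t = \eta\gamma_t/\Gamma_t \geq \eta\gamma_0$ into the telescoping identity of Lemma~\ref{lemma:simple}, namely $1/\Gamma_k = 1 + \sum_{t=1}^k \delta_t/\Gamma_t$. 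The two are equivalent (the identity is just the exact form of the unrolled recursion), but your packaging pays off in the special case $\mu=0$: there $\gamma_t = \Gamma_t\gamma_0$ holds with equality, so the identity immediately yields $1/\Gamma_k = 1 + \gamma_0\eta k$, whereas the paper establishes this equality by a separate induction on $k$. Your closing observation---that the two special cases are precisely the equality cases of the two bounds, because the slack $(1-\Gamma_t)\mu$ vanishes when $\mu=0$ and the slack in $\gamma_k \geq \mu$ vanishes when $\gamma_0=\mu$---is a structural point the paper leaves implicit, and it makes clear why the minimum in~(\ref{eq:rate_gamma2}) is attained by each branch in the corresponding regime.
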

\begin{proof}
    First, we have for all $k$, $\gamma_k \geq \mu$ such that $\delta_k \geq \eta \mu$, which leads then to $\Gamma_k \leq \left(1- \eta{\mu}\right)^k$. Besides, 
    $\gamma_k \geq \Gamma_k \gamma_0$ and thus $\Gamma_k = (1-\delta_k)\Gamma_\kmone \leq\left(1- {\Gamma_k \gamma_0}\eta\right)\Gamma_\kmone$.
    Then, 
    $ \frac{1}{\Gamma_k} \left(1 - \Gamma_k{\gamma_0}\eta \right)  \geq  \frac{1}{\Gamma_\kmone},$ and 
    $$\frac{1}{\Gamma_k} \geq \frac{1}{\Gamma_\kmone} + {\gamma_0}\eta \geq 1 + \gamma_0 \eta k,$$ which is sufficient to obtain~(\ref{eq:rate_gamma2}).
    Then, the fact that $\gamma_0=\mu$ leads to $\Gamma_k = (1-\mu\eta)^k$ is trivial, and the fact that $\mu=0$ yields $\Gamma_k = \frac{1}{1 + {\gamma_0 \eta k}}$ can be shown by induction.
    Indeed, the relation is true for $\Gamma_0$ and then, assuming the relation is true for $k-1$, we have for $k \geq 1$,
    \begin{displaymath}
       \Gamma_k = (1-\delta_k)\Gamma_\kmone = (1-\eta \gamma_k) \Gamma_\kmone =(1-\eta \gamma_0 \Gamma_k) \Gamma_\kmone \geq \left(1-{\eta \gamma_0}\Gamma_k\right) \frac{1}{1+\gamma_0\eta (\kmone)}, 
    \end{displaymath}
    which leads to $\Gamma_k = \frac{1}{1 + {\gamma_0 \eta k}}$.
\end{proof}
\begin{appendixlemma}[Accelerated convergence rate of $\Gamma_k$]\label{eq:acc_rate_gamma}
   Consider the same quantities defined in Lemma~\ref{lemma:simple} and consider the sequence $\gamma_k = (1-\delta_k)\gamma_\kmone + \delta_k \mu = \Gamma_k\gamma_0 + (1-\Gamma_k) \mu$ with $\gamma_0 \geq \mu$, and assume the relation $\delta_k=\sqrt{\gamma_k \eta}$.
Then, for all $k \geq 0$,
\begin{equation*}
   \Gamma_k \leq \min \left( \left( 1- \sqrt{\mu \eta}\right)^k ,  \frac{4}{(2 + {\sqrt{\gamma_0 \eta} k})^2} \right). 
\end{equation*}
Besides, 
    when $\gamma_0=\mu$, then $\Gamma_k = (1-\sqrt{\mu\eta})^k$.
\end{appendixlemma}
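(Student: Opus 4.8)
The plan is to follow the same two-step template as the proof of Lemma~\ref{eq:rate_gamma} above, but to replace the linear recursion on $1/\Gamma_k$ by a quadratic recursion on $1/\sqrt{\Gamma_k}$, which is the hallmark of Nesterov's accelerated estimate. The two bounds inside the $\min$ are established independently. For the first bound, I would note that the closed form $\gamma_k = \Gamma_k \gamma_0 + (1-\Gamma_k)\mu$ together with $\gamma_0 \geq \mu$ and $\Gamma_k \in (0,1]$ gives $\gamma_k \geq \mu$, hence $\delta_k = \sqrt{\gamma_k\eta} \geq \sqrt{\mu\eta}$; since each factor satisfies $(1-\delta_t) \leq (1-\sqrt{\mu\eta})$, the product $\Gamma_k$ is at most $(1-\sqrt{\mu\eta})^k$ directly.

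For the second, sublinear bound I would instead use the lower bound $\gamma_k \geq \Gamma_k \gamma_0$, which again follows from the closed form since $(1-\Gamma_k)\mu \geq 0$. This gives $\delta_k = \sqrt{\gamma_k\eta} \geq \sqrt{\Gamma_k}\,\sqrt{\gamma_0\eta}$, so the defining relation $\Gamma_k = (1-\delta_k)\Gamma_\kmone$ becomes $\Gamma_k \leq (1 - \sqrt{\gamma_0\eta}\,\sqrt{\Gamma_k})\,\Gamma_\kmone$. Writing $a_k \defin 1/\sqrt{\Gamma_k}$ and dividing by $\Gamma_k\Gamma_\kmone$, this rearranges to $a_k^2 - a_\kmone^2 \geq \sqrt{\gamma_0\eta}\,a_k$. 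Factoring the left-hand side as $(a_k-a_\kmone)(a_k+a_\kmone)$ and using that $(\Gamma_k)$ is non-increasing, so $a_\kmone\leq a_k$ and $a_k+a_\kmone\leq 2a_k$, I would deduce $a_k - a_\kmone \geq \frac{\sqrt{\gamma_0\eta}\,a_k}{a_k+a_\kmone} \geq \frac{\sqrt{\gamma_0\eta}}{2}$. Since $a_0 = 1/\sqrt{\Gamma_0}=1$, telescoping gives $a_k \geq 1 + \frac{\sqrt{\gamma_0\eta}}{2}k = \frac{1}{2}(2+\sqrt{\gamma_0\eta}\,k)$, and squaring the reciprocal produces $\Gamma_k \leq 4/(2+\sqrt{\gamma_0\eta}\,k)^2$.

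Finally, for the case $\gamma_0=\mu$ I would observe that the recursion $\gamma_k=(1-\delta_k)\gamma_\kmone+\delta_k\mu$ is stationary at $\mu$: if $\gamma_\kmone=\mu$ then $\gamma_k=\mu$, so by induction $\gamma_k=\mu$ for all $k$ and $\delta_k=\sqrt{\mu\eta}$ is constant, giving the exact identity $\Gamma_k=(1-\sqrt{\mu\eta})^k$. The one delicate point, and the step I expect to require the most care, is the factorization argument in the second paragraph: one must invoke the monotonicity $a_\kmone\leq a_k$ to replace $a_k+a_\kmone$ by $2a_k$ and thereby extract the clean increment $a_k-a_\kmone\geq \tfrac12\sqrt{\gamma_0\eta}$. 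Getting the constant $2$ right—and hence the factor $4$ in the final bound—hinges entirely on this inequality, so I would verify the base case $a_0=1$ and the non-increasing nature of $(\Gamma_k)$ carefully before telescoping.
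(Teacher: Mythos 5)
Your proof is correct, and it is genuinely more than what the paper provides: the paper's entire proof of this lemma is a citation to Lemma 2.2.4 of Nesterov's book, whereas you reconstruct a self-contained argument. Each step checks out: the closed form $\gamma_k = \Gamma_k\gamma_0 + (1-\Gamma_k)\mu$ with $\gamma_0\geq\mu$ indeed gives $\gamma_k\geq\mu$ and hence the linear-rate factor $(1-\sqrt{\mu\eta})$ per iteration; the bound $\gamma_k\geq\Gamma_k\gamma_0$ turns $\Gamma_k=(1-\delta_k)\Gamma_\kmone$ into $a_k^2-a_\kmone^2\geq\sqrt{\gamma_0\eta}\,a_k$ with $a_k=1/\sqrt{\Gamma_k}$ after dividing by $\Gamma_k\Gamma_\kmone$; and the factorization $(a_k-a_\kmone)(a_k+a_\kmone)$ together with the monotonicity $a_\kmone\leq a_k$ (valid since $\delta_k\in(0,1)$, as assumed in Lemma~\ref{lemma:simple} to which the statement refers, so $\Gamma_k$ is positive and non-increasing) yields the increment $a_k-a_\kmone\geq\frac{1}{2}\sqrt{\gamma_0\eta}$, which telescopes from $a_0=1$ to the stated sublinear bound; the stationarity argument for $\gamma_0=\mu$ is also exact. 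Your argument is essentially the classical one underlying the cited Lemma 2.2.4 (the quadratic recursion on $1/\sqrt{\Gamma_k}$ is the hallmark of Nesterov's estimate-sequence analysis), so conceptually the two routes coincide; what your version buys is that the lemma no longer rests on an external reference, and it makes explicit exactly where the constant $4$ and the hypotheses $\gamma_0\geq\mu$, $\delta_k\in(0,1)$ are used.
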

\begin{proof}
   see Lemma 2.2.4 of~\cite{nesterov}.
\end{proof}

\subsection{Averaging Strategy}
Next, we show a generic convergence result and an appropriate averaging strategy given a recursive relation between quantities acting as Lyapunov function.
\begin{appendixlemma}[Averaging strategy]\label{lemma:averaging}
  Assume that an algorithm generates a sequence $(x_k)_{k \geq 0}$ for minimizing a convex function~$F$, and that there exist sequences $(T_k)_{k \geq 0}$, $(\delta_k)_{k \geq 1}$ in~$(0,1)$, $(\beta_k)_{k \geq 1}$ and a scalar $\alpha >0$ such that  for all $k \geq 1$,
  \begin{equation}
      \frac{\delta_k}{\alpha} \E[ F(x_k) - F^\star]  + T_k \leq (1-\delta_k) T_{\kmone} + \beta_{k}, \label{eq:av1}
  \end{equation}
  where the expectation is taken with respect to any random parameter used by the algorithm.
  Then, we consider two cases:
  \paragraph{No averaging.}
  \begin{equation}
     \E[ F(x_k) - F^\star]  + \frac{\alpha}{\delta_k} T_k \leq  \frac{\alpha \Gamma_k}{\delta_k}\left(T_{0} + \sum_{t=1}^k \frac{\beta_{t}}{\Gamma_t}\right)~~~\text{where}~~~\Gamma_k \defin \prod_{t=1}^k (1-\delta_t). \label{eq:av2}
  \end{equation}
  \paragraph{Averaging.}
  By defining the averaging sequence $(\hat{x}_k)_{k \geq 0}$, 
\begin{displaymath}
\hat{x}_k = \Gamma_k \left( x_0 + \sum_{t=1}^k \frac{\delta_t}{\Gamma_t}x_t \right) = (1-\delta_k) \hat{x}_{\kmone} + \delta_k x_k~~~\text{(for $k \geq 1$)}, 
\end{displaymath}
then, 
\begin{equation}
    \E[ F(\hat{x}_k) - F^\star] + \alpha {T_k} \leq \Gamma_k \left(\alpha T_0 + \E[F(x_0)-F^\star] + \alpha \sum_{t=1}^k \frac{\beta_{t}}{\Gamma_t}\right). \label{eq:av3}
\end{equation}
\end{appendixlemma}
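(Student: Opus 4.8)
The plan is to treat the two cases with a common mechanism: convert the one-step recursion~(\ref{eq:av1}) into a telescoping inequality by dividing through by $\Gamma_k$ and exploiting the identity $\Gamma_k = (1-\delta_k)\Gamma_{k-1}$.

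For the \textbf{no-averaging} case, I would first observe that since $F^\star$ is the minimal value we have $\E[F(x_t)-F^\star] \geq 0$, so dropping this term from~(\ref{eq:av1}) yields the pure recursion $T_t \leq (1-\delta_t)T_{t-1} + \beta_t$. Dividing by $\Gamma_t$ and using $\Gamma_t = (1-\delta_t)\Gamma_{t-1}$ gives $T_t/\Gamma_t \leq T_{t-1}/\Gamma_{t-1} + \beta_t/\Gamma_t$, which telescopes from $t=1$ to $k-1$ (with $\Gamma_0=1$) to the bound $T_{k-1}/\Gamma_{k-1} \leq T_0 + \sum_{t=1}^{k-1}\beta_t/\Gamma_t$. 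I would then keep the suboptimality term in the $k$-th instance of~(\ref{eq:av1}): dividing that instance by $\Gamma_k$ produces $\frac{\delta_k}{\alpha\Gamma_k}\E[F(x_k)-F^\star] + T_k/\Gamma_k \leq T_{k-1}/\Gamma_{k-1} + \beta_k/\Gamma_k$. Substituting the telescoped bound for $T_{k-1}/\Gamma_{k-1}$ and multiplying through by $\alpha\Gamma_k/\delta_k$ yields exactly~(\ref{eq:av2}).

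For the \textbf{averaging} case, the key step is to pair the scaled recursion with a convexity inequality so that the instantaneous suboptimality cancels. Since $\hat{x}_k = (1-\delta_k)\hat{x}_{k-1} + \delta_k x_k$ is a convex combination and $F$ is convex, Jensen's inequality (after taking expectations) gives $\E[F(\hat{x}_k)-F^\star] \leq (1-\delta_k)\E[F(\hat{x}_{k-1})-F^\star] + \delta_k\E[F(x_k)-F^\star]$. I would add this to $\alpha$ times~(\ref{eq:av1}); the two $\delta_k\E[F(x_k)-F^\star]$ contributions cancel, leaving the single recursion $S_k \leq (1-\delta_k)S_{k-1} + \alpha\beta_k$ for the merged potential $S_k \defin \E[F(\hat{x}_k)-F^\star] + \alpha T_k$. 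Dividing by $\Gamma_k$ and telescoping from $t=1$ to $k$, with $S_0 = \E[F(x_0)-F^\star] + \alpha T_0$ since $\hat{x}_0 = x_0$, gives $S_k/\Gamma_k \leq S_0 + \alpha\sum_{t=1}^k\beta_t/\Gamma_t$, which is~(\ref{eq:av3}) after multiplying by $\Gamma_k$. I would also verify the two stated expressions for $\hat{x}_k$ agree, which is a one-line induction using $\Gamma_k=(1-\delta_k)\Gamma_{k-1}$.

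All the manipulations here are routine telescoping; the only genuinely clever point—and the one I expect to be the crux—is recognizing that the averaged-iterate convexity bound and the Lyapunov recursion can be summed so that the term $\E[F(x_k)-F^\star]$ cancels, collapsing two coupled inequalities into a single scalar recursion on $S_k$. Everything else is bookkeeping with the factor $\Gamma_k$.
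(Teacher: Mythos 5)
Your proposal is correct and takes essentially the same route as the paper: the no-averaging case is handled by unrolling the pure recursion on $T_k$, and the averaging case by a $\Gamma_k$-weighted telescoping argument combined with convexity of $F$. The only cosmetic difference is that the paper applies Jensen's inequality once at the end to the full convex combination $\hat{x}_k = \Gamma_k\bigl( x_0 + \sum_{t=1}^k (\delta_t/\Gamma_t)\, x_t \bigr)$, using the identity $\sum_{t=1}^k \delta_t/\Gamma_t + 1 = 1/\Gamma_k$, whereas you apply two-point convexity at each step and fold it into the single recursion on $S_k = \E[F(\hat{x}_k)-F^\star] + \alpha T_k$ --- the same cancellation of $\delta_k \E[F(x_k)-F^\star]$, organized inductively rather than by one global summation.
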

\begin{proof}
Given  that $T_k \leq (1-\delta_k) T_{\kmone} + \beta_{k}$, we obtain~(\ref{eq:av2}) by simply unrolling the recursion.
To analyze the effect of the averaging strategies, divide now~(\ref{eq:av1})
by~$\Gamma_k$:
  \begin{displaymath}
      \frac{\delta_k}{\alpha \Gamma_k} \E[ F(x_k) - F^\star]  + \frac{T_k}{\Gamma_k} \leq \frac{T_{\kmone}}{\Gamma_{\kmone}} + \frac{\beta_{k}}{\Gamma_k}.
  \end{displaymath}
Sum from $t=1$ to~$k$ and notice that we have a telescopic sum:
  \begin{equation*}
    \frac{1}{\alpha} \sum_{t=1}^k \frac{\delta_t}{\Gamma_t} \E[ F(x_t) - F^\star]  + \frac{T_k}{\Gamma_k} \leq T_0 + \sum_{t=1}^k \frac{\beta_{t}}{\Gamma_t}. 
  \end{equation*}
Then, add $(1/\alpha)\E[F(x_0)-F^\star]$ on both sides and multiply by $\alpha \Gamma_k$:
  \begin{displaymath}
    \sum_{t=1}^k \frac{\delta_t\Gamma_k}{\Gamma_t} \E[ F(x_t) - F^\star]  + \Gamma_k \E[F(x_0)-F^\star] + \alpha {T_k} \leq \Gamma_k \left(\alpha T_0 + \E[F(x_0)-F^\star] + \alpha \sum_{t=1}^k \frac{\beta_{t}}{\Gamma_t}\right).
\end{displaymath}
By exploiting the relation~(\ref{eq:av}), we may then use Jensen's inequality and we obtain~(\ref{eq:av3}).
\end{proof}

\section{Proofs of the Main Results}\label{appendix:applications}

\subsection{Proof of Proposition~\ref{prop:keyprop}}
\begin{proof}
\begin{displaymath}
\begin{split}
    d_k^\star = d_k(x_k) & = (1-\delta_k) d_{\kmone}(x_k) + \delta_k \left( f(x_\kmone) + g_k^\top( x_k-x_\kmone) + \frac{\mu}{2}\|x_k-x_\kmone\|^2 + \psi(x_k) \right) \\
              & \geq (1-\delta_k) d_{\kmone}^\star + \frac{\gamma_k}{2}\|x_k-x_{\kmone}\|^2 + \delta_k \left( f(x_\kmone)  + g_k^\top( x_k-x_\kmone) + \psi(x_k) \right)  \\
              & \geq (1-\delta_k) d_{\kmone}^\star + \delta_k \left( f(x_\kmone)  + g_k^\top( x_k-x_\kmone) + \frac{L}{2}\|x_k-x_\kmone\|^2 + \psi(x_k) \right)  \\
              & \geq (1-\delta_k) d_{\kmone}^\star + \delta_k F(x_k) +  \delta_k (g_k - \nabla f(x_\kmone))^\top (x_k-x_\kmone), \\
\end{split}
\end{displaymath}
where the first inequality comes from Lemma~\ref{lemma:second}---it is in fact an equality when considering Algorithm~(\ref{eq:opt1})---and the second inequality simply uses the assumption $\eta_k \leq 1/L$, which yields $\delta_k = \gamma_k\eta_k \leq \gamma_k/L$. Finally, the last inequality uses a classical upper-bound for $L$-smooth functions presented in Lemma~\ref{lemma:upper}.
Then, after taking expectations, 
\begin{equation*}
\begin{split}
E[d_k^\star]      & \geq (1-\delta_k) \E[d_{\kmone}^\star] + \delta_k \E[F(x_k)] + \delta_k \E[(g_k-\nabla f(x_\kmone))^\top(x_k-x_\kmone)] \\
     & = (1-\delta_k) \E[d_{\kmone}^\star] + \delta_k \E[F(x_k)] + \delta_k \E[(g_k-\nabla f(x_\kmone))^\top x_k]  \\
     & = (1-\delta_k) \E[d_{\kmone}^\star] + \delta_k \E[F(x_k)] + \delta_k \E\left[(g_k-\nabla f(x_\kmone))^\top\left( x_k - w_\kmone \right) \right], \\
    \end{split} 
\end{equation*}
where we have defined the following quantity 
\begin{displaymath}
 w_{\kmone} = \text{Prox}_{\eta_k \psi}\left[ x_\kmone - \eta_k\nabla f(x_{\kmone})\right].
\end{displaymath}
In the previous relations, we have used twice the fact that 
$\E[(g_k-\nabla f(x_\kmone))^\top y | {\Fcal_{\kmone}}]=0$, for all $y$ that is deterministic given $x_{\kmone}$ such as $y=x_{\kmone}$ or $y=w_{\kmone}$.
We may now use the non-expansiveness property of the proximal operator~\citep{moreau1965} to control the quantity $\|x_k-w_\kmone\|$, which gives us
\begin{equation*}
\begin{split}
E[d_k^\star]
     & \geq (1-\delta_k) \E[d_{\kmone}^\star] + \delta_k \E[F(x_k)] - \delta_k \E\left[\|g_k-\nabla f(x_\kmone)\|\|x_k - w_\kmone\| \right] \\
     & \geq (1-\delta_k) \E[d_{\kmone}^\star] + \delta_k \E[F(x_k)] - \delta_k \eta_k \E\left[\|g_k-\nabla f(x_\kmone)\|^2\right] \\
     & = (1-\delta_k) \E[d_{\kmone}^\star] + \delta_k \E[F(x_k)] - \delta_k \eta_k\sigma_k^2. \\
\end{split}
\end{equation*}
This relation can now be combined with~(\ref{eq:est}) when $z=x^\star$, 
and we obtain~(\ref{eq:relation}). 
\end{proof}

\subsection{Proof of Corollary~\ref{corollary:sgd}}
\begin{proof}
Given the linear convergence rate~(\ref{eq:sgd}), the number of iterations to guarantee $\E[F(\hat{x}_k)- F^\star] \leq  2
\sigma^2/L$ with the constant step-size strategy is upper bounded by
$$O\left( \frac{L}{\mu} \log\left(\frac{F(x_0)-
F^\star}{\varepsilon}\right)\right).$$
Then, after restarting the algorithm, we may apply Theorem~\ref{thm:conv} with $\E[F(x_0)-F^\star] \leq 2 \sigma^2/L$.
With $\gamma_0=\mu$, we have $\gamma_k=\mu$ for all $k \geq 0$, and 
the rate of $\Gamma_k$ is given by Lemma~\ref{lemma:step}, which yields for $k \geq k_0 = \left\lceil \frac{2L}{\mu} - 2 \right\rceil$,
\begin{equation*}
\begin{split}
   \E[F(\hat{x}_k)- F^\star]  & \leq \Gamma_k\left( \E\left[F(x_0)-F^\star + \frac{\mu}{2}\|x_0-x^\star\|^2\right] + \sigma^2 \sum_{t=1}^k \frac{\delta_t \eta_t}{\Gamma_t} \right) \\
   & \leq  \Gamma_k \left( \frac{4 \sigma^2}{L} + \frac{\sigma^2}{L} \sum_{t=1}^{k_0-1} \frac{\delta_t}{\Gamma_t} + \sigma^2 \sum_{t=k_0}^{k} \frac{2\delta_t}{\Gamma_t \mu(t+2)}\right) \\
   & =  \frac{k_0(k_0+1)}{(k+1)(k+2)} \left( \Gamma_{k_0-1}\frac{4 \sigma^2}{L} + \frac{\sigma^2}{L} \Gamma_{k_0-1}\sum_{t=1}^{k_0-1} \frac{\delta_t}{\Gamma_t}\right) + \sigma^2 \sum_{t=k_0}^{k} \frac{2\delta_t \Gamma_k}{\Gamma_t \mu(t+2)} \\
   & =  \frac{k_0(k_0+1)}{(k+1)(k+2)} \left( \Gamma_{k_0-1}\frac{4 \sigma^2}{L} + (1-\Gamma_{k_0 -1 })\frac{\sigma^2}{L}\right) + \sigma^2 \sum_{t=k_0}^{k} \frac{2\delta_t \Gamma_k}{\Gamma_t \mu(t+2)} \\
    & \leq  \frac{k_0(k_0+1)}{(k+1)(k+2)} \frac{4 \sigma^2}{L} +  \sigma^2 \frac{1}{(k+1)(k+2)} \left(\sum_{t=k_0+1}^{k} \frac{4 (t+1)(t+2)}{\mu(t+2)^2}\right) \\
    & \leq  \frac{k_0}{(k+1)(k+2)} \frac{8 \sigma^2}{\mu} +   \frac{4\sigma^2}{\mu(k+2)},
   \end{split}
\end{equation*}
where the second inequality uses the fact that $\frac{\mu}{2}\|x_0-x^\star\|^2 \leq F(x_0)-F^\star \leq \frac{2 \sigma^2}{L}$, and then we use Lemmas~\ref{lemma:step} and~\ref{lemma:simple}. The term on the right is of order $O(\sigma^2/\mu k)$ whereas the term on the left becomes of the same order or smaller whenever $k \geq k_0 = O(L/\mu)$. 
This leads to the desired iteration complexity.
\end{proof}



\subsection{Proof of Proposition~\ref{prop:nonu}}

\begin{proof}
The proof borrows a large part of the analysis of~\citet{proxsvrg} for
controlling the variance of the gradient estimate in the SVRG algorithm. 
First, we note that all the gradient estimators we consider may be written as
           \begin{equation*}
            g_k = \frac{1}{q_{i_k} n}\left(\tildenabla f_{i_k} (x_\kmone) - z_{\kmone}^{i_k} \right) + \bar{z}_\kmone. 
        \end{equation*}
Then, we will write $\tildenabla f_{i_k}(x_\kmone) = \nabla f_{i_k}(x_\kmone) + \zeta_k$, where $\zeta_k$ is a zero-mean variable with variance $\tilde{\sigma}^2$ drawn at iteration $k$,
and $z_k^i = u_k^i + \zeta_k^i$ for all $k,i$, where~$\zeta_k^i$ has zero-mean with variance $\tilde{\sigma}^2$ and was drawn during the previous iterations.
Then, 
\begin{equation*}
\begin{split}
    \sigma_k^2 & = \E\left\| \frac{1}{q_{i_k} n}(\tildenabla f_{i_k}(x_{\kmone}) - z^{i_k}_{\kmone}) + \bar{z}_{\kmone}  - \nabla f(x_\kmone)\right\|^2 \\
         & = \E\left\| \frac{1}{q_{i_k} n} (\nabla f_{i_k}(x_{\kmone})  - z^{i_k}_{\kmone}) + \bar{z}^{\kmone}  - \nabla f(x_\kmone) \right\|^2  + \E\left[ \frac{1}{(q_{i_k} n)^2} \|\zeta_k\|^2 \right] \\
         & \leq \E\left\| \frac{1}{q_{i_k} n} (\nabla f_{i_k}(x_{\kmone})  - z^{i_k}_{\kmone}) + \bar{z}^{\kmone}  - \nabla f(x_\kmone) \right\|^2  + \rho_Q{\tilde{\sigma}^2} \\
               & \leq \E\left\| \frac{1}{q_{i_k} n}(\nabla f_{i_k}(x_{\kmone})  - z^{i_k}_{\kmone})\right\|^2 +  \rho_Q{\tilde{\sigma}^2} \\
               & = \frac{1}{n}\sum_{i=1}^n\frac{1}{q_i n}\E\left[\|\nabla f_{i}(x_{\kmone})  - z_\kmone^i \|^2\right] + \rho_Q{\tilde{\sigma}^2} \\
               & = \frac{1}{n}\sum_{i=1}^n\frac{1}{q_i n}\E\left[\|\nabla f_{i}(x_{\kmone})  - u_\star^i + u_\star^i - z_\kmone^i \|^2\right] + \rho_Q{\tilde{\sigma}^2} ~~~\text{with}~~~u_i^\star = \nabla f_i(x^\star) \\
               & \leq \frac{2}{n}\sum_{i=1}^n\frac{1}{q_i n}\E\left[\|\nabla f_{i}(x_{\kmone})  - u_\star^i \|^2 \right] +   \frac{2}{n}\sum_{i=1}^n\frac{1}{q_i n}\E\left[\|z_\kmone^i - u_\star^i  \|^2\right] + \rho_Q{\tilde{\sigma}^2} \\
               & \leq \frac{2}{n}\sum_{i=1}^n\frac{1}{q_i n}\E\left[\|\nabla f_{i}(x_{\kmone})\!-\! \nabla f_i(x^\star) )\|^2 \right] \!+\!   \frac{2}{n}\sum_{i=1}^n\frac{1}{q_i n}\E\left[\|u_\kmone^i \!-\! u_\star^i\|^2\right] + {3\rho_Q \tilde{\sigma}^2} \\
               & \leq \frac{4}{n}\sum_{i=1}^n\frac{L_i}{q_i n}\E\left[f_{i}(x_{\kmone})\!-\! f_i(x^\star) \!-\! \nabla f_i(x^\star)^\top (x_\kmone \!-\! x^\star) \right] \!+\!   \frac{2}{n}\sum_{i=1}^n\frac{1}{q_i n}\E\left[\|u_\kmone^i \!-\! u_\star^i\|^2\right] \!+\! {3\rho_Q \tilde{\sigma}^2} \\
               & \leq {4L_Q}\E\left[f(x_{\kmone})- f(x^\star) - \nabla f(x^\star)^\top (x_\kmone - x^\star) \right] +   \frac{2}{n}\sum_{i=1}^n\frac{1}{q_i n}\E\left[\|u_\kmone^i - u_\star^i\|^2\right] + {3\rho_Q \tilde{\sigma}^2}, \\
\end{split} 
\end{equation*}
where the second inequality uses the relation $\E[\|X- \E[X]\|^2] \leq \E[\|X\|^2]$ for all random variable $X$, taking here expectation with respect to the index $i_k \sim Q$ and conditioning on~$\Fcal_\kmone$; the third inequality uses the relation $\|a+b\|^2 \leq 2\|a\|^2 + 2\|b\|^2$; the fifth inequality uses Theorem 2.1.5 of~\cite{nesterov}.

Then, since $x^\star$ minimizes $F$, we have $0 \in \nabla f(x^\star) + \partial \psi(x^\star)$ and thus $-\nabla f(x^\star)$ is a subgradient in $\partial \psi(x^\star)$. By using as well the convexity inequality $\psi(x) \geq \psi(x^\star) - \nabla f(x^\star)^\top(x-x^\star)$, we obtain
\begin{equation*}
f(x_{\kmone})- f(x^\star) - \nabla f(x^\star)^\top (x_\kmone - x^\star) 
 \leq 2L_Q( F(x_\kmone) - F^\star). 
\end{equation*}
Finally, given the previous relations, we obtain~(\ref{eq:var2}).
\end{proof}

\subsection{Proof of Proposition~\ref{thm:lyapunov}}
\begin{proof}
To make the notation more compact, we call 
\begin{displaymath}
F_k = \E[F(x_k)-F^\star],\qquad D_k = \E[d_k(x^\star)-d_k^\star] \qquad \text{and}~~~~
C_k = \E\left[\frac{1}{n} \sum_{i=1}^n \frac{1}{q_i n} \| u^i_k - u^i_\star\|^2\right].
\end{displaymath}
Then, according to~Proposition~\ref{prop:nonu}, we have
\begin{displaymath}
  \sigma_k^2 \leq 4 L_Q F_\kmone + 2 C_\kmone + {3 \rho_Q \tilde{\sigma}^2},
\end{displaymath}
and according to Proposition~\ref{prop:keyprop},
\begin{equation}
   \delta_k F_k + D_k \leq (1-\delta_k) D_\kmone  + 4 L_Q \eta_k \delta_k F_\kmone + 2 \eta_k \delta_k C_\kmone + {3\rho_Q \eta_k \delta_k \tilde{\sigma}^2}. \label{eq:key}
\end{equation}
Then, we note that both for the SVRG and SAGA, we have,
\begin{displaymath}
   \E[\| u_k^i - u^i_\star\|^2] = \left(1 - \frac{1}{n}\right)\E[\| u^i_\kmone - u^i_\star\|^2] + \frac{1}{n}\E \|\nabla f_i(x_k) - \nabla f_i(x^\star)\|^2.
\end{displaymath}
By taking a weighted average, this yields
\begin{equation*}
\begin{split}
   C_k & \leq \left(1 - \frac{1}{n}\right)C_\kmone + \frac{1}{n^2}\sum_{i=1}^n\frac{1}{q_i n}\E\left[\|\nabla f_{i}(x_{k})- \nabla f_i(x^\star)\|^2 \right] \\
       & \leq \left(1 - \frac{1}{n}\right)C_\kmone + \frac{1}{n^2}\sum_{i=1}^n\frac{2L_i}{q_i n}\E\left[f_i(x_k)- f_i(x^\star)-\nabla f_i(x^\star)^\top(x_k-x^\star) \right] \\
       & \leq \left(1 - \frac{1}{n}\right)C_\kmone + \frac{2 L_Q F_k}{n},
\end{split}
\end{equation*}
where the second inequality comes from Theorem 2.1.5 of~\cite{nesterov} and the last one uses similar arguments as in the proof of Proposition~\ref{prop:nonu}.
Then, we add a quantity $\beta_k C_k$ on both sides of the relation~(\ref{eq:key}) with some $\beta_k > 0$ that we will specify later:
\begin{equation*}
   \left(\delta_k-\beta_k \frac{2L_Q}{n}\right) F_k + D_k + \beta_k C_k  \leq (1-\delta_k) D_\kmone + \left(\beta_k\left(1 - \frac{1}{n}\right)+2 \eta_k \delta_k \right) C_\kmone + 4 L_Q \eta_k \delta_k F_\kmone  + {3\rho_Q \eta_k \delta_k\tilde{\sigma}^2}, 
\end{equation*}
and then choose  $\frac{\beta_k}{n} = \frac{5}{2} \eta_k \delta_k $, which yields
 \begin{equation*}
    \delta_k\left(1-5 L_Q \eta_k\right) F_k + D_k + \beta_k C_k \leq (1-\delta_k) D_\kmone + \beta_k\left(1 - \frac{1}{5n}\right) C_\kmone + 4 L_Q \eta_k \delta_k F_\kmone  + {3 \rho_Q \eta_k \delta_k\tilde{\sigma}^2}. 
 \end{equation*}
Remember that $\tau_k = \min\left(\delta_k, \frac{1}{5n}\right)$, notice that the sequences $(\beta_k)_{k \geq 0}, (\eta_k)_{k \geq 0}$ and
$(\delta_k)_{k \geq 0}$ are non-increasing and note that ${4} \leq {5}(1-\frac{1}{5n})$ for all $n \geq 1$. Then, 
 \begin{equation*}
    \delta_k\left(1-10 L_Q \eta_k\right) F_k + \underbrace{5L_Q \eta_k \delta_k +  D_k + \beta_k C_k}_{T_k} \leq (1-\tau_k) \left(D_\kmone + \beta_\kmone C_\kmone + 5 L_Q \eta_\kmone \delta_\kmone F_\kmone\right)  + {3 \rho_Q \eta_k \delta_k\tilde{\sigma}^2},
\end{equation*}
which immediately yields~(\ref{eq:aux2}) with the appropriate definition of $T_k$, and by noting that $(1-10 L_Q\eta_k) \geq \frac{1}{6}$.
\end{proof}

\subsection{Proof of Corollary~\ref{corollary:svrg0}}
\begin{proof}
First, notice that (i) $T_k \geq d_k(x^\star)-d_k^\star \geq \frac{\mu}{2}\|x_k-x^\star\|^2$, that (ii) $\delta_k = \eta_k \gamma_k = \frac{\mu}{12 L_Q}$ and
that $\mu \frac{\tau_k}{\delta_k} = \min\left(\mu, \frac{12L_Q}{5n}\right)$. Then, we apply Theorem~\ref{thm:svrg} and obtain
\begin{displaymath}
\begin{split}
   \E\left[F(\hat{x}_k)-F^\star  + \alpha \|x_k-x^\star\|^2\right] & \leq \Theta_k \left( F(x_0)-F^\star + \frac{6 \tau_k}{\delta_k} T_0 + \frac{18 \rho_Q \tau_k \tilde{\sigma}^2}{\delta_k} \sum_{t=1}^k \frac{\eta_t \delta_t}{\Theta_t}     \right) \\
    & = \Theta_k \left( F(x_0)-F^\star + \frac{6 \tau_k}{\delta_k} T_0 + \frac{3 \rho_Q \tilde{\sigma}^2}{2L_Q} \sum_{t=1}^k \frac{\tau_t}{\Theta_t}     \right) \\
    & \leq \Theta_k \left( F(x_0)-F^\star + \frac{6 \tau_k}{\delta_k} T_0 \right)  + \frac{3 \rho_Q \tilde{\sigma}^2}{2L_Q}. \\
   \end{split}
\end{displaymath}
Then, note that
\begin{displaymath}
\begin{split}
   T_0 & = \frac{5\delta_0}{12}(F(x_0)-F^\star) + \frac{\mu}{2}\|x_0-x^\star\|^2 + \frac{5 \delta_0}{24 L_Q n}\sum_{i=1}^n \frac{1}{q_i n} \| u^i_0 - u^i_\star\|^2 \\
    & \leq \frac{5\delta_0}{12}(F(x_0)-F^\star) + \frac{\mu}{2}\|x_0-x^\star\|^2 + \frac{5 \delta_0}{12}(F(x_0)-F^\star), 
\end{split}
\end{displaymath}
where the inequality comes from  Theorem 2.1.5 of~\cite{nesterov} and the definition of the $u^i_0$'s. Then, we conclude by noting that $5 \tau \leq 1$, and that $\alpha \leq 3 \mu$ and we use Lemma~\ref{lemma:second}.
\end{proof}

\subsection{Proof of Corollary~\ref{corollary:svrg2}}
\begin{proof}
We start by following similar steps as in the proof of Corollary~\ref{corollary:svrg0} to study
the convergence of the first phase with constant step size.
We note that with the choice of $\eta_k$, we have $\delta_k = \tau_k$ for all $k$. Then, we apply Theorem~\ref{thm:svrg} and obtain
\begin{displaymath}
\begin{split}
   \E\left[F(\hat{x}_k)-F^\star  + {3\mu} \|x_k-x^\star\|^2\right] & \leq \Theta_k \left( F(x_0)-F^\star + 6 T_0 + {18 \rho_Q \tilde{\sigma}^2 \eta} \sum_{t=1}^k \frac{\tau_t}{\Theta_t}     \right)\\
    & \leq \Theta_k \left( F(x_0)-F^\star + {6} T_0\right) + {18 \rho_Q \tilde{\sigma}^2 \eta}.  \\
   \end{split}
\end{displaymath}
Then, we use the same upper-bound on $T_0$ as in the proof of
Corollary~\ref{corollary:svrg0}, giving us $6 T_0 \leq 5 \delta_0
(F(x_0)-F^\star) + 3 \mu \|x_0-x^\star\|^2 \leq 7 (F(x_0)-F^\star)$ since
$\delta_0 = \mu \eta \leq 1/5$,
which is sufficient to conclude that
\begin{equation}
   \E\left[F(\hat{x}_k)-F^\star  + 3 \mu \|x_k-x^\star\|^2\right] \leq 8 \Theta_k\left( F(x_0)-F^\star \right)  + {18 \rho_Q \eta\tilde{\sigma}^2}.
    \label{eq:svrg_constant2}
\end{equation}

Then, we restart the procedure. Since the convergence rate~(\ref{eq:svrg_constant2}) applies for the first stage with a constant step size, the number of iterations to ensure the condition $\E[F(\hat{x}_k)-F^\star] \leq 24\eta\rho_Q \tilde{\sigma}^2$ is upper bounded by $K$ with
 \begin{displaymath}
    K = O\left( \left(n + \frac{L_Q}{\mu}\right) \log\left(\frac{F(x_0)-F^\star}{\varepsilon} \right)  \right).
\end{displaymath}
Then, we restart the optimization procedure, assuming from now on that $\E[F(x_0)-F^\star] \leq 24\eta \rho_Q \tilde{\sigma}^2$, with decreasing step sizes 
$\eta_k = \min\left( \frac{2}{\mu(k+2)}, {\eta} \right)$,
Then, since $\delta_k = \mu \eta_k \leq \frac{1}{5n}$, we have that $\tau_k = \delta_k$ for all $k$, and
Theorem~\ref{thm:svrg} gives us---note that here $\Gamma_k=\Theta_k$---
\begin{displaymath}
   \E\left[F(\hat{x}_k)-F^\star\right] \leq \Gamma_k \left( F(x_0)-F^\star + {6}T_0 + {18 \rho_Q \tilde{\sigma}^2} \sum_{t=1}^k \frac{\eta_t \delta_t}{\Gamma_t}     \right)~~~\text{with}~~~ \Gamma_k = \prod_{t=1}^k(1-\delta_t).
\end{displaymath}
Then, as noted in the proof of Corollary~\ref{corollary:svrg2}, we have $6T_0 \leq 7 (F(x_0)-F^\star)$.
Then, after taking the expectation with respect to the output of the first stage,
\begin{displaymath}
\begin{split}
   \E\left[F(\hat{x}_k)-F^\star\right] & \leq \Gamma_k \left( 8 \E[F(x_0)-F^\star] + {18 \rho_Q \tilde{\sigma}^2} \sum_{t=1}^k \frac{\eta_t \delta_t}{\Gamma_t}\right)   \\
    & \leq \Gamma_k \left( {192 \rho_Q \eta \tilde{\sigma}^2} + {18 \rho_Q\tilde{\sigma}^2} \sum_{t=1}^k \frac{\eta_t \delta_t}{\Gamma_t}     \right).
\end{split}
\end{displaymath}
Denote now by $k_0$ the largest index such that $\frac{2}{\mu(k_0+2)} \geq {\eta} $ and thus $k_0 = \lceil 2/(\mu {\eta}) - 2 \rceil$.
Then, according to Lemma~\ref{lemma:step}, for $k \geq k_0$,
\begin{displaymath}
\begin{split}
    \E\left[F(\hat{x}_k)-F^\star\right] 
   & \leq \Gamma_k \left( {192 \rho_Q \eta\tilde{\sigma}^2} + {18 \rho_Q {\eta} \tilde{\sigma}^2 }\sum_{t=1}^{k_0-1} \frac{\delta_t}{\Gamma_t} + {18 \rho_Q \tilde{\sigma}^2} \sum_{t=k_0}^{k} \frac{2\delta_t}{\mu \Gamma_t (t+2)}\right) \\
    & \leq \frac{k_0(k_0+1)}{(k+1)(k+2)} \left( \Gamma_{k_0-1}{192 \rho_Q {\eta} \tilde{\sigma}^2} + {18 {\eta} \rho_Q \tilde{\sigma}^2 } \Gamma_{k_0-1}\sum_{t=1}^{k_0-1} \frac{\delta_t}{\Gamma_t}\right) + {36\rho_Q \tilde{\sigma}^2} \sum_{t=k_0}^{k} \frac{\delta_t \Gamma_k}{\mu \Gamma_t (t+2)} \\
    & \leq \frac{k_0(k_0+1)}{(k+1)(k+2)} {192 {\eta} \rho_Q \tilde{\sigma}^2} + {36 \rho_Q \tilde{\sigma}^2} \sum_{t=k_0}^{k} \frac{(t+1)(t+2)}{\mu (k+1)(k+2) (t+2)^2} \\
    & \leq \frac{k_0{\eta}}{k+2} {192 \rho_Q \tilde{\sigma}^2} + \frac{36 \rho_Q \tilde{\sigma}^2}{\mu (k+2)} = O\left( \frac{\rho_Q \tilde{\sigma}^2}{\mu k}\right),\\
\end{split}
\end{displaymath}
which gives the desired complexity.

\end{proof}

\subsection{Proof of Theorem~\ref{thm:acc_sgd}}
\begin{proof}
First, the minimizer $v_k$ of the quadratic surrogate $d_k$ may be written as
\begin{displaymath}
\begin{split}
    v_k & = \frac{(1-\delta_k)\gamma_\kmone}{\gamma_k} v_\kmone + \frac{\mu \delta_k}{\gamma_k} y_\kmone - \frac{\delta_k}{\gamma_k} \tilde{g}_k \\
        & = y_\kmone + \frac{(1-\delta_k)\gamma_\kmone}{\gamma_k}(v_\kmone - y_\kmone) - \frac{\delta_k}{\gamma_k} \tilde{g}_k.
\end{split}
\end{displaymath}
Then, we characterize the quantity $d_k^\star$:
    \begin{displaymath}
    \begin{split}
       d_k^\star  & = d_k(y_\kmone) - \frac{\gamma_k}{2}\|v_k - y_\kmone\|^2 \\ 
                  & = (1-\delta_k)d_\kmone(y_\kmone) + \delta_k l_k(y_\kmone) - \frac{\gamma_k}{2}\|v_k - y_\kmone\|^2 \\ 
                  & = (1-\delta_k)\left(d_\kmone^\star + \frac{\gamma_\kmone}{2}\|y_\kmone- v_\kmone\|^2\right) + \delta_k l_k(y_\kmone) - \frac{\gamma_k}{2}\|v_k - y_\kmone\|^2 \\ 
                  & = (1-\delta_k)d_\kmone^\star  + \left(\frac{\gamma_\kmone(1-\delta_k)(\gamma_k - (1-\delta_k)\gamma_\kmone)}{2\gamma_k} \right)\|y_\kmone- v_\kmone\|^2 + \delta_k l_k(y_\kmone)  \\ 
                  & ~~~~~~~~~~~  - \frac{\delta_k^2}{2\gamma_k}\|\tilde{g}_k\|^2+ \frac{\delta_k (1-\delta_k)\gamma_\kmone}{\gamma_k} \tilde{g}_k^\top (v_\kmone - y_\kmone) \\
                  & \geq  (1-\delta_k)d_\kmone^\star   + \delta_k l_k(y_\kmone) - \frac{\delta_k^2}{2\gamma_k}\|\tilde{g}_k\|^2 
                   + \frac{\delta_k (1-\delta_k)\gamma_\kmone}{\gamma_k} \tilde{g}_k^\top (v_\kmone - y_\kmone).
    \end{split}
    \end{displaymath}
    Assuming by induction that $\E[d_\kmone^\star] \geq \E[F(x_\kmone)] - \xi_\kmone$ for some $\xi_\kmone \geq 0$, we have after taking expectation
    \begin{equation*}
       \E[d_k^\star]  \geq  (1-\delta_k)(\E[F(x_\kmone)] - \xi_\kmone) + \delta_k \E[l_k(y_\kmone)] - \frac{\delta_k^2}{2\gamma_k}\E\|\tilde{g}_k\|^2 
                   + \frac{\delta_k (1-\delta_k)\gamma_\kmone}{\gamma_k} \E[\tilde{g}_k^\top (v_\kmone - y_\kmone)].
    \end{equation*}
    Then, note that $\E[F(x_\kmone)] \geq \E[l_k(x_\kmone)] \geq \E[l_k(y_\kmone)] + \E[\tilde{g}_k^\top(x_\kmone-y_\kmone)]$, and 
    \begin{equation*}
       \E[d_k^\star] \geq  \E[l_k(y_\kmone)] - (1-\delta_k)\xi_\kmone - \frac{\delta_k^2}{2\gamma_k}\E\|\tilde{g}_k\|^2 
                   + (1-\delta_k)\E\left[\tilde{g}_k^\top \left(\frac{\delta_k\gamma_\kmone}{\gamma_k} (v_\kmone - y_\kmone) + (x_\kmone-y_\kmone)\right)\right].
    \end{equation*}
    By Lemma~\ref{lemma:acc}, we can show that the last term is equal to zero, and we are left with
    \begin{equation*}
       \E[d_k^\star] \geq  \E[l_k(y_\kmone)] - (1-\delta_k)\xi_\kmone -
       \frac{\delta_k^2}{2\gamma_k}\E\|\tilde{g}_k\|^2.
    \end{equation*}
    We may then use Lemma~\ref{lemma:key_acc}, which gives us
    \begin{equation*}
    \begin{split}
       \E[d_k^\star] & \geq  \E[F(x_k)] - (1-\delta_k)\xi_\kmone - \eta_k \sigma_k^2
       + \left( \eta_k - \frac{L \eta_k^2}{2} - \frac{\delta_k^2}{2\gamma_k}\right)\E\|\tilde{g}_k\|^2 \\
         & \geq \E[F(x_k)] - \xi_k~~~~\text{with}~~~~ \xi_k = (1-\delta_k)\xi_\kmone + \eta_k \sigma_k^2,
    \end{split}
    \end{equation*}
    where we used the fact that $\eta_k \leq 1/L$ and $\delta_k=\sqrt{\gamma_k \eta_k}$.

    It remains to choose $d_0^\star = F(x_0)$ and $\xi_0=0$ to initialize the induction at $k=0$ and we conclude that 
    \begin{equation*}
        \E\left[F(x_k) - F^\star + \frac{\gamma_k}{2}\|v_k-x^\star\|^2\right] \leq \E[d_k(x^\star)-F^\star] + \xi_k \leq \Gamma_k (d_0(x^\star) - F^\star) + \xi_k,  
    \end{equation*}
    which gives us the desired result when noticing that $\xi_k = \Gamma_k \sum_{t=1}^k \frac{\eta_t \sigma_t^2}{\Gamma_t}$.
\end{proof}

\subsection{Proof of Lemma~\ref{lemma:acc}}
\begin{proof}
Let us assume that the relation $y_\kmone = \theta_\kmone x_\kmone + (1-\theta_\kmone) v_\kmone$ holds and let us show that it also holds for $y_{k}$.
   Since the estimate sequences $d_k$ are quadratic functions, we have
   \begin{displaymath}
   \begin{split}
       v_k & = (1-\delta_k)\frac{\gamma_\kmone}{\gamma_k} v_{\kmone} + \frac{\mu \delta_k}{\gamma_k} y_\kmone - \frac{\delta_k}{\gamma_k}(g_k + \psi'(x_k)) \\
           & = (1-\delta_k)\frac{\gamma_\kmone}{\gamma_k} v_{\kmone} + \frac{\mu \delta_k}{\gamma_k} y_\kmone - \frac{\delta_k}{\gamma_k\eta_k}(y_\kmone - x_k) \\
           & = (1-\delta_k)\frac{\gamma_\kmone}{\gamma_k(1-\theta_\kmone)} \left(y_\kmone - \theta_\kmone x_\kmone  \right) + \frac{\mu \delta_k}{\gamma_k} y_\kmone - \frac{\delta_k}{\gamma_k\eta_k}(y_\kmone - x_k) \\
           & = (1-\delta_k)\frac{\gamma_\kmone}{\gamma_k(1-\theta_\kmone)} \left(y_\kmone - \theta_\kmone x_\kmone  \right) + \frac{\mu \delta_k}{\gamma_k} y_\kmone - \frac{1}{\delta_k}(y_\kmone - x_k) \\
           & = \left(\frac{(1-\delta_k)\gamma_\kmone}{\gamma_k(1-\theta_\kmone)}  + \frac{\mu \delta_k}{\gamma_k} - \frac{1}{\delta_k} \right)y_\kmone   -  \frac{(1-\delta_k)\gamma_\kmone\theta_\kmone}{\gamma_k(1-\theta_\kmone)} x_\kmone   + \frac{1}{\delta_k}x_k \\
           & = \left( 1 + \frac{(1-\delta_k)\gamma_\kmone \theta_\kmone}{\gamma_k(1-\theta_\kmone)} - \frac{1}{\delta_k} \right)y_\kmone   -  \frac{(1-\delta_k)\gamma_\kmone\theta_\kmone}{\gamma_k(1-\theta_\kmone)} x_\kmone   + \frac{1}{\delta_k}x_k.
   \end{split}
   \end{displaymath}
   Then note that $1-\theta_\kmone = \frac{\delta_k \gamma_\kmone}{\gamma_\kmone + \delta_k \mu}$ and thus, $\frac{\gamma_\kmone \theta_\kmone}{\gamma_k (1-\theta_\kmone)} = \frac{1}{\delta_k}$, and
   \begin{displaymath}
   \begin{split}
       v_k & = x_\kmone + 
    \frac{1}{\delta_k}(x_k-x_\kmone). 
   \end{split}
   \end{displaymath}
   Then, we note that $x_k - x_\kmone = \frac{\delta_k}{1-\delta_k}(v_k  - x_k)$ and we are left with
   \begin{displaymath}
       y_k = x_k + \beta_k(x_k-x_\kmone) = \frac{\beta_k \delta_k}{1-\delta_k} v_k  +   \left(  1-\frac{\beta_k \delta_k}{1-\delta_k}\right) x_k.
   \end{displaymath}
   Then, it is easy to show that
   \begin{displaymath}
   \beta_k = \frac{(1-\delta_k)\delta_{k+1} \gamma_k}{\delta_k( \gamma_{k+1} + \delta_{k+1}\gamma_k)} = \frac{(1-\delta_k)\delta_{k+1} \gamma_k}{\delta_k( \gamma_{k} + \delta_{k+1}\mu)} = \frac{(1-\delta_k)(1-\theta_k)}{\delta_k} ,
   \end{displaymath}
   which allows us to conclude that  $y_k = \theta_k x_k + (1-\theta_k) v_k$ since the relation holds trivially for $k=0$.
\end{proof}

\subsection{Proof of Lemma~\ref{lemma:key_acc}}
\begin{proof}
\begin{displaymath}
\begin{split}
   \E[F(x_k)] & = \E[f(x_k) + \psi(x_k)] \\
          & \leq \E\left[ f(y_\kmone) + \nabla f(y_\kmone)^\top (x_k - y_\kmone) + \frac{L}{2}\|x_k-y_\kmone\|^2 + \psi(x_k)\right] \\
          & = \E\left[f(y_\kmone) + g_k^\top (x_k - y_\kmone) + \frac{L}{2}\|x_k-y_\kmone\|^2 + \psi(x_k)\right]  +  \E\left[ (\nabla f(y_\kmone)-g_k)^\top (x_k - y_\kmone) \right] \\
          & = \E\left[f(y_\kmone) + g_k^\top (x_k - y_\kmone) + \frac{L}{2}\|x_k-y_\kmone\|^2 + \psi(x_k)\right]  +  \E\left[ (\nabla f(y_\kmone)-g_k)^\top x_k \right] \\
          & = \E\left[f(y_\kmone) + g_k^\top (x_k - y_\kmone) + \frac{L}{2}\|x_k-y_\kmone\|^2 + \psi(x_k)\right]  +  \E\left[ (\nabla f(y_\kmone)-g_k)^\top (x_k - w_\kmone) \right] \\
          & \leq \E\left[f(y_\kmone) + g_k^\top (x_k - y_\kmone) + \frac{L}{2}\|x_k-y_\kmone\|^2 + \psi(x_k)\right]  +  \E\left[ \|\nabla f(y_\kmone)-g_k \|\| x_k - w_\kmone \| \right] \\
          & \leq \E\left[f(y_\kmone) + g_k^\top (x_k - y_\kmone) + \frac{L}{2}\|x_k-y_\kmone\|^2 + \psi(x_k)\right]  +  \E\left[ \eta_k\|\nabla f(y_\kmone)-g_k \|^2\right] \\
          & = \E\left[l_k(y_\kmone) + \tilde{g}_k^\top (x_k - y_\kmone) + \frac{L}{2}\|x_k-y_\kmone\|^2\right]  +  \eta_k\sigma_k^2, \\
          & \leq \E\left[l_k(y_\kmone)\right] + \left(\frac{L\eta_k^2}{2} - \eta_k\right)\E\left[\|\tilde{g}_k\|^2\right]  +  \eta_k\sigma_k^2, \\
\end{split}
\end{displaymath}
where $w_\kmone = \Prox_{\eta_k\psi}[y_\kmone - \eta_k \nabla f(y_\kmone)]$. The first inequality is due to the $L$-smoothness of $f$ (Lemma~\ref{lemma:upper}); then, the next three relations exploit the fact that $\E[(\nabla f(y_\kmone)-g_k)^\top z = 0$ for all $z$ that is deterministic (which is the case for $y_\kmone$ and $w_\kmone$); the second inequality uses the non-expansiveness of the proximal operator. 
Then, we use the fact that $x_k = y_\kmone - \eta_k \tilde{g}_k$.
\end{proof}

%

\subsection{Proof of Corollary~\ref{corollary:acc_sgd2}}
\begin{proof}
The proof is similar to that of Corollary~\ref{corollary:sgd} for unaccelerated SGD.
The first stage with constant step-size requires $O\left( \sqrt{\frac{L}{\mu}} \log\left(\frac{F(x_0)- F^\star}{\varepsilon}\right)\right)$ iterations. Then, we restart the optimization
procedure, and assume that $\E\left[F(x_0)-F^\star + \frac{\mu}{2}\|x^\star-x_0\|^2\right] \leq \frac{2\sigma^2}{\sqrt{\mu L}}$.
With the choice of parameters, we have $\gamma_k = \mu$ and $\delta_k = \sqrt{\gamma_k \eta_k} = \min\left( \sqrt{ \frac{\mu}{L} }, \frac{2}{k+2} \right)$. We may then apply Theorem~\ref{thm:acc_sgd} where the value of $\Gamma_k$ is given by Lemma~\ref{lemma:step}. This yields for $k \geq k_0 = \left\lceil 2\sqrt{\frac{L}{\mu}} - 2 \right\rceil$,
\begin{equation*}
\begin{split}
   \E[F({x}_k)- F^\star]  & \leq \Gamma_k\left( \E\left[F(x_0)-F^\star + \frac{\mu}{2}\|x_0-x^\star\|^2\right] + \sigma^2 \sum_{t=1}^k \frac{\eta_t}{\Gamma_t} \right) \\
   & \leq  \Gamma_k \left( \frac{2 \sigma^2}{\sqrt{\mu L}} + \frac{\sigma^2}{L} \sum_{t=1}^{k_0-1} \frac{1}{\Gamma_t} + \sigma^2 \sum_{t=k_0}^{k} \frac{4}{\Gamma_t \mu(t+2)^2}\right) \\
   & =  \frac{k_0(k_0+1)}{(k+1)(k+2)} \left( \Gamma_{k_0-1}\frac{2 \sigma^2}{\sqrt{ \mu L}} + \frac{\sigma^2}{L} \Gamma_{k_0-1}\sum_{t=1}^{k_0-1} \frac{1}{\Gamma_t}\right) + \sigma^2 \sum_{t=k_0}^{k} \frac{4\Gamma_k}{\Gamma_t \mu(t+2)^2} \\
   & =  \frac{k_0(k_0+1)}{(k+1)(k+2)} \left( \Gamma_{k_0-1}\frac{2 \sigma^2}{\sqrt{ \mu L}} + (1-\Gamma_{k_0 -1 })\frac{\sigma^2}{\sqrt{\mu L}}\right) + \sigma^2 \sum_{t=k_0}^{k} \frac{4 \Gamma_k}{\Gamma_t \mu(t+2)^2} \\
    & \leq  \frac{k_0(k_0+1)}{(k+1)(k+2)} \frac{2 \sigma^2}{\sqrt{\mu L}} +  \sigma^2 \frac{1}{(k+1)(k+2)} \left(\sum_{t=k_0+1}^{k} \frac{4 (t+1)(t+2)}{\mu(t+2)^2}\right) \\
    & \leq  \frac{k_0}{(k+1)(k+2)} \frac{4 \sigma^2}{\mu} +   \frac{4\sigma^2}{\mu(k+2)}  \leq  \frac{8\sigma^2}{\mu(k+2)},
   \end{split}
\end{equation*}
where we use Lemmas~\ref{lemma:step} and~\ref{lemma:simple}. 
This leads to the desired iteration complexity.
\end{proof}

\subsection{Proof of Proposition~\ref{prop:nonu2}}
\begin{proof}
\begin{equation*}
\begin{split}
   \sigma_k^2 & =  \E\left\|\frac{1}{q_{i_k}n} \left(\tildenabla f_{i_k}(y_{\kmone}) - \tildenabla f_{i_k}(\tilde{x}_\kmone) \right) + \tildenabla f(\tilde{x}_\kmone) - \nabla f(y_\kmone)\right\|^2 \\
              & =  \E\left\|\frac{1}{q_{i_k}n} \left(\nabla f_{i_k}(y_{\kmone}) + \zeta_k - \zeta'_k - \nabla f_{i_k}(\tilde{x}_\kmone) \right) + \nabla f(\tilde{x}_\kmone) + \bar{\zeta}_\kmone - \nabla f(y_\kmone) \right\|^2, \\
              & \leq  \E\left\|\frac{1}{q_{i_k}n} \left(\nabla f_{i_k}(y_{\kmone}) - \nabla f_{i_k}(\tilde{x}_\kmone) \right) + \nabla f(\tilde{x}_\kmone) + \bar{\zeta}_\kmone - \nabla f(y_\kmone) \right\|^2 + {2 \rho_Q \tilde{\sigma}^2}, \\
\end{split}
\end{equation*}
where $\zeta_k$ and $\zeta'_k$ are perturbations drawn at iteration $k$, and $\bar{\zeta}_\kmone$ was drawn last time $\tilde{x}_\kmone$ was updated.
Then, by noticing that for any deterministic quantity $Y$ and random variable $X$, we have $\E[\|X-\E[X] - Y\|^2] \leq \E[\|X\|^2] + \|Y\|^2$, taking expectation with respect to the index $i_k \sim Q$ and conditioning on~$\Fcal_\kmone$, we have
\begin{equation}
   \begin{split}
      \sigma_k^2 & \leq  \E\left\|\frac{1}{q_{i_k}n} \left(\nabla f_{i_k}(y_{\kmone}) - \nabla f_{i_k}(\tilde{x}_\kmone) \right) \right\|^2 + \E[\|\bar{\zeta}_\kmone\|^2]  + {2 \rho_Q \tilde{\sigma}^2} \\
      & \leq  \frac{1}{n}\sum_{i=1}^n \frac{1}{q_i n}\E\left\|\nabla f_{i}(y_{\kmone}) - \nabla f_{i}(\tilde{x}_\kmone)\right\|^2  + {3\rho_Q \tilde{\sigma}^2} \\
      & \leq  \frac{1}{n}\sum_{i=1}^n \frac{2 L_i}{q_i n}\E\left[ f_{i}(\tilde{x}_\kmone) - f_{i}(y_{\kmone}) -  \nabla f_{i}(y_{\kmone})^\top (\tilde{x}_\kmone-y_\kmone)\right]  + {3 \rho_Q \tilde{\sigma}^2} \\
      & \leq  \frac{1}{n}\sum_{i=1}^n {2 L_Q}\E\left[ f_{i}(\tilde{x}_\kmone) - f_{i}(y_{\kmone}) -  \nabla f_{i}(y_{\kmone})^\top (\tilde{x}_\kmone-y_\kmone)\right]  + {3 \rho_Q \tilde{\sigma}^2} \\
      & =  {2 L_Q}\E\left[ f(\tilde{x}_\kmone) - f(y_{\kmone}) -  \nabla f(y_{\kmone})^\top (\tilde{x}_\kmone-y_\kmone)\right]  + {3 \rho_Q \tilde{\sigma}^2} \\
      & =  {2 L_Q}\E\left[ f(\tilde{x}_\kmone) - f(y_{\kmone}) -  g_k^\top (\tilde{x}_\kmone-y_\kmone)\right]  + {3 \rho_Q \tilde{\sigma}^2},
   \end{split} \label{eq:variance_svrg}
\end{equation}
where the second inequality uses the  upper-bound $\E[\|\bar{\zeta}\|^2] = \frac{\sigma^2}{n} \leq {\rho_Q \sigma^2}$, and the third one uses Theorem 2.1.5 in~\cite{nesterov}.
\end{proof}

\subsection{Proof of Lemma~\ref{lemma:key_acc_svrg}}
\begin{proof}
We can show that Lemma~\ref{lemma:key_acc} still holds and thus, 
   \begin{displaymath}
   \begin{split}
   \E[F(x_k)] & \leq 
   \E\left[l_k(y_\kmone) \right] + \left(\frac{L\eta_k^2}{2} - \eta_k\right)\E\left[\|\tilde{g}_k\|^2\right]  +  \eta_k\sigma_k^2. \\
              & \leq  \E\left[l_k(y_\kmone) + a_k f(\tilde{x}_\kmone) - a_k f(y_\kmone) + a_k g_k^\top (y_\kmone - \tilde{x}_\kmone)\right] \\
              & \qquad\qquad\qquad\qquad + \E\left[\left(\frac{L\eta_k^2}{2} - \eta_k\right)\|\tilde{g}_k\|^2\right] + {3\rho_Q \eta_k\tilde{\sigma}^2}, \\
   \end{split}
   \end{displaymath}
   Note also that
   \begin{displaymath}
   \begin{split}
      l_k(y_\kmone) + f(\tilde{x}_\kmone) - f(y_\kmone) & = \psi(x_k) + \psi'(x_k)^\top(y_\kmone - x_k) + f(\tilde{x}_\kmone) \\ 
                                                        & \leq \psi(\tilde{x}_\kmone) - \psi'(x_k)^\top(\tilde{x}_\kmone - x_k) + \psi'(x_k)^\top(y_\kmone - x_k) + f(\tilde{x}_\kmone)  \\
                                                        & = F(\tilde{x}_\kmone) + \psi'(x_k)^\top(y_\kmone-\tilde{x}_\kmone).
   \end{split}
   \end{displaymath}
   Therefore, by noting that $l_k(y_\kmone) + a_k f(\tilde{x}_\kmone) - a_k f(y_\kmone) \leq (1-a_k)l_k(y_\kmone) + a_k F(\tilde{x}_\kmone) + a_k \psi'(x_k)^\top(y_\kmone-\tilde{x}_\kmone)$,  we obtain the desired result.
\end{proof}
\subsection{Proof of Theorem~\ref{thrm:acc_svrg}}
\begin{proof}
Following similar steps as in the proof of Theorem~\ref{thm:acc_sgd}, we have
    \begin{displaymath}
        d_k^\star \geq  (1-\delta_k)d_\kmone^\star   + \delta_k l_k(y_\kmone) - \frac{\delta_k^2}{2\gamma_k}\|\tilde{g}_k\|^2 
                    + \frac{\delta_k (1-\delta_k)\gamma_\kmone}{\gamma_k} \tilde{g}_k^\top (v_\kmone - y_\kmone).
    \end{displaymath}
   Assume now by induction that $\E[d_\kmone^\star] \geq \E[F(\tilde{x}_\kmone)] - \xi_\kmone$ for some $\xi_\kmone \geq 0$ and
    note that $\delta_k \leq \frac{1-a_k}{n}$ since $a_k = 2L_Q\eta_k \leq \frac{2}{3}$ and $\delta_k = \sqrt{\frac{5\eta_k\gamma_k}{3n}} \leq \frac{1}{3n} \leq \frac{1-a_k}{n}$. Then,
     \begin{displaymath}
        \begin{split}
           \E[d_k^\star] & \geq (1-\delta_k) (\E[F(\tilde{x}_\kmone)] - \xi_\kmone) + \delta_k\E[l_k(y_\kmone)] - \frac{\delta_k^2}{2\gamma_k}\E[\|\tilde{g}_k\|^2] 
           + \E\left[\tilde{g}_k^\top \left(\frac{\delta_k (1-\delta_k)\gamma_\kmone}{\gamma_k} (v_\kmone - y_\kmone)\right)\right] \\
           & \geq \left(1-\frac{1-a_k}{n}\right) \E[F(\tilde{x}_\kmone)] + \left(\frac{1-a_k}{n}-\delta_k\right) \E[F(\tilde{x}_\kmone)]  + \delta_k\E[l_k(y_\kmone)] - \frac{\delta_k^2}{2\gamma_k}\E[\|\tilde{g}_k\|^2] \\
           & \qquad \qquad\qquad\qquad
           + \E\left[\tilde{g}_k^\top \left(\frac{\delta_k (1-\delta_k)\gamma_\kmone}{\gamma_k} (v_\kmone - y_\kmone)\right)\right] - (1-\delta_k)\xi_\kmone.
        \end{split}
     \end{displaymath}
     Note that
     \begin{displaymath}
        \E[F(\tilde{x}_\kmone)] \geq \E[l_k(\tilde{x}_\kmone)] \geq \E[l_k(y_\kmone)] + \E[\tilde{g}_k^\top(\tilde{x}_\kmone - y_\kmone)].
     \end{displaymath}
     Then,
     \begin{multline*}
        \E[d_k^\star] \geq \left(1-\frac{1-a_k}{n}\right)\E[F(\tilde{x}_\kmone)] + \frac{1-a_k}{n}\E[l_k(y_\kmone)] - \frac{\delta_k^2}{2\gamma_k}\E[\|\tilde{g}_k\|^2] \\ + \E\left[\tilde{g}_k^\top \left(\frac{\delta_k (1-\delta_k)\gamma_\kmone}{\gamma_k} (v_\kmone - y_\kmone) + \left(\frac{1-a_k}{n}-\delta_k\right)(\tilde{x}_\kmone - y_\kmone) \right)\right] - (1-\delta_k)\xi_\kmone.
     \end{multline*}
     We may now use Lemma~\ref{lemma:key_acc_svrg}, which gives us
     \begin{multline}
        \E[d_k^\star] \geq \left(1-\frac{1}{n}\right) \E[F(\tilde{x}_\kmone)] + \frac{1}{n}\E[F(x_k)] + \left(\frac{1}{n}\left(\eta_k - \frac{L \eta_k^2}{2}\right)- \frac{\delta_k^2}{2\gamma_k}\right)\E[\|\tilde{g}_k\|^2] \\ + \E\left[\tilde{g}_k^\top \left(\frac{\delta_k (1-\delta_k)\gamma_\kmone}{\gamma_k} (v_\kmone - y_\kmone) + \left(\frac{1}{n}-\delta_k\right)(\tilde{x}_\kmone - y_\kmone)\right) \right] - \xi_k, \label{eq:aux_svrg_acc}
     \end{multline}
     with $\xi_k = (1-\delta_k)\xi_\kmone + \frac{3 \rho_Q \eta_k \tilde{\sigma}^2}{n}$.
     Then, since $\delta_k = \sqrt{\frac{5\eta_k\gamma_k}{3n}}$ and $\eta_k \leq \frac{1}{3L_Q} \leq \frac{1}{3L}$,
     \begin{displaymath}
        \frac{1}{n}\left(\eta_k - \frac{L \eta_k^2}{2}\right)- \frac{\delta_k^2}{2\gamma_k} \geq \frac{5\eta_k}{6n}- \frac{\delta_k^2}{2\gamma_k} = 0,
     \end{displaymath}
     and the term in~(\ref{eq:aux_svrg_acc}) involving $\|\tilde{g}_k\|^2$ may disappear. Similarly, we have
     \begin{displaymath}
         \frac{\delta_k(1-\delta_k)\gamma_{\kmone}}{\delta_k(1-\delta_k)\gamma_{\kmone} + \gamma_k/n - \delta_k \gamma_k}  = \frac{\delta_k\gamma_k - \delta_k^2\mu}{\gamma_k/n - \delta_k^2\mu} = \frac{3n\delta_k^3/5 \eta_k - \delta_k^2\mu}{3 \delta_k^2/5\eta_k - \delta_k^2\mu}= \frac{3n - 5\mu\eta_k}{3 - 5\mu\eta_k}= \theta_k,
     \end{displaymath}
     and the term in~(\ref{eq:aux_svrg_acc}) that is linear in $\tilde{g}_k$ may disappear as well.
     Then, we are left with
     $\E[d_k^\star] \geq \E[F(\tilde{x}_k)] - \xi_k$. Initializing the induction requires choosing $\xi_0=0$ and $d_0^\star = F(x_0)$. Ultimately, we note that $\E[d_k(x^\star)-F^\star] \leq (1-\delta_k)\E[d_\kmone(x^\star)-F^\star]$ for all $k \geq 1$, and
     $$ \E\left[ F(\tilde{x}_k) - F^\star \!+\! \frac{\gamma_k}{2}\|x^\star-v_k\|^2\right] \leq \E[d_k(x^\star)-F^\star] + \xi_k \leq \Gamma_k\left(F(x_0) - F^\star \!+\! \frac{\gamma_0}{2}\|x^\star \!-\! x_0\|^2 \right) + \xi_k,$$
     and we obtain the desired result. 
\end{proof}

\subsection{Proof of Corollary~\ref{corollary:acc_svrg}}
\begin{proof}
The proof is similar to that of Corollary~\ref{corollary:acc_sgd2} for accelerated SGD.
The first stage with constant step-size $\eta$ requires $O\left( \left(n + \sqrt{\frac{nL_Q}{\mu}}\right) \log\left(\frac{F(x_0)- F^\star}{\varepsilon}\right)\right)$ iterations. 
Then, we restart the optimization
procedure, and assume that $\E\left[F(x_0)-F^\star\right] \leq B$ with $B = 3\rho_Q\tilde{\sigma}^2 \sqrt{\eta/\mu n}$.

With the choice of parameters, we have $\gamma_k = \mu$ and $\delta_k = \sqrt{\frac{5\mu \eta_k}{3n}} = \min\left( \sqrt{\frac{5\mu \eta}{3n}}, \frac{2}{k+2} \right)$. We may then apply Theorem~\ref{thrm:acc_svrg} where the value of $\Gamma_k$ is given by Lemma~\ref{lemma:step}. This yields for $k \geq k_0 = \left\lceil \sqrt{\frac{12 n}{5 \mu \eta}} - 2 \right\rceil$,
\begin{equation*}
\begin{split}
   \E[F({x}_k)- F^\star]  & \leq \Gamma_k\left( \E\left[F(x_0)-F^\star + \frac{\mu}{2}\|x_0-x^\star\|^2\right] + \frac{ 3 \rho_Q \tilde{\sigma}^2}{n} \sum_{t=1}^k \frac{\eta_t}{\Gamma_t} \right) \\
   & \leq  \Gamma_k \left( 2 B + \frac{3 \rho_Q \tilde{\sigma}^2 \eta }{n} \sum_{t=1}^{k_0-1} \frac{1}{\Gamma_t} +  \frac{3 \rho_Q \tilde{\sigma}^2}{n}\sum_{t=k_0}^{k} \frac{12 n}{5\Gamma_t \mu(t+2)^2}\right) \\
   & =  \frac{k_0(k_0+1)}{(k+1)(k+2)} \left( \Gamma_{k_0-1}2B + \frac{3 \rho_Q \tilde{\sigma}^2 \eta }{n} \Gamma_{k_0-1}\sum_{t=1}^{k_0-1} \frac{1}{\Gamma_t}\right) +  \frac{36 \rho_Q \tilde{\sigma}^2}{5\mu}\sum_{t=k_0}^{k} \frac{\Gamma_k}{\Gamma_t(t+2)^2} \\
   & =  \frac{k_0(k_0+1)}{(k+1)(k+2)} \left( \Gamma_{k_0-1}2B + (1-\Gamma_{k_0 -1 }) \frac{3 \rho_Q \tilde{\sigma}^2 \eta }{n\delta_{k_0}}\right) +  \frac{36 \rho_Q \tilde{\sigma}^2}{5\mu}\sum_{t=k_0}^{k} \frac{\Gamma_k}{\Gamma_t(t+2)^2} \\
    & \leq  \frac{2 k_0(k_0+1)B}{(k+1)(k+2)}  +   \frac{8 \rho_Q \tilde{\sigma}^2}{\mu(k+1)(k+2)} \left(\sum_{t=k_0+1}^{k} \frac{(t+1)(t+2)}{(t+2)^2}\right) \\
    & \leq  \frac{2 k_0 B}{k+2}  +   \frac{8 \rho_Q \tilde{\sigma}^2}{\mu(k+2)}, \\
   \end{split}
\end{equation*}
where we use Lemmas~\ref{lemma:step} and~\ref{lemma:simple}. Then, note that $k_0 B \leq 6 \rho_Q \tilde{\sigma}^2/\mu$ and we obtain  
the right  iteration complexity.
\end{proof}


\end{document}